\theoremstyle{plain}
\newtheorem{theorem}{Theorem}[section]
\newtheorem{lemma}[theorem]{Lemma}
\newtheorem{corollary}[theorem]{Corollary}
\theoremstyle{definition}
\newtheorem{definition}[theorem]{Definition}
\newtheorem{assumption}[theorem]{Assumption}
\theoremstyle{remark}
\newtheorem{remark}[theorem]{Remark}
\icmltitlerunning{AI-SARAH: Adaptive and Implicit Stochastic Recursive Gradient Methods}
\def\eqref#1{(\ref{#1})}
\def\1{\bm{1}}
\def\ve{{\bm{e}}}
\DeclareMathAlphabet{\mathsfit}{\encodingdefault}{\sfdefault}{m}{sl}
\SetMathAlphabet{\mathsfit}{bold}{\encodingdefault}{\sfdefault}{bx}{n}
\newcommand{\R}{\mathbb{R}}
\newcommand{\reg}{\lambda}
\DeclareMathOperator*{\argmax}{arg\,max}
\newcommand{\cR}{{\cal R}}
\newcommand{\cC}{{\cal C}}
\newcommand*{\defeq}{\stackrel{\text{def}}{=}}
\newcommand\tagthis{\addtocounter{equation}{1}\tag{\theequation}}
\newcommand{\rchi}{\raisebox{2pt}{$\chi$}}
\newcommand*\zheng[1]{\textcolor{blue}{Zheng: #1}}
\newcommand*\zs[1]{\todo[color=green,inline]{Zheng: #1}}
\newcommand*\nicolas[1]{\todo[color=pink]{\textbf{Nicolas:} #1}}
\newcommand*\scon[1]{\mbox{#1-strongly convex}}
\newcommand*\reglize{\mbox{$\ell^2$-regularized}}
\newcommand*\Exp{\ensuremath{\mathbb{E}}}
\newcommand{\Lew}[0]{{\mathcal{W}}}
\def\EE{\mathbb E}
\newcommand{\n}{[n]}
\title{AI-SARAH: Adaptive and Implicit Stochastic Recursive Gradient Methods}
\author{%
  David S.~Hippocampus\thanks{Use footnote for providing further information
    about author (webpage, alternative address)---\emph{not} for acknowledging
    funding agencies.} \\
  Department of Computer Science\\
  Cranberry-Lemon University\\
  Pittsburgh, PA 15213 \\
  \texttt{hippo@cs.cranberry-lemon.edu} \\
}
\begin{document}
\setlength{\abovedisplayskip}{3pt}
\setlength{\belowdisplayskip}{3pt}

\twocolumn[
\icmltitle{AI-SARAH: Adaptive and Implicit Stochastic Recursive Gradient Methods}



\icmlsetsymbol{equal}{*}

\begin{icmlauthorlist}
\icmlauthor{Zheng Shi}{lehigh,ibm}
\icmlauthor{Abdurakhmon Sadiev}{mbzuai,mipt}
\icmlauthor{Nicolas Loizou}{jh}
\icmlauthor{Peter Richt\'arik}{kaust}
\icmlauthor{Martin Tak\'a\v{c}}{mbzuai}
\end{icmlauthorlist}

\icmlaffiliation{lehigh}{Industrial and Systems Engineering, Lehigh University, Bethlehem, USA}
\icmlaffiliation{ibm}{IBM Corporation, Armonk, USA}
 \icmlaffiliation{mbzuai}{Mohamed bin Zayed University of Artificial Intelligence (MBZUAI),  Abu Dhabi,
United Arab Emirates}
 \icmlaffiliation{mipt}{Moscow Institute of Physics and Technology, Dolgoprudny, Russia}
\icmlaffiliation{jh}{Johns Hopkins University, Baltimore, MD, USA}
\icmlaffiliation{kaust}{Computer Science, King Abdullah University of Science and Technology, Thuwal, Saudi Arabia}

\icmlcorrespondingauthor{Zheng Shi}{shi.zheng.tfls@gmail.com}
\icmlcorrespondingauthor{Martin Tak\'a\v{c}}{takac.MT@gmail.com}

\icmlkeywords{Machine Learning, ICML}

\vskip 0.3in
]



\printAffiliationsAndNotice{} 
 
\begin{abstract}
We present \textit{AI-SARAH}, a practical variant of \textit{SARAH}. As a variant of \textit{SARAH}, this algorithm employs the stochastic recursive gradient yet adjusts step-size based on local geometry. \textit{AI-SARAH} implicitly computes step-size and efficiently estimates local Lipschitz smoothness of stochastic functions. It is fully adaptive, tune-free, straightforward to implement, and computationally efficient. We provide technical insight and intuitive illustrations on its design and convergence. We conduct extensive empirical analysis and demonstrate its strong performance compared with its classical counterparts and other state-of-the-art first-order methods in solving convex machine learning problems.
\end{abstract}

\section{Introduction}
\label{intro}
We consider the unconstrained finite-sum optimization problem
\begin{align}
     \min _{w \in \cR^d}  \left[ P(w) \defeq \tfrac{1}{n}\ \sum _{i=1}^n f_i(w) \right].\label{MainProb}
\end{align}
This problem is prevalent in machine learning tasks where $w$ corresponds to the model parameters, $f_i(w)$ represents the loss on the training point $i$, and the goal is to minimize the average loss $P(w)$ across the training points. In machine learning applications, (\ref{MainProb}) is often considered the loss function of Empirical Risk Minimization (ERM) problems. For instance, given a classification or regression problem, $f_i$ can be defined as logistic regression or least square by $(x_i,y_i)$ where $x_i$ is a feature representation and $y_i$ is a label.
Throughout the paper, we assume that each function $f_i$, $i \in \n \defeq \{1,...,n\}$, is smooth and convex, and there exists an optimal solution $w^*$ of (\ref{MainProb}).
\subsection{Main Contributions}
\label{contribution}
We propose \textit{\textit{AI-SARAH}}, a practical variant of stochastic recursive gradient methods \citep{sarah17} to solve (\ref{MainProb}). This practical algorithm explores and adapts to local geometry. It is adaptive at full scale yet requires zero effort of tuning hyper-parameters. The extensive numerical experiments demonstrate that our tune-free and fully adaptive algorithm is capable of delivering a consistently competitive performance on various datasets, when comparing with \textit{\textit{SARAH}}, \textit{\textit{SARAH}+} and other state-of-the-art first-order method, all equipped with fine-tuned hyper-parameters (which are selected from $\approx 5,000$ runs for each problem). This work provides a foundation on studying adaptivity (of stochastic recursive gradient methods) and demonstrates that a \textbf{truly adaptive  stochastic recursive algorithm can be developed in practice.}

\subsection{Related Work}

Stochastic gradient descent (\textit{\textit{SGD}}) \citep{robbins1951stochastic, NemYudin1983book, Pegasos, Nemirovski-Juditsky-Lan-Shapiro-2009, gower2019sgd} is the workhorse for training supervised machine learning problems that have the generic form (\ref{MainProb}).  
\\ 
In its generic form, \textit{\textit{SGD}} defines the new iterate by subtracting a multiple of a stochastic gradient $g(w_t)$ from the
current iterate $w_t$. That is, $$w_{t+1}=w_t-\alpha_t g(w_t).$$ 
In most algorithms, $g(w)$ is an unbiased estimator of the gradient (i.e., a stochastic gradient), $\Exp[g(w)]=\nabla P(w), \forall w \in \cR^d$. However, in several algorithms (including the ones from this paper), $g(w)$ could be a biased estimator, and convergence guarantees can still be well obtained.
\\
\textbf{Adaptive step-size selection.} The main parameter to guarantee the convergence of \textit{SGD} is the \emph{step-size}. In recent years, several ways of selecting the step-size have been proposed. For example, an analysis of \textit{SGD} with constant step-size ($\alpha_t=\alpha$) or decreasing step-size has been proposed in \citet{moulines2011non,ghadimi2013stochastic, needell2014stochastic, pmlr-v80-nguyen18c, bottou2018optimization, gower2019sgd, gower2020sgd} under different assumptions on the properties of problem~ \eqref{MainProb}. 
\\[2pt]
 More recently, \emph{adaptive / parameter-free} methods \citep{duchi2011adaptive, kingma2014adam, bengio2015rmsprop, li2018convergence, Schmidt19, liu2019variance, ward2019adagrad, loizou2020stochastic} that adapt the step-size as the algorithms progress have become popular and are particularly beneficial when training deep neural networks. Normally, in these algorithms, the step-size does not depend on parameters that might be unknown in practical scenarios, like the smoothness
 or the strongly convex parameter. 
  \\[2pt]
\textbf{Random vector $g(w_t)$ and variance reduced methods.}
One of the most remarkable algorithmic breakthroughs in recent years was the development of variance-reduced stochastic gradient algorithms for solving finite-sum optimization problems.  These algorithms, by reducing the variance of the stochastic gradients, are able to guarantee convergence to the exact solution of the optimization problem with faster convergence than classical \textit{SGD}. In the past decade, many efficient variance-reduced methods have been proposed. Some popular examples of variance reduced algorithms are \textit{SAG} \citep{schmidt2017minimizing}, \textit{SAGA} \citep{saga14}, \textit{SVRG} \citep{svrg13} and \textit{SARAH} \citep{sarah17}. For more examples of variance reduced methods, 
see \citet{defazio2016simple, mS2GD, GowerRichBach2018, khaled2020unified, horvth2020adaptivity, cutkosky2020momentumbased}.
\\[2pt]
Among the variance reduced methods, \textit{SARAH} is of our interest in this work. Like the popular \textit{SVRG}, \textit{SARAH} algorithm is composed of two nested loops. In each outer loop $k \geq 1$, the gradient estimate $v_0 = \nabla P(w_{k-1})$ is set to be the full gradient. 
Subsequently, in the inner loop, at $t \geq 1$, a biased estimator $v_t$ is used and defined recursively~as  
\begin{align}
\label{TheVt}
    v_t = \nabla f_i(w_t) - \nabla f_i(w_{t-1}) + v_{t-1},
\end{align}
where $i \in \n$ is a random sample selected at $t$.
\\[2pt]
A common characteristic of the popular variance reduced methods is that the step-size $\alpha$ in their update rule $w_{t+1}=w_t -\alpha v_t$ is  constant (or diminishing with predetermined rules) and that depends on the characteristics of  problem~(\ref{MainProb}). An exception to this rule are the variance reduced methods with Barzilai-Borwein step size, named  \textit{BB-SVRG} and \textit{BB-SARAH} proposed in \citet{tan2016barzilai} and \citet{li2019adaptive} respectively. These methods allow to use Barzilai-Borwein (\textit{BB}) step size rule to update the step-size once in every epoch; for more examples, see \citet{pmlr-v119-li20n, YANG2021}. There are also methods proposing approach of using local Lipschitz smoothness to derive an adaptive step-size \citep{LIU2019} with additional tunable parameters or leveraging \textit{BB} step-size with averaging schemes to automatically determine the inner loop size \citep{pmlr-v119-li20n}. However, these methods do not fully take advantage of the local geometry, and \textbf{a truly adaptive algorithm: adjusting step-size at every (inner) iteration and eliminating need of tuning any hyper-parameters, is yet to be developed in the stochastic variance reduced framework.} This is exactly the main contribution of this work, as we mentioned in previous section.
\section{Motivation}
\label{motivation}
With our primary focus on the design of a stochastic recursive algorithm with adaptive step-size, we discuss our motivation in this chapter.
\\[2pt]
A standard approach of tuning the step-size involves the painstaking grid search on a wide range of candidates. While more sophisticated methods can design a tuning plan, they often struggle for efficiency and/or require a considerable amount of computing resources.
\\[2pt]
More importantly, tuning step-size requires knowledge that is not readily available at a starting point $w_0 \in \cR^d$, and choices of step-size could be heavily influenced by the curvature provided $\nabla^2 P(w_0)$. \textit{What if a step-size has to be small due to a "sharp" curvature initially, which becomes "flat" afterwards?}
\\[2pt]
To see this is indeed the case for many machine learning problems, let us consider logistic regression for a binary classification problem, i.e., $f_i(w) = \log (1+\exp (-y_ix_i^T w)) + \frac{\lambda}{2}\|w\|^2$, where $x_i \in \cR^d$ is a feature vector, $y_i \in \{-1,+1\}$ is a ground truth, and the ERM problem is in the form of (\ref{MainProb}). It is easy to derive the local curvature of $P(w)$, defined by its Hessian in the form
\begin{align}
\nabla^2 P(w)
 &= \tfrac1n \textstyle{\sum}_{i=1}^n
 \underbrace{\tfrac{\exp(-y_i x_i^T w)}{[1+\exp(-y_i x_i^T w)]^2}}_{s_i(w)}
 x_ix_i^T + \lambda I. \label{eq:motivation}
\end{align}
Given that $\frac{a}{(1+a)^2} \leq 0.25$ for any $a\geq 0$, one can immediately obtain the global bound on Hessian, i.e. $\forall w\in \cR^d$ we have 
$
\nabla^2 P(w)
 \preceq  \tfrac14 \frac1n \sum_{i=1}^n
 x_ix_i^T + \lambda I$.
 Consequently, the parameter of global Lipschitz smoothness is $L =\tfrac14 \lambda_{\max}(\frac1n \sum_{i=1}^n
 x_ix_i^T)+\lambda$. It is well known that, with a constant step-size less than (or equal to) $\frac{1}{L}$, a convergence is guaranteed by many algorithms.
\\ 
 However, suppose the algorithm starts at a random $w_0$ (or at $\mathbf{0} \in \cR^d$), this bound can be very tight. With more progress being made on approaching an optimal solution (or reducing the training error), it is likely that, for many training samples, $-y_i x_i^T w_t \ll 0$. An immediate implication is that $s_i(w_t)$ defined in (\ref{eq:motivation}) becomes smaller and hence the local curvature will be smaller as well. It suggests that, although a large initial step-size could lead to divergence, with more progress made by the algorithm, the parameter of local Lipschitz smoothness tends to be smaller and a larger step-size can be used. That being said, such a dynamic step-size cannot be well defined in the beginning, and a \textbf{fully adaptive approach needs to be developed.} 
\\
 For illustration, we present the inspiring results of an experiment on \textit{real-sim} dataset \cite{chang2011libsvm}
 with \reglize{} logistic regression. Figures~\ref{fig:a} and \ref{fig:b} compare the performance of classical \textit{\textit{SARAH}} with \textit{\textit{AI-SARAH}} in terms of the evolution of the optimality gap and the squared norm of recursive gradient. As is clear from the figure, \textit{\textit{AI-SARAH}} displays a significantly faster convergence per effective pass\footnote{The effective pass is defined as a complete pass on the training dataset. Each data sample is selected once per effective pass on average.}.
\\
 Now, let us discuss why this could happen. The distribution of $s_i$ as shown in Figured \ref{fig:d} indicates that: initially, all $s_i$'s are concentrated at $0.25$; the median continues to reduce within a few effective passes on the training samples; eventually, it stabilizes somewhere below $0.05$. Correspondingly, as presented in Figure \ref{fig:c}, \textit{\textit{AI-SARAH}} starts with a conservative step-size dominated by the global Lipschitz smoothness, i.e., $1/\lambda_{max}(\nabla^2 P(w_0))$ (red dots); however, within $5$ effective passes, the moving average (magenta dash) and upper-bound (blue line) of the step-size start surpassing the red dots, and eventually stablize above the conservative step-size. 
\\ 
 For classical \textit{\textit{SARAH}}, we configure the algorithm with different values of the fixed step-size, i.e., $\{2^{-2},2^{-1},...,2^4\}$, and notice that $2^5$ leads to a divergence. On the other hand, \textit{\textit{AI-SARAH}} starts with a small step-size, yet achieves a faster convergence per effective pass with an eventual (moving average) step-size larger than $2^5$.
\begin{figure}
     \centering
     \subfigure[]{\label{fig:a}\includegraphics[width=0.22\textwidth]{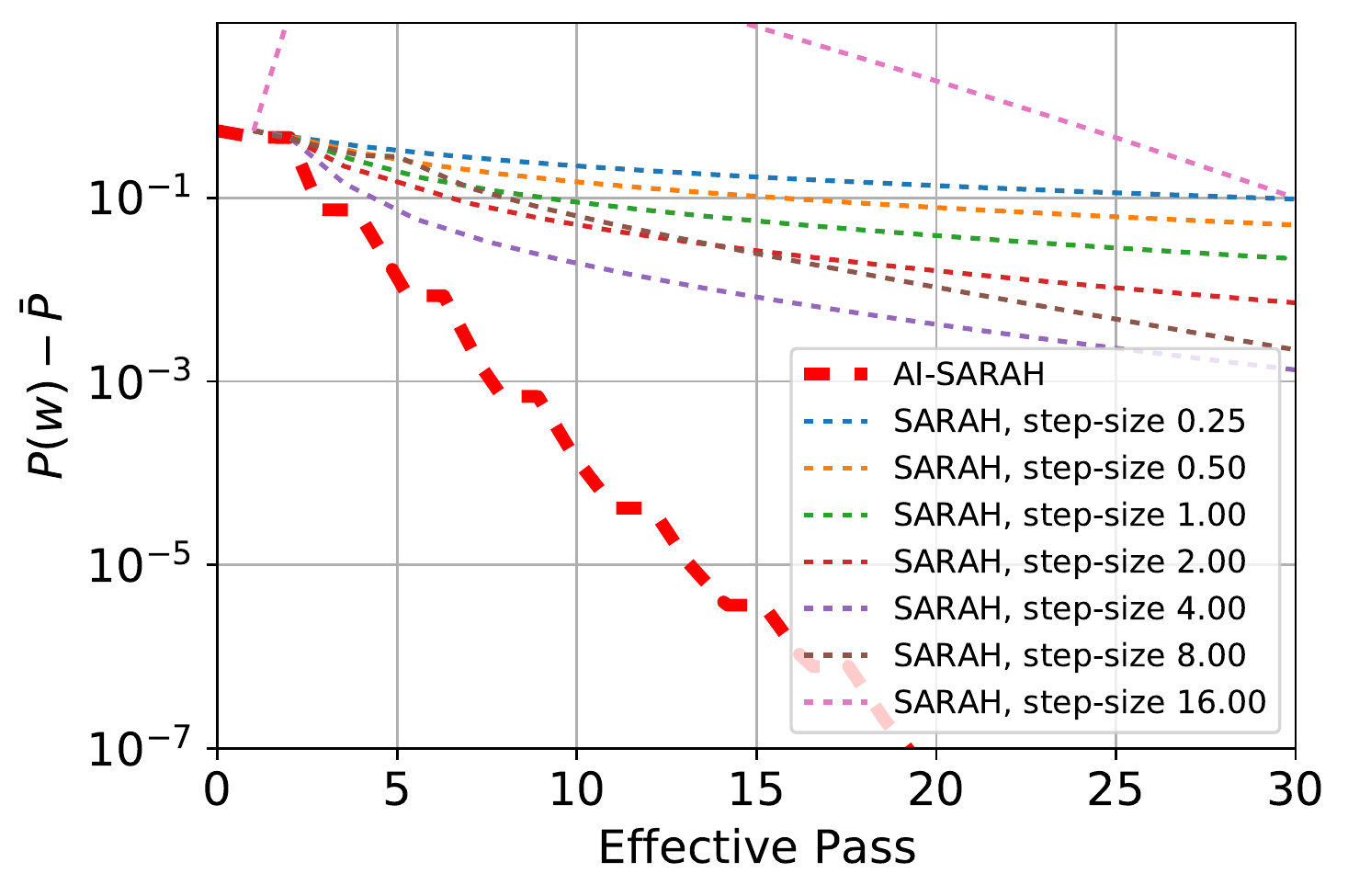}}
     \subfigure[]{\label{fig:b}\includegraphics[width=0.22\textwidth]{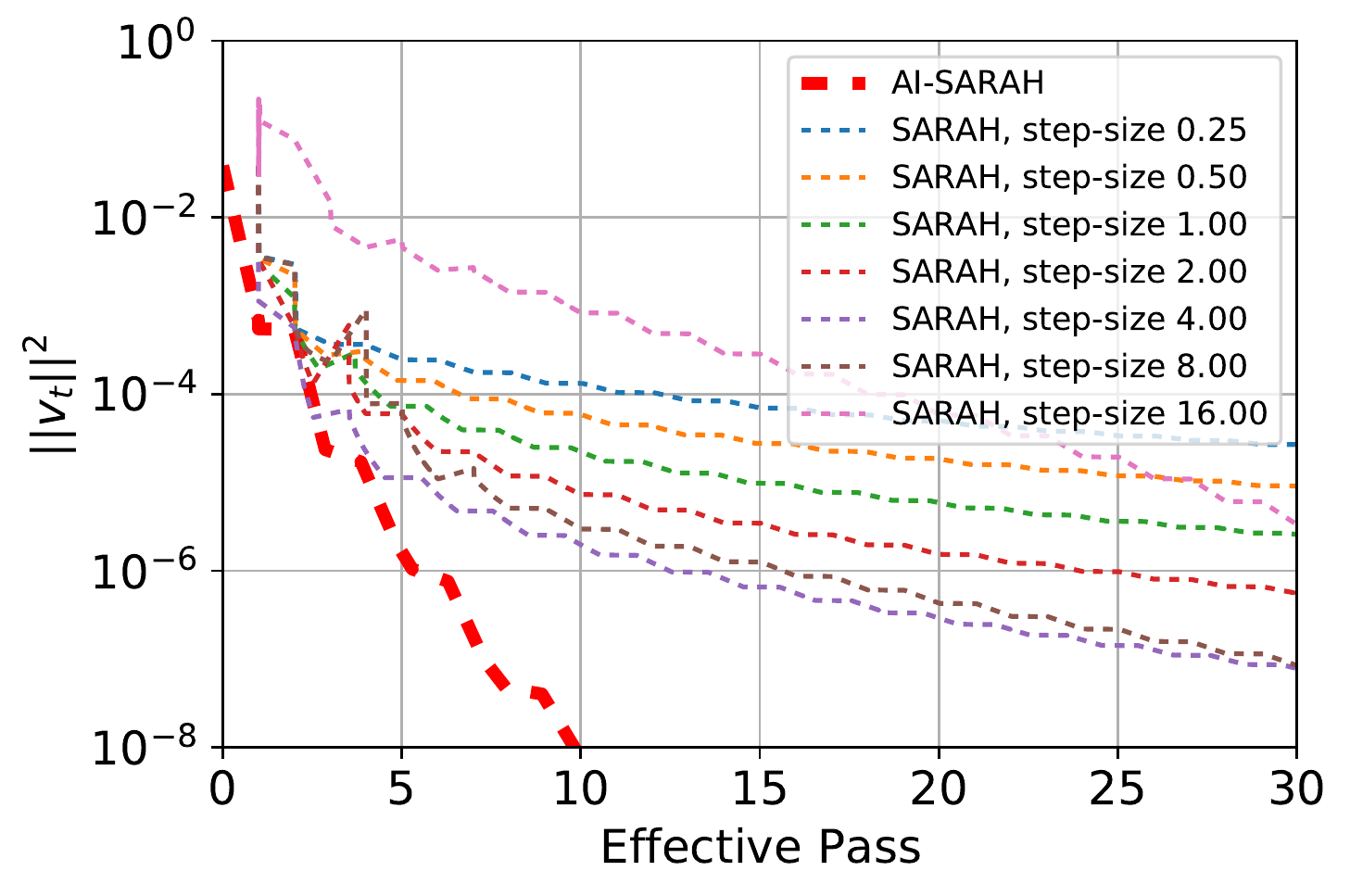}}
     \vskip-10pt
     \subfigure[]{\label{fig:c}\includegraphics[width=0.22\textwidth]{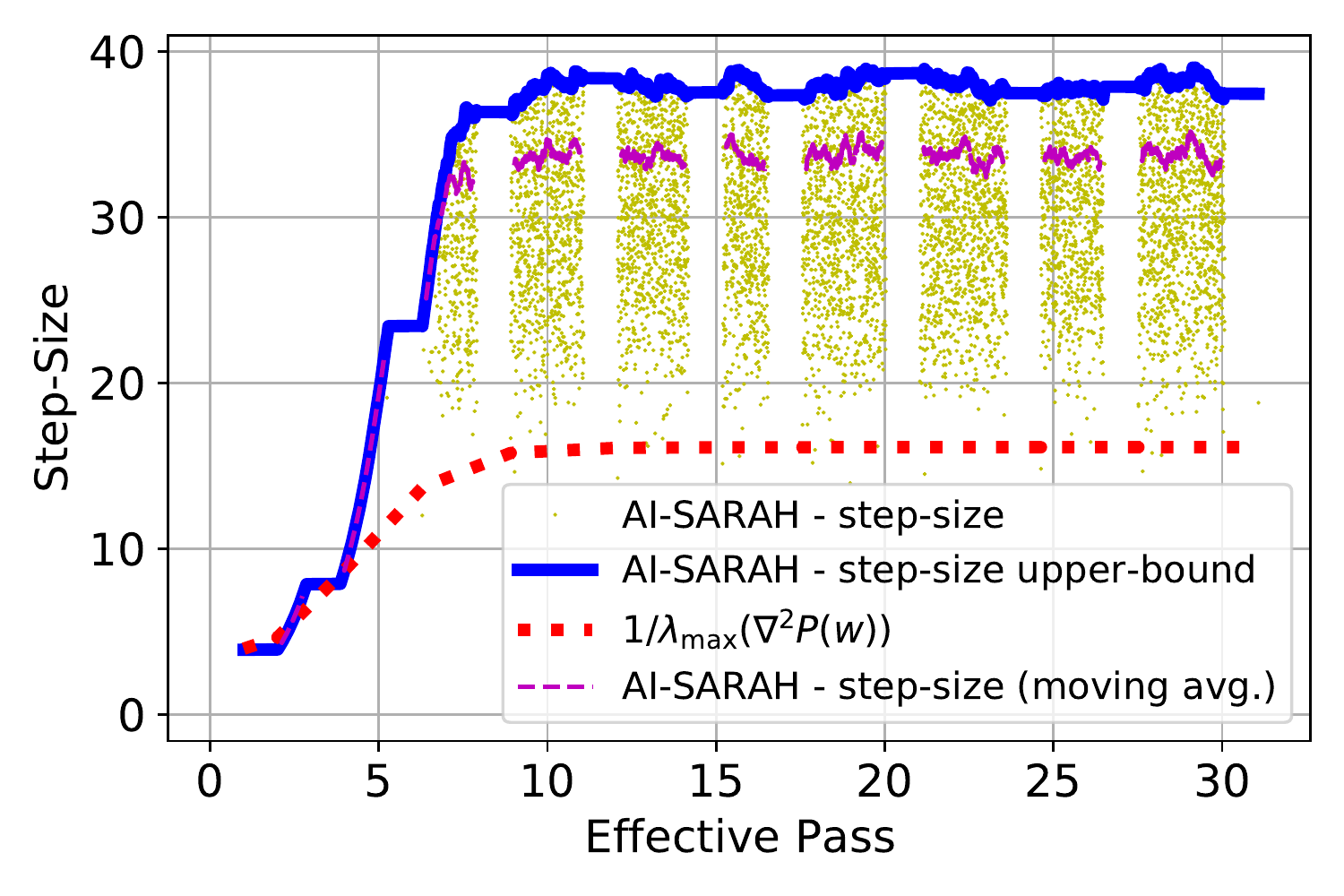}}
     \subfigure[]{\label{fig:d}\includegraphics[width=0.22\textwidth]{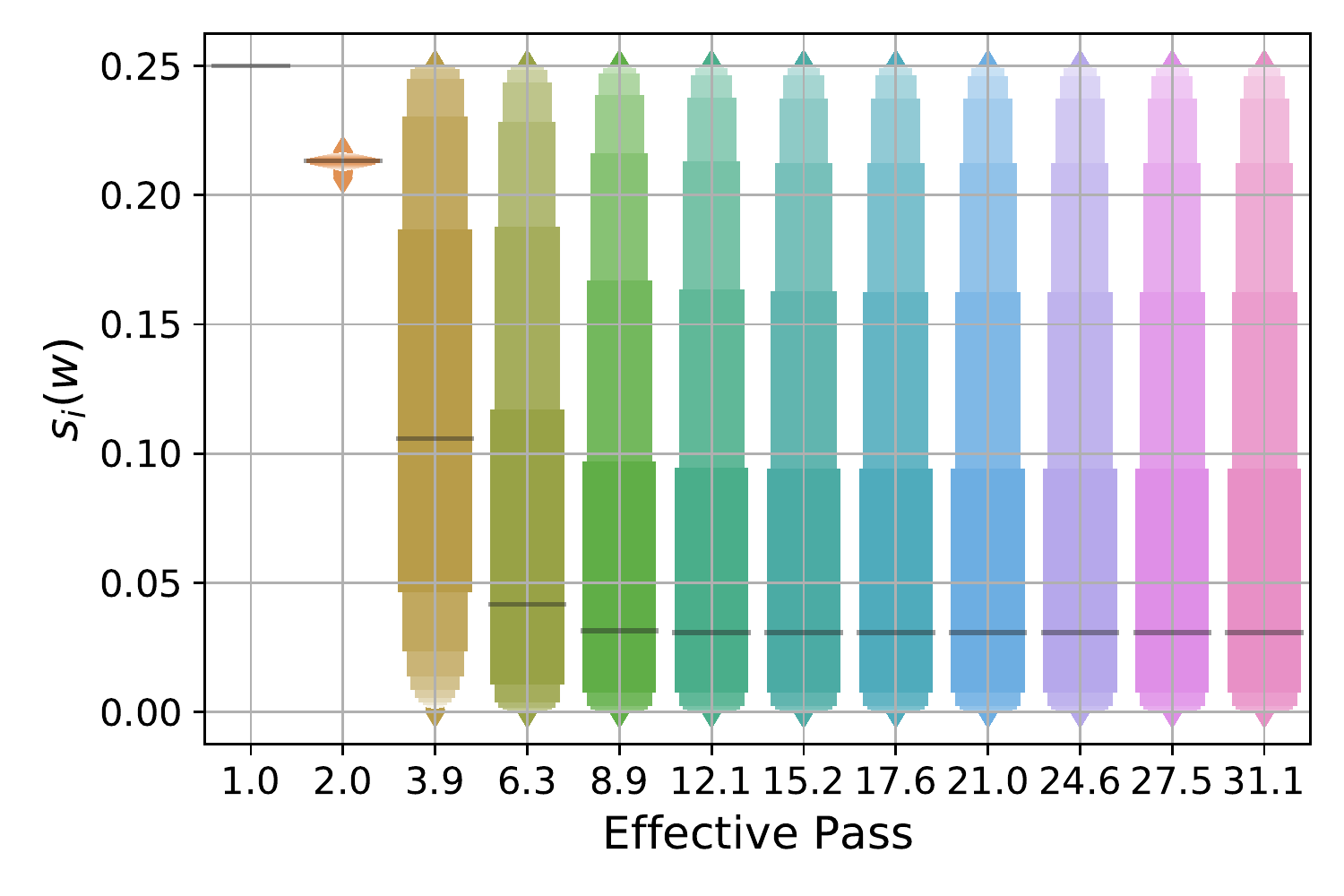}}
     \vskip-10pt
     \caption{\textit{\textit{AI-SARAH}} vs. \textit{\textit{SARAH}}: \textbf{(a)} evolution of the optimality gap $P(w) - \bar{P}$ and \textbf{(b)} the squared norm of stochastic recursive gradient $\|v_t\|^2$; \textit{\textit{AI-SARAH}}: \textbf{(c)} evolution of the step-size, upper-bound, local Lipschitz smoothness and \textbf{(d)} distribution of $s_i$ of stochastic functions.  \textit{Note: in (a), $\bar P$ is a lower bound of $P(w^*)$; in (c), the white spaces suggest full gradient computations at outer iterations; in (d), bars represent medians of $s_i$'s.}}
     \vskip-20pt
\end{figure}

\section{Theoretical Analysis}
\label{sec:theoretical-analysis}
In this section, we present the theoretical investigation on leveraging local Lipschitz smoothness to dynamically determine the step-size. We are trying to answer the main question: can we show convergence of using such an adaptive step-size and what are the benefits. 
\\
We present the theoretical framework in Algorithm \ref{algo:aisarah-theory} and refer to it as \textit{Theoretical-AI-SARAH}. For brevity, we show the main results in the section and defer the full technical details to Appendix \ref{app:line-segment}.
\begin{algorithm}[ht!]
\caption{\textit{Theoretical-AI-SARAH}}
\label{new_algorithm_1}
\label{alg_AI_Sarah_Theory}

	\begin{algorithmic}[1]
	\STATE \textbf{Parameter:} Inner loop size $m$
		\STATE \textbf{Initialize:} $\tilde{w}_0$
		\FOR{$k = 1, 2, ...$}
		\STATE $w_0 = \tilde{w}_{k-1}$
		\STATE $v_0 = \nabla P(w_0)$ 
		\FOR{$i \in \n$}
		\STATE Compute $L^0_i$ in the neighborhood of $w_0$
		\ENDFOR
        \STATE Compute $L^0$ with $\{L^0_i\}^n_{i=1}$ and set $\eta_0 = 1/L^0$
        \FOR{$t = 1, ... , m$}
		\STATE $w_{t}=w_{t-1} - \eta_{t-1} v_{t-1}$
		\STATE Sample $i_t$ randomly from $\n$
        \STATE  $v_t = v_{t-1} + \nabla f_{i_t}(w_t) - \nabla f_{i_t}(w_{t-1})$
        \FOR{$i \in \n$}
        \STATE Compute $L^t_i$ in the neighborhood of $w_t$
        \ENDFOR
        \STATE Compute $L^t$ with $\{L^t_i\}^n_{i=1}$
        \STATE $\eta_t = 
        \frac1{L^t}$
		\ENDFOR
        \STATE \textbf{Set} $\tilde w_k = w_t$ where $t$ is chosen with probability $q_t$ from $\{0,1,...,m\}$
		\ENDFOR
	\end{algorithmic}
	\label{algo:aisarah-theory}
\end{algorithm}
\\  
For $t \geq 0, i_t \in \n$, We assume $f_{i_t}$ is $L^t_i$-smooth on the line segment $\Delta^t_i = \{w \in \cR^d \;|\; w=w_t-\eta v_t,\;\eta \in [0,\frac{1}{L^t_i}]\}$. 
Then, for Lines $7$ and $15$, we have
\begin{align*}
    L^t_i = \max \|\nabla^2 f_i(w_t - \eta v_t)\|,\quad \text{where }\eta \in [0,\tfrac{1}{L^t_i}]. \tagthis \label{eq:line-segment}
\end{align*}
The problem \eqref{eq:line-segment} essentially computes the largest eigenvalue of Hessian matrices on the defined line segment. Note that, \eqref{eq:line-segment} computes $L^t_i$ implicitly as it appears on both sides of the equation. 
\\
For Line $12$, we propose to use either uniform or importance sampling strategy. That is, at $t\geq 1$, we sample $i_t$ randomly from $\n$ with probability $p^{t-1}_i$, where $p^t_i = 1/n$ or $p^t_i = L^t_i/\textstyle{\sum}_i^n L^t_i$. Then, for Line $13$ with specific sampling strategies, we have instead the form
\begin{align*}
    v_t = v_{t-1} + \tfrac{1}{np^{t-1}_i} \left(\nabla f_{i_t}(w_t) - \nabla f_{i_t}(w_{t-1})\right).
\end{align*}
\\
Then, for Lines $9$ and $17$, we either compute the maximum for the uniform sampling, i.e., $L^t = \max_{i\in \n} L^t_i$, or the average for the importance sampling, i.e., $L^t = \tfrac{1}{n}\sum^n_i L^t_i$, for $t \geq 0$. 
\\
Note that, for Line $20$, we define $q_t = \eta_t/\sum^m_{j=0} \eta_j$.
\\
Having presented Algorithm \ref{algo:aisarah-theory}, we can now show our main technical result in the following theorem.
\begin{theorem}\label{theorem:line-segment} Suppose $P$ is $\mu$-strongly convex, and each $f_i$ is convex and $L^t_i$-smooth on the line segment $\Delta^t_i = \{w \in \cR^d \;|\; w=w_t-\eta v_t,\;\eta \in [0,\frac{1}{L^t_i}]\}$.
For $k \geq 1$, let us define 
\begin{align*}
    \sigma^k_m=\left(\frac{1}{\mu \mathcal{H}} +\frac{\eta_{0} L^{0}}{2-\eta_{0} L^{0}}\right),
    \tagthis \label{eq:sigma}
\end{align*}
where $\mathcal{H} = \sum^m_{t=0} \eta_t$, and select $m$ and $\eta$ such that $\sigma_m<1$, Algorithm \ref{algo:aisarah-theory} converges as follows
\begin{equation*}
    \EE[\| \nabla P(\tilde{w}_{k})\|^2 \leq \left(\prod^{k}_{l=1}\sigma^l_m\right) \|\nabla P(\tilde{w}_{0})\|^2.
\end{equation*}
\end{theorem}
\begin{corollary}\label{cor:faster-than-sarah}
The convergence result of classical SARAH~\cite{sarah17} 
algorithm for strongly convex case has a similar form of $\sigma$ 
as \eqref{eq:sigma}
but
with $\mathcal H_{SARAH} = \alpha (m+1)$, where $\alpha \leq \frac1L$ is a constant step-size. In this case, $L$ is the global smoothness parameter of each $f_i, i \in [n]$. Now, as \eqref{eq:line-segment}
defines $L_i^t$
to be the parameter of smoothness only on a line segment, we trivially have
that
\begin{align*}
    L^t_i
    &\overset{\eqref{eq:line-segment}}{=}
      \max_{\eta \in [0,\tfrac{1}{L^t_i}]} \|\nabla^2 f_i(w_t - \eta v_t)\|
\leq 
\max_{w \in \cR^d} \|\nabla^2 f_i(w)\| \leq L.
\end{align*}
Thus,
$\mathcal{H} = \sum_{t=0}^m \eta_t
 = \sum_{t=0}^m \tfrac1{L^t}
 \geq \sum_{t=0}^m \tfrac1{L} = \frac{m+1}L \geq \alpha(m+1) = \mathcal H_{SARAH}$. Then, it is clear that, Algorithm~\ref{alg_AI_Sarah_Theory}
can achieve a faster convergence than classical \textit{SARAH}.
\end{corollary}
By Theorem \ref{theorem:line-segment} and Corollary \ref{cor:faster-than-sarah}, we show that, in theory, by leveraging local Lipschitz smoothness, Algorithm \ref{algo:aisarah-theory} is guaranteed to converge and can even achieve a faster convergence than classical \textit{SARAH} if local geometry permits.
\\[2pt]
With that being said, we note that Algorithm \ref{algo:aisarah-theory} requires the computations of the largest eigenvalues of Hessian matrices on the line segment for each $f_i$ at every outer and inner iterations. In general, such computations would be too expensive,  and thus would keep one from solving Problem \eqref{MainProb} efficiently in practice.
\\
In the next section, we will present our main contribution of the paper, the practical algorithm, \textit{AI-SARAH}. \textbf{It does not only eliminate the expensive computations in Algorithm \ref{algo:aisarah-theory}, but also eliminate efforts of tuning hyper-parameters.} 

\section{AI-SARAH}\label{sec:fully_adaptive}
We present the practical algorithm, \textit{\textit{AI-SARAH}}, in Algorithm~\ref{algo:aisarah}. 
\\
At every iteration, instead of incurring expensive costs on computing the parameters of local Lipshitz smoothness for all $f_i$ in Algorithm \ref{algo:aisarah-theory}, Algorithm \ref{algo:aisarah} estimates the local smoothness by approximately solving the sub-problem for only one $f_i$, i.e., $\min_{\alpha>0} \xi_t(\alpha)$, with a minimal extra cost in addition to computing stochastic gradient, i.e., $\nabla f_i$. Also, by approximately solving the sub-problem, Algorithm \ref{algo:aisarah} implicitly computes the step-size, i.e., $\alpha_{t-1}$ at $t\geq 1$. 
\\
In Algorithm \ref{algo:aisarah}, we adopts an adaptive upper-bound with exponential smoothing. To be specific, the upper-bound is updated with exponential smoothing on harmonic mean of the approximate solutions to the sub-problems, which also keeps track of the estimates of local Lipschitz smoothness. 
\\
In the following sections, we will present the details on the design of \textit{AI-SARAH}. 
\\
We note that \textbf{this algorithm is fully adaptive and requires no efforts of tuning, and can be implemented easily}. Notice that $\beta$ is treated as a smoothing factor in updating the upper-bound of the step-size, and the default setting is $\beta=0.999$. There exists one hyper-parameter in Algorithm \ref{algo:aisarah}, $\gamma$, which defines the early stopping criterion on Line 8, and the default setting is $\gamma = \frac{1}{32}$. We will show later in this chapter that, the performance of this algorithm is not sensitive to the choices of $\gamma$, and this is true regardless of the problems (i.e., regularized/non-regularized logistic regression and different datasets.)

\begin{algorithm}
\caption{\textit{AI-SARAH}}
\begin{algorithmic}[1]
\STATE {\bfseries Parameter:} $0 < \gamma < 1$ (default $\frac{1}{32}$), $\beta = 0.999$
\STATE {\bfseries Initialize:} $\tilde{w}_0$
\STATE {\bfseries Set:} $\alpha_{max} = \infty$
\FOR{k = 1, 2, ...}
\STATE $w_0 = \tilde{w}_{k-1}$
\STATE $v_0 = \nabla P(w_0)$
\STATE $t=1$
\WHILE{$\|v_t\|^2 \geq \gamma\|v_0\|^2$} 
\STATE Select random mini-batch $S_t$ from $\n$ uniformly with $|S_t| = b$
\STATE $\tilde \alpha_{t-1} \approx \arg\min_{\alpha>0} \xi_t(\alpha)$
\label{alg:step:l10}
\IF {$k=1$ and $t=1$}
\STATE $\delta^k_t = \frac{1}{\tilde \alpha_{t-1}}$
\ELSE
\STATE $\delta^k_t = \beta \delta^k_{t-1} + (1-\beta)\frac{1}{\tilde \alpha_{t-1}}$
\ENDIF
\STATE $\alpha_{max} = \frac{1}{\delta^k_t}$
\label{alg:step:l16}
\STATE $\alpha_{t-1} = \min \{\tilde \alpha_{t-1},\alpha_{max}\}$
\STATE $w_t = w_{t-1} - \alpha_{t-1} v_{t-1}$
\STATE $v_t = \nabla f_{S_t}(w_t) - \nabla f_{S_t}(w_{t-1}) + v_{t-1}$
\STATE $t=t+1$
\ENDWHILE
\STATE Set $\delta^{k+1}_0 = \delta^k_t$
\STATE Set $\tilde{w}_k=w_t$
\ENDFOR
\end{algorithmic}
\label{algo:aisarah}
\end{algorithm} 

\subsection{Estimate Local Lipschitz Smoothness}
\label{estimate_l}
In the previous chapter, we showed that Algorithm \ref{algo:aisarah-theory} computes the parameters of local Lipschitz smoothness, and it can be very expensive and thus prohibited in practice. To avoid the expensive cost, one can estimate the local Lipschitz smoothness instead of computing an exact parameter. Then, the question is how to estimate the parameter of local Lipschitz smoothness in practice. 
\\ 
{\bfseries Can we use line-search?}
The standard approach to estimate local Lipschitz smoothness is to use backtracking line-search. Recall \textit{\textit{SARAH}}'s update rule, i.e., $w_t = w_{t-1}-\alpha_{t-1} v_{t-1}$, where $v_{t-1}$ is a stochastic recursive gradient. The standard procedure is to apply line-search on function $f_{i_t}(w_{t-1}-\alpha v_{t-1})$. However, the main issue is that $-v_{t-1}$ is not necessarily a descent direction.
\\[2pt]
{\bfseries \textit{\textit{AI-SARAH}} sub-problem.}
Define the sub-problem\footnote{For the sake of simplicity, we use $f_{i_t}$ instead of $f_{S_t}$.} (as shown on line 10 of Algorithm \ref{algo:aisarah}) as
\begin{align*}
    \min_{\alpha > 0} \xi_t(\alpha)
    =& \min_{\alpha>0} \|\nabla f_{i_t} (w_{t-1} - \alpha v_{t-1}) - \nabla f_{i_t} (w_{t-1}) \\
    &\qquad\quad  + v_{t-1}\|^2, \tagthis{} \label{eq:newsubproblem}
\end{align*}
where $t \geq 1$ denotes an inner iteration and $i_t$ indexes a random sample selected at $t$.
We argue that, by (approximately) solving (\ref{eq:newsubproblem}), we can have a good estimate of the parameters of the local Lipschitz smoothness.
\\[2pt]
To illustrate this setting, we denote $L^t_i$ the parameter of local Lipschitz smoothness prescribed by $f_{i_t}$ at $w_{t-1}$. Let us focus on a simple quadratic function $f_{i_t}(w) = \tfrac12 (x_{i_t}^T w-y_{i_t})^2$.
Let $\tilde \alpha$ be the optimal step-size along direction  $-v_{t-1}$, i.e.   $\tilde \alpha
 = \arg\min_\alpha  f_{i_t}(w_{t-1} - \alpha v_{t-1})$. Then, the closed form solution of $\tilde \alpha$ can be easily derived as
$
\tilde \alpha
 = \frac{x_{i_t}^T w_{t-1}   -y_{i_t}}{x_{i_t}^T v_{t-1}}
$, whose value can be positive, negative, bounded or unbounded.

On the other hand, one can compute the step-size implicitly by solving (\ref{eq:newsubproblem}) and obtain $\alpha_{t-1}^i$, i.e., 
$\alpha_{t-1}^i = \arg\min_\alpha \xi_t(\alpha)$. Then, we have 
\begin{align*}
     \alpha_{t-1}^i = \frac{1}{x_{i_t}^Tx_{i_t}},
\end{align*}
which is exactly $\frac{1}{L^t_i}$.
\\[2pt]
\textbf{Simply put, as quadratic function has a constant Hessian, solving (\ref{eq:newsubproblem}) gives exactly $\frac{1}{L^t_i}$. For general (strongly) convex functions, if $\nabla^2 f_{i_t}(w_{t-1})$, does not change too much locally, we can still have a good estimate of $L^i_t$ by solving (\ref{eq:newsubproblem}) approximately}.
\\[2pt]
Based on a good estimate of $L^t_i$, we can then obtain the estimate of the local Lipschitz smoothness of $P(w_{t-1})$. And, that is
\begin{align*}
    L^t = \frac{1}{n} \sum_{i=1}^{n} L^t_i = \frac{1}{n} \sum _{i=1}^n \frac{1}{\alpha_{t-1}^i}.
\end{align*}
Clearly, if a step-size in the algorithm is selected as $1/L^t$, then a harmonic mean of the sequence of the step-size's, computed for various component functions could serve as a good adaptive upper-bound on the step-size computed in the algorithm. More details of intuition for the adaptive upper-bound can be found in Appendix \ref{sec:implementation_app}.

\subsection{Compute Step-size and Upper-bound}
On Line 10 of Algorithm~\ref{algo:aisarah}, the sub-problem is a one-dimensional minimization problem, which can be approximately solved by Newton method. Specifically in Algorithm~\ref{algo:aisarah}, we compute \textit{one-step Newton} at $\alpha=0$, and that is
\begin{align*}
    \tilde \alpha_{t-1} = -\tfrac{\xi'_t(0)}{|\xi''_t(0)|}. \tagthis{} \label{eq:one-step-newton} 
\end{align*}
Note that, for convex function in general, (\ref{eq:one-step-newton}) gives an approximate solution; for functions in particular forms such as quadratic ones, (\ref{eq:one-step-newton}) gives an exact solution.
\\[2pt]
The procedure prescribed in (\ref{eq:one-step-newton}) can be implemented very efficiently, and \textbf{it does not require any extra (stochastic) gradient computations if compared with classical \textit{\textit{SARAH}}.} The only extra cost per iteration is to perform two backward passes, i.e., one pass for $\xi_t'(0)$ and the other for $\xi_t''(0)$; see Appendix \ref{sec:implementation_app} for implementation details.   
\\[2pt]
As shown on Lines 11-16, 22 of Algorithm \ref{algo:aisarah}, $\alpha_{max}$ is updated at every inner iteration. Specifically, \textbf{the algorithm starts without an upper bound (i.e., $\alpha_{max} = \infty$} on Line 3); as $\tilde \alpha_{t-1}$ being computed at every $t\geq 1$, we employs the exponential smoothing on the harmonic mean of $\{\tilde \alpha_{t-1}\}$ to update the upper-bound. For $k \geq 1$ and $t \geq 1$, we define $\alpha_{max} = \frac{1}{\delta^k_t}$, where
\begin{align*}
    \delta^k_t = \begin{cases}
    \frac{1}{\tilde \alpha_{t-1}}, & k=1,t=1\\
    \beta \delta^k_{t-1} + (1-\beta) \frac{1}{\tilde \alpha_{t-1}}, & otherwise
    \end{cases}
\end{align*}
and $0 < \beta < 1$. We default $\beta = 0.999$ in Algorithm \ref{algo:aisarah}. At the end of the $k$th outer loop, denoted $t=T$, we let $\delta^{k+1}_0 = \delta^k_T$; see Appendix \ref{sec:implementation_app} for details on the design of the adaptive upper-bound.

\subsection{Choice of $\gamma$}
 We perform a sensitivity analysis on different choices of $\gamma$. Figures \ref{fig:gamma_sense} shows the evolution of the squared norm of full gradient, i.e., $\|\nabla P(w)\|^2$, for logistic regression on binary classification problems; see extended results in Appendix \ref{app:exp}. It is clear that the performance of $\gamma$'s, where, $\gamma \in \{1/8,1/16,1/32,1/64\}$, is consistent with only marginal improvement by using a smaller value. We default $\gamma = 1/32$ in Algorithm \ref{algo:aisarah}.

 \begin{figure*} 
   \centering
    \includegraphics[width=\textwidth]{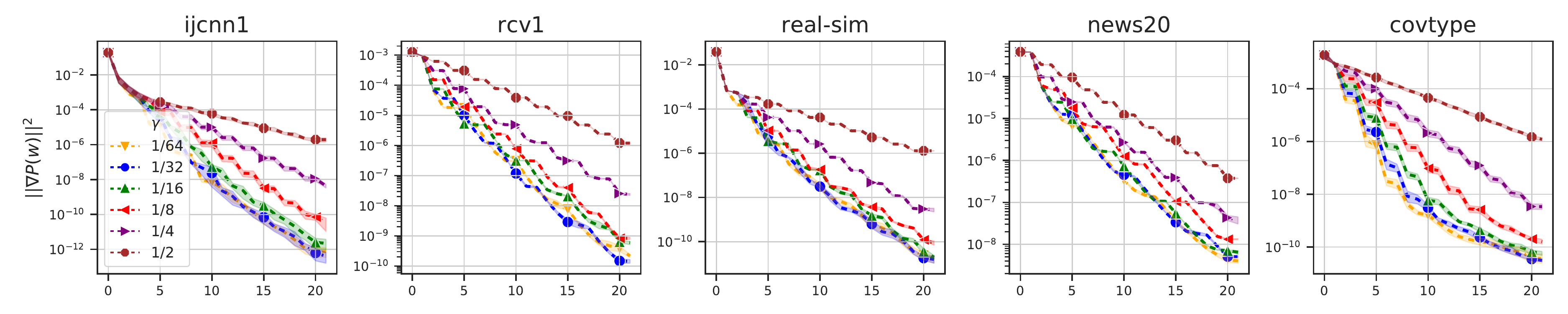}
    \includegraphics[width=\textwidth]{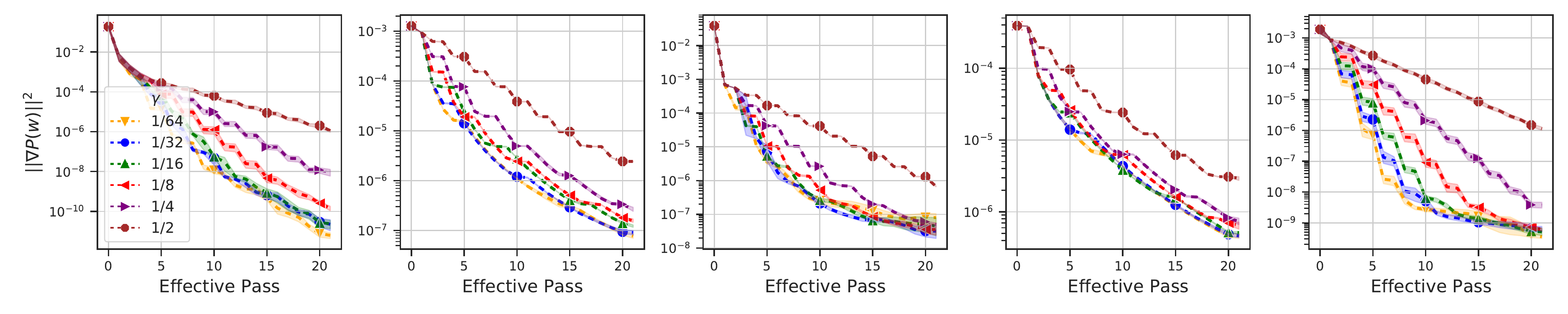}
    \vskip-10pt
    \caption{Evolution of $\|\nabla P(w)\|^2$ for $\gamma \in \{\frac{1}{64},\frac{1}{32},\frac{1}{16},\frac{1}{8},\frac{1}{4},\frac{1}{2}\}$: regularized (top row) and non-regularized (bottom row) logistic regression on \textit{ijcnn1, rcv1, real-sim, news20} and \textit{covtype}.}
    \label{fig:gamma_sense}
     \centering
     \includegraphics[width=\textwidth]{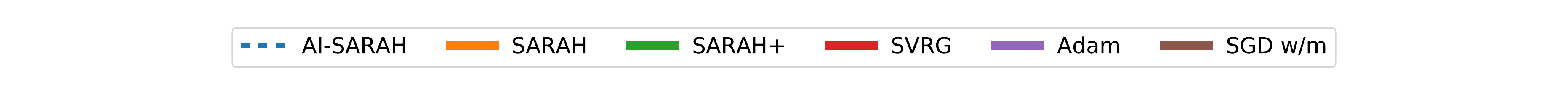}
     \vskip-8pt
     \includegraphics[width=\textwidth]{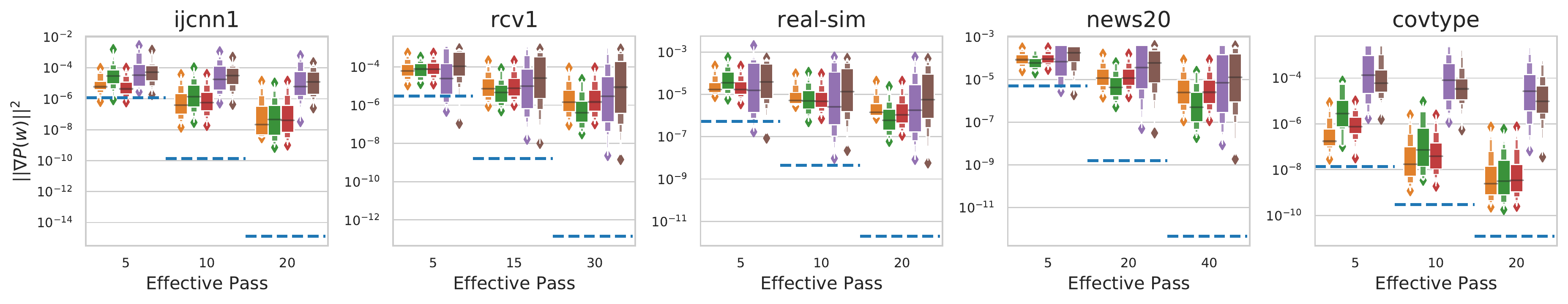}
    \includegraphics[width=\textwidth]{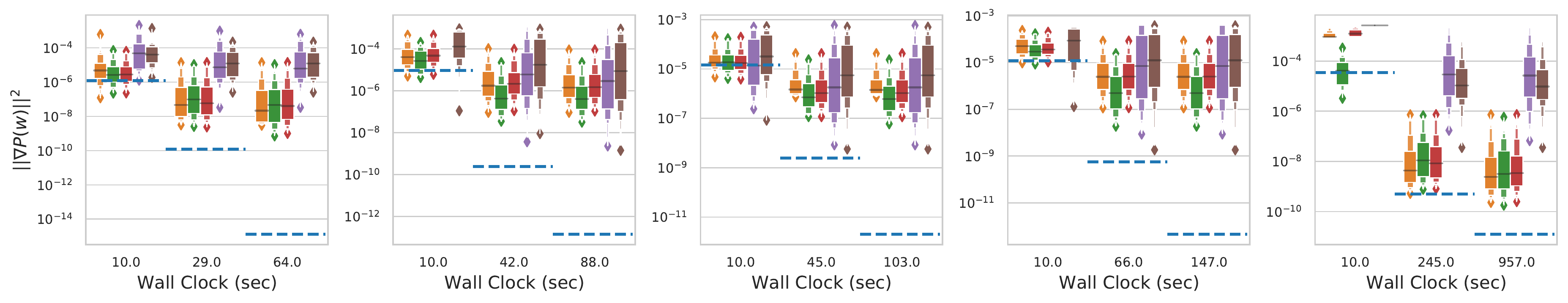}
    \vskip-10pt
    \caption{Running minimum per effective pass (top row) and wall clock time (bottom row) of $\|\nabla P(w)\|^2$ between other algorithms with all hyper-parameters configurations and \textit{\textit{AI-SARAH}} for the \textbf{regularized} case. }
    \label{fig:minimum_norm}
     \centering
     \includegraphics[width=\textwidth]{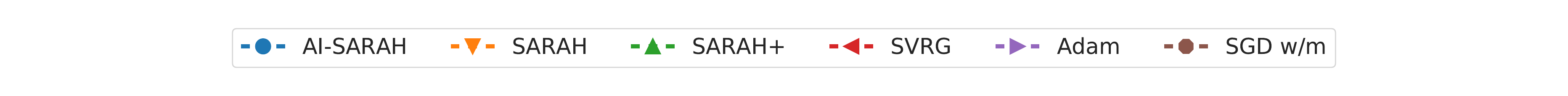}
     \vskip-8pt
     \includegraphics[width=\textwidth]{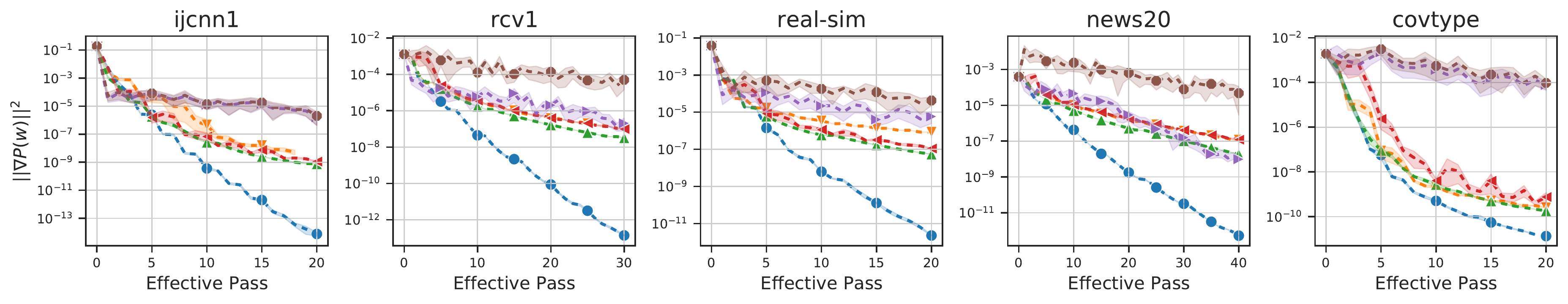}
    \includegraphics[width=\textwidth]{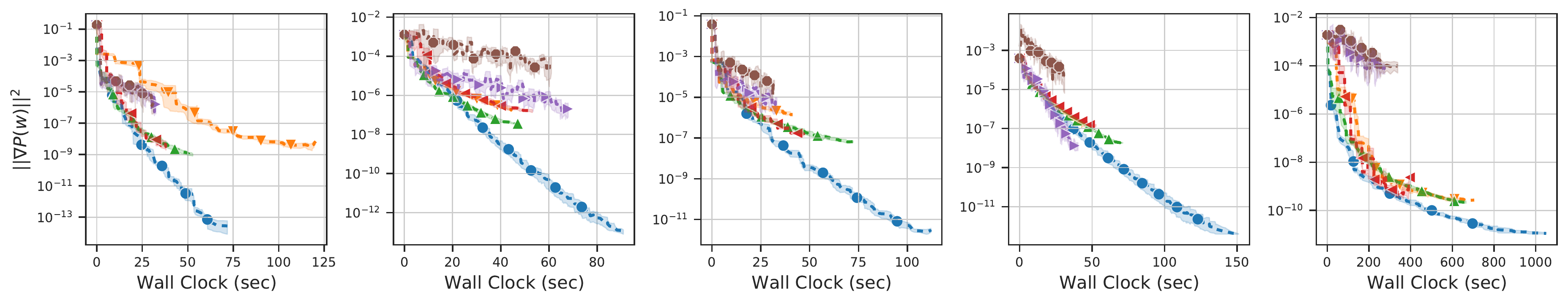}
    \vskip-10pt
    \caption{Evolution of $\|\nabla P(w)\|^2$ for the \textbf{regularized} case by effective pass (top row) and wall clock time (bottom row).}
    \label{fig:squared_norm}
\end{figure*}

\section{Numerical Experiment}
\label{experiments}
\begin{table}
\centering 
\caption{Summary of Datasets from \cite{chang2011libsvm}.}
 
\label{tab:data}
\begin{threeparttable}
\resizebox{\columnwidth}{!}
{%
\begin{tabular}{l r r r r}
\toprule
 Dataset    &  \# features & $n$ (\# Train) & \# Test & \% Sparsity \\
 \hline
 \textit{ijcnn1}\tnote{1}   & 22 & 49,990 & 91,701 & 40.91 \\
 \hline
 \textit{rcv1}\tnote{1}   & 47,236 & 20,242 & 677,399 & 99.85 \\ 
 \hline
  \textit{real-sim}\tnote{2}  & 20,958 & 54,231 & 18,078 & 99.76 \\ 
 \hline
 \textit{news20}\tnote{2}  & 1,355,191 & 14,997 & 4,999 & 99.97 \\ 
 \hline
 \textit{covtype}\tnote{2}  & 54 & 435,759 & 145,253 & 77.88 \\ 
 \hline
 \bottomrule
\end{tabular}
}
\begin{tablenotes}\footnotesize
\item[1] dataset has default training/testing sanples.
\item[2] dataset is randomly split by 75\%-training \& 25\%-testing.
\end{tablenotes}
\end{threeparttable}
\vskip-15pt
\end{table} 

In this chapter, we present the empirical study on the performance of \textit{\textit{AI-SARAH}} (see Algorithm \ref{algo:aisarah}). For brevity, we present a subset of experiments in the main paper, and defer the full experimental results and implementation details\footnote{Code will be made available upon publication.} in Appendix \ref{app:exp}. 

The problems we consider in the experiment are \reglize{} logistic regression for binary classification problems; see Appendix \ref{app:exp} for non-regularized case. 
Given a training sample $(x_i,y_i)$ indexed by $i \in \n$, the component function $f_i$ is in the form
$
    f_i(w) = \log(1+\exp(-y_ix_i^T w)) + \frac{\lambda}{2}\|w\|^2, 
$
where $\lambda = \frac{1}{n}$ for the \reglize{} case and $\lambda = 0$ for the non-regularized case.
 
\begin{figure*}
     \vskip-10pt
    \centering
    \includegraphics[width=\textwidth]{newplot/legend_3.pdf}
    \vskip-8pt
    \includegraphics[width=0.98\textwidth]{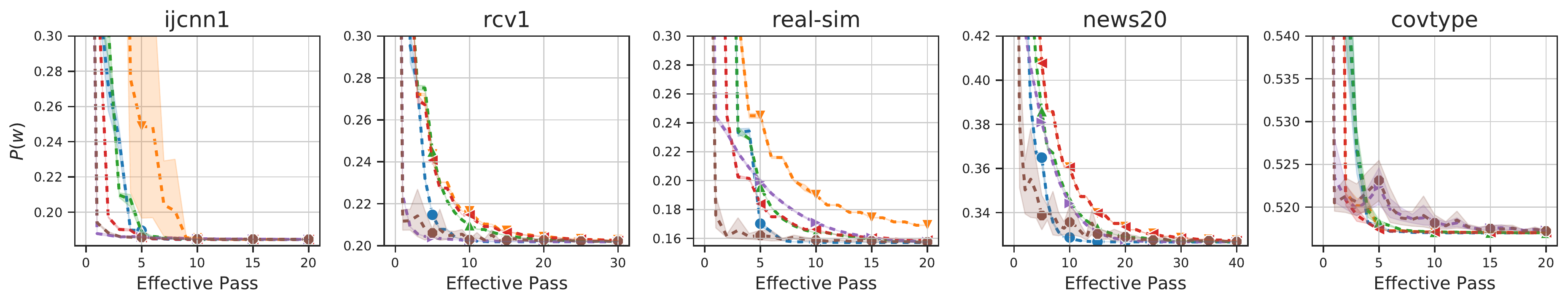}
    \includegraphics[width=0.98\textwidth]{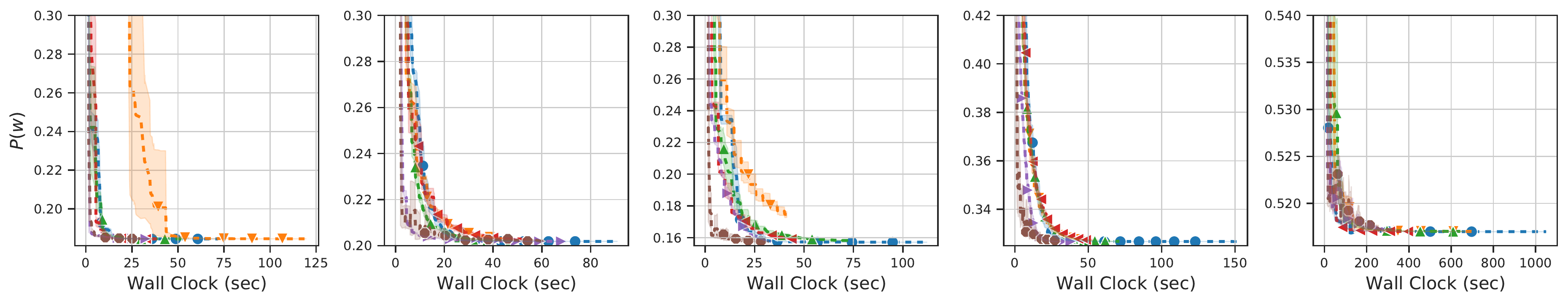}
    \vskip-10pt
    \caption{Evolution of $P(w)$ for the \textbf{regularized} case by effective pass (top row) and wall clock time (bottom row).}
    \label{fig:finite_sum}
\end{figure*}
The datasets chosen for the experiments are \textit{ijcnn1, rcv1, real-sim, news20} and \textit{covtype}. Table \ref{tab:data} shows the basic statistics of the datasets. More details and additional datasets can be found in Appendix \ref{app:exp}. 
 
We compare \textit{AI-SARAH} with \textit{SARAH}, \textit{SARAH}+, \textit{SVRG} \citep{svrg13}, \textit{ADAM} \citep{kingma2014adam} and \textit{SGD} with Momentum \citep{sutskever2013importance,loizou2020momentum,loizou2017linearly}. \textbf{While \textit{AI-SARAH} does not require hyper-parameter tuning, we fine-tune each of the other algorithms, which yields $\mathbf{\approx 5,000}$ runs in total for each dataset and case.} 
 
To be specific, we perform an extensive search on hyper-parameters: (1) \textit{ADAM} and \textit{SGD} with Momentum (\textit{SGD} w/m) are tuned with different values of the (initial) step-size and schedules to reduce the step-size; (2) \textit{SARAH} and \textit{SVRG} are tuned with different values of the (constant) step-size and inner loop size; (3) \textit{SARAH}+ is tuned with different values of the (constant) step-size and early stopping parameter. (See Appendix \ref{app:exp} for detailed tuning plan and the selected hyper-parameters.)
 
Figure \ref{fig:minimum_norm} shows the minimum $\|\nabla P(w)\|^2$ achieved at a few points of effective passes and wall clock time horizon. It is clear that, \textit{AI-SARAH}'s practical speed of convergence is faster than the other algorithms in most cases. Here, we argue that, if given an optimal implementation of \textit{AI-SARAH} (just as that of \textit{ADAM} and other built-in optimizer in Pytorch\footnote{Please see \url{https://pytorch.org/docs/stable/optim.html} for Pytorch built-in optimizers.} ), it is likely that our algorithm can be accelerated.
 
By selecting the fine-tuned hyper-parameters of all other algorithms, we compare them with \textit{AI-SARAH} and show the results in Figures \ref{fig:squared_norm}-\ref{fig:test_accuracy}. For these experiments, we use $10$ distinct random seeds to initialize $w$ and generate stochastic mini-batches. And, we use the marked dashes to represent the average and filled areas for $97\%$ confidence intervals.
 
Figure \ref{fig:squared_norm} presents the evolution of $\|\nabla P(w)\|^2$. Obviously from the figure, \textit{AI-SARAH} exhibits the strongest performance in terms of converging to a stationary point: by effective pass, the consistently large gaps are displayed between \textit{AI-SARAH} and the rest; by wall clock time, we notice that \textit{AI-SARAH} achieves the smallest $\|\nabla P(w)\|^2$ at the same time point. This validates our design, that is to leverage local Lipschitz smoothness and achieve a faster convergence than \textit{SARAH} and \textit{SARAH}+. 
 
In terms of minimizing the finite-sum functions, Figure \ref{fig:finite_sum} shows that, by effective pass, \textit{AI-SARAH} consistently outperforms \textit{SARAH} and \textit{SARAH}+ on all of the datasets with a possible exception on \textit{covtype} dataset. By wall clock time, \textit{AI-SARAH} yields a competitive performance on all of the datasets, and it delivers a stronger performance on \textit{ijcnn1} and \textit{real-sim} than \textit{SARAH}. 
\begin{figure*}
\vskip-10pt
    \centering
    \includegraphics[width=\textwidth]{newplot/legend_3.pdf}
    \vskip-8pt
    \includegraphics[width=0.95\textwidth]{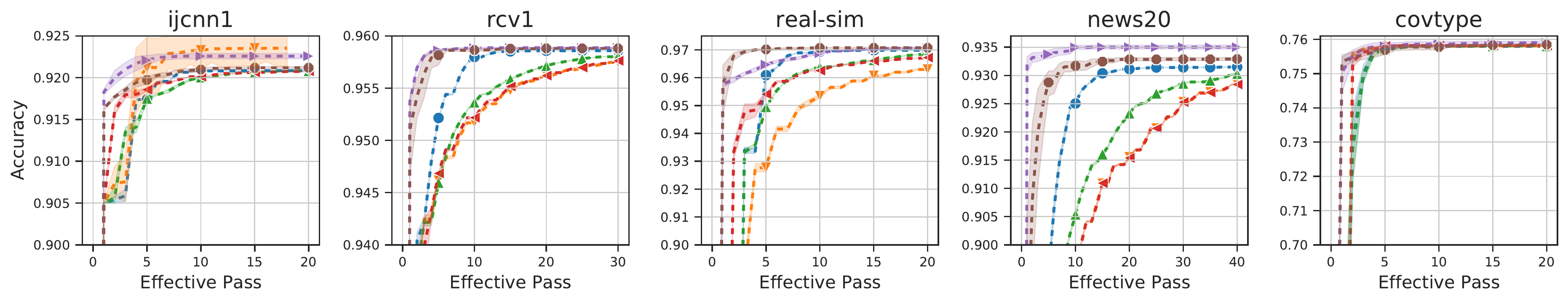}
    \includegraphics[width=0.95\textwidth]{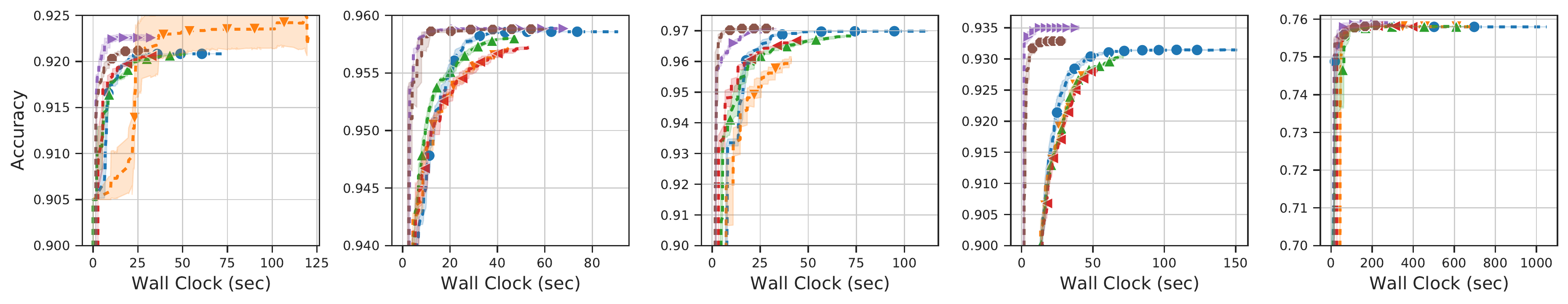}
    \vskip-5pt
    \caption{Running maximum of testing accuracy for the \textbf{regularized} case by effective pass (top row) and wall clock time (bottom row).}
    \label{fig:test_accuracy}
 \end{figure*}

 \begin{figure*}
    \centering
    \includegraphics[width=0.95\textwidth]{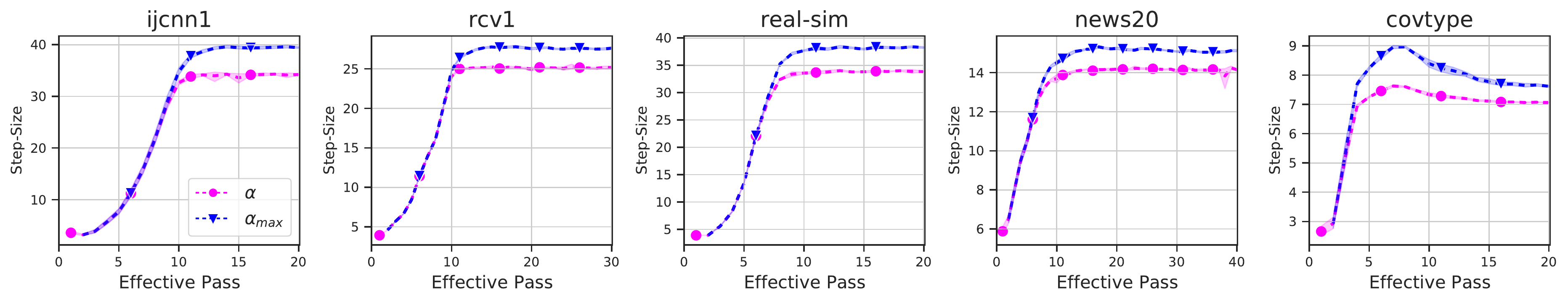}
    \vskip-10pt
    \caption{Evolution of \textit{AI-SARAH}'s step-size $\alpha$ and upper-bound $\alpha_{max}$ for the \textbf{regularized} case.}
    \label{fig:step}
    \vskip-10pt
\end{figure*} 
 
For completeness of illustration on the performance, we show the testing accuracy in Figure \ref{fig:test_accuracy}. Clearly, fine-tuned \textit{ADAM} dominates the competition. However, \textit{AI-SARAH} outperforms the other variance reduced methods on most of the datasets from both effective pass and wall clock time perspectives, and achieves the similar levels of accuracy as \textit{ADAM} does on \textit{rcv1}, \textit{real-sim} and \textit{covtype} datasets. 
\\[2pt]
Having illustrated the strong performance of \textit{AI-SARAH}, we continue the presentation by showing the trajectories of the adaptive step-size and upper-bound in Figure \ref{fig:step}.
\\[2pt]
This figure clearly shows that why \textit{AI-SARAH} can achieve such a strong performance, especially on the convergence to a stationary point. As mentioned in the previous chapters, the adaptivity is driven by the local Lipschitz smoothness. As shown in Figure \ref{fig:step}, \textit{AI-SARAH} starts with conservative step-size and upper-bound, both of which continue to increase while the algorithm progresses towards a stationary point. After a few effective passes, we observe: the step-size and upper-bound are stablized due to 
$\reg{}>0 $
(and hence strong convexity). In Appendix \ref{app:exp}, we can see that, as a result of the function being unregularized, the step-size and upper-bound could be continuously increasing due to the fact that the function is likely non-strongly convex.

\section{Conclusion}
In this paper, we propose \textit{AI-SARAH}, a practical variant of stochastic recursive gradient methods. The idea of design is simple yet powerful: by taking advantage of local Lipschitz smoothness, the step-size can be dynamically determined. With intuitive illustration and implementation details, we show how \textit{AI-SARAH} can efficiently estimate local Lipschitz smoothness and how it can be easily implemented in practice. Our algorithm is tune-free and adaptive at full scale. With extensive numerical experiment, we demonstrate that, without (tuning) any hyper-parameters, it delivers a competitive performance compared with \textit{SARAH(+)}, \textit{ADAM} and other first-order methods, all equipped with fine-tuned hyper-parameters.

\section*{Acknowledgements}
This work was partially conducted while A. Sadiev was visiting research assistant in Mohamed bin Zayed University of Artificial Intelligence (MBZUAI). 
The work of A. Sadiev was supported by a grant for research centers in the field of artificial intelligence, provided by the Analytical Center for the Government of the Russian Federation in accordance with the subsidy agreement (agreement identifier 000000D730321P5Q0002) and the agreement with the Moscow Institute of Physics and Technology dated November 1, 2021 No. 70-2021-00138.

\bibliographystyle{icml2022}

\bibliography{mybib}

\begin{thebibliography}{39}
\providecommand{\natexlab}[1]{#1}
\providecommand{\url}[1]{\texttt{#1}}
\expandafter\ifx\csname urlstyle\endcsname\relax
  \providecommand{\doi}[1]{doi: #1}\else
  \providecommand{\doi}{doi: \begingroup \urlstyle{rm}\Url}\fi

\bibitem[Bengio(2015)]{bengio2015rmsprop}
Bengio, Y.
\newblock Rmsprop and equilibrated adaptive learning rates for nonconvex
  optimization.
\newblock \emph{corr abs/1502.04390}, 2015.

\bibitem[Bottou et~al.(2018)Bottou, Curtis, and
  Nocedal]{bottou2018optimization}
Bottou, L., Curtis, F.~E., and Nocedal, J.
\newblock Optimization methods for large-scale machine learning.
\newblock \emph{SIAM Review}, 60\penalty0 (2):\penalty0 223--311, 2018.

\bibitem[Chang \& Lin(2011)Chang and Lin]{chang2011libsvm}
Chang, C.-C. and Lin, C.-J.
\newblock Libsvm: a library for support vector machines.
\newblock \emph{ACM transactions on intelligent systems and technology (TIST)},
  2\penalty0 (3):\penalty0 1--27, 2011.

\bibitem[Cutkosky \& Orabona(2020)Cutkosky and
  Orabona]{cutkosky2020momentumbased}
Cutkosky, A. and Orabona, F.
\newblock Momentum-based variance reduction in non-convex sgd.
\newblock \emph{arXiv preprint arXiv:1905.10018}, 2020.

\bibitem[Defazio(2016)]{defazio2016simple}
Defazio, A.
\newblock A simple practical accelerated method for finite sums.
\newblock In \emph{NeurIPS}, 2016.

\bibitem[Defazio et~al.(2014)Defazio, Bach, and Lacoste-Julien]{saga14}
Defazio, A., Bach, F., and Lacoste-Julien, S.
\newblock Saga: A fast incremental gradient method with support for
  non-strongly convex composite objectives.
\newblock In \emph{Advances in Neural Information Processing Systems},
  volume~27, pp.\  1646--1654. Curran Associates, Inc., 2014.

\bibitem[Duchi et~al.(2011)Duchi, Hazan, and Singer]{duchi2011adaptive}
Duchi, J., Hazan, E., and Singer, Y.
\newblock Adaptive subgradient methods for online learning and stochastic
  optimization.
\newblock \emph{Journal of machine learning research}, 12\penalty0
  (Jul):\penalty0 2121--2159, 2011.

\bibitem[Ghadimi \& Lan(2013)Ghadimi and Lan]{ghadimi2013stochastic}
Ghadimi, S. and Lan, G.
\newblock Stochastic first-and zeroth-order methods for nonconvex stochastic
  programming.
\newblock \emph{SIAM Journal on Optimization}, 23\penalty0 (4):\penalty0
  2341--2368, 2013.

\bibitem[Gower et~al.(2019)Gower, Loizou, Qian, Sailanbayev, Shulgin, and
  Richt{\'a}rik]{gower2019sgd}
Gower, R.~M., Loizou, N., Qian, X., Sailanbayev, A., Shulgin, E., and
  Richt{\'a}rik, P.
\newblock Sgd: General analysis and improved rates.
\newblock In \emph{International Conference on Machine Learning}, pp.\
  5200--5209, 2019.

\bibitem[Gower et~al.(2020{\natexlab{a}})Gower, Richt{\'a}rik, and
  Bach]{GowerRichBach2018}
Gower, R.~M., Richt{\'a}rik, P., and Bach, F.
\newblock Stochastic quasi-gradient methods: Variance reduction via jacobian
  sketching.
\newblock \emph{Mathematical Programming}, pp.\  1--58, 2020{\natexlab{a}}.

\bibitem[Gower et~al.(2020{\natexlab{b}})Gower, Sebbouh, and
  Loizou]{gower2020sgd}
Gower, R.~M., Sebbouh, O., and Loizou, N.
\newblock Sgd for structured nonconvex functions: Learning rates, minibatching
  and interpolation.
\newblock \emph{arXiv preprint arXiv:2006.10311}, 2020{\natexlab{b}}.

\bibitem[Horváth et~al.(2020)Horváth, Lei, Richtárik, and
  Jordan]{horvth2020adaptivity}
Horváth, S., Lei, L., Richtárik, P., and Jordan, M.~I.
\newblock Adaptivity of stochastic gradient methods for nonconvex optimization.
\newblock \emph{arXiv preprint arXiv:2002.05359}, 2020.

\bibitem[Johnson \& Zhang(2013)Johnson and Zhang]{svrg13}
Johnson, R. and Zhang, T.
\newblock Accelerating stochastic gradient descent using predictive variance
  reduction.
\newblock In \emph{Advances in Neural Information Processing Systems},
  volume~26, pp.\  315--323. Curran Associates, Inc., 2013.

\bibitem[Khaled et~al.(2020)Khaled, Sebbouh, Loizou, Gower, and
  Richtárik]{khaled2020unified}
Khaled, A., Sebbouh, O., Loizou, N., Gower, R.~M., and Richtárik, P.
\newblock Unified analysis of stochastic gradient methods for composite convex
  and smooth optimization.
\newblock \emph{arXiv preprint arXiv:2006.11573}, 2020.

\bibitem[Kingma \& Ba(2015)Kingma and Ba]{kingma2014adam}
Kingma, D. and Ba, J.
\newblock Adam: {A} method for stochastic optimization.
\newblock In \emph{{ICLR}}, 2015.

\bibitem[Kone\v{c}n\'{y} et~al.(2016)Kone\v{c}n\'{y}, Liu, Richt\'{a}rik, and
  Tak\'{a}\v{c}]{mS2GD}
Kone\v{c}n\'{y}, J., Liu, J., Richt\'{a}rik, P., and Tak\'{a}\v{c}, M.
\newblock Mini-batch semi-stochastic gradient descent in the proximal setting.
\newblock \emph{IEEE Journal of Selected Topics in Signal Processing},
  10\penalty0 (2):\penalty0 242--255, 2016.

\bibitem[Li \& Giannakis(2019)Li and Giannakis]{li2019adaptive}
Li, B. and Giannakis, G.~B.
\newblock Adaptive step sizes in variance reduction via regularization.
\newblock \emph{arXiv preprint arXiv:1910.06532}, 2019.

\bibitem[Li et~al.(2020)Li, Wang, and Giannakis]{pmlr-v119-li20n}
Li, B., Wang, L., and Giannakis, G.~B.
\newblock Almost tune-free variance reduction.
\newblock In \emph{Proceedings of the 37th International Conference on Machine
  Learning}, volume 119, pp.\  5969--5978. PMLR, 2020.

\bibitem[Li \& Orabona(2018)Li and Orabona]{li2018convergence}
Li, X. and Orabona, F.
\newblock On the convergence of stochastic gradient descent with adaptive
  stepsizes.
\newblock \emph{arXiv preprint arXiv:1805.08114}, 2018.

\bibitem[Liu et~al.(2019{\natexlab{a}})Liu, Jiang, He, Chen, Liu, Gao, and
  Han]{liu2019variance}
Liu, L., Jiang, H., He, P., Chen, W., Liu, X., Gao, J., and Han, J.
\newblock On the variance of the adaptive learning rate and beyond.
\newblock \emph{arXiv preprint arXiv:1908.03265}, 2019{\natexlab{a}}.

\bibitem[Liu et~al.(2019{\natexlab{b}})Liu, Han, and Huo]{LIU2019}
Liu, Y., Han, C., and Huo, T.
\newblock A class of stochastic variance reduced methods with an adaptive
  stepsize.
\newblock 2019{\natexlab{b}}.
\newblock URL
  \url{http://www.optimization-online.org/DB_FILE/2019/04/7170.pdf}.

\bibitem[Loizou \& Richt{\'a}rik(2017)Loizou and
  Richt{\'a}rik]{loizou2017linearly}
Loizou, N. and Richt{\'a}rik, P.
\newblock Linearly convergent stochastic heavy ball method for minimizing
  generalization error.
\newblock \emph{arXiv preprint arXiv:1710.10737}, 2017.

\bibitem[Loizou \& Richt{\'a}rik(2020)Loizou and
  Richt{\'a}rik]{loizou2020momentum}
Loizou, N. and Richt{\'a}rik, P.
\newblock Momentum and stochastic momentum for stochastic gradient, newton,
  proximal point and subspace descent methods.
\newblock \emph{Computational Optimization and Applications}, 77\penalty0
  (3):\penalty0 653--710, 2020.

\bibitem[Loizou et~al.(2020)Loizou, Vaswani, Laradji, and
  Lacoste-Julien]{loizou2020stochastic}
Loizou, N., Vaswani, S., Laradji, I., and Lacoste-Julien, S.
\newblock Stochastic polyak step-size for sgd: An adaptive learning rate for
  fast convergence.
\newblock \emph{arXiv preprint arXiv:2002.10542}, 2020.

\bibitem[Moulines \& Bach(2011)Moulines and Bach]{moulines2011non}
Moulines, E. and Bach, F.~R.
\newblock Non-asymptotic analysis of stochastic approximation algorithms for
  machine learning.
\newblock In \emph{Advances in Neural Information Processing Systems}, pp.\
  451--459, 2011.

\bibitem[Needell et~al.(2016)Needell, Srebro, and Ward]{needell2014stochastic}
Needell, D., Srebro, N., and Ward, R.
\newblock Stochastic gradient descent, weighted sampling, and the randomized
  kaczmarz algorithm.
\newblock \emph{Mathematical Programming, Series A}, 155\penalty0 (1):\penalty0
  549--573, 2016.

\bibitem[Nemirovski \& Yudin(1983)Nemirovski and Yudin]{NemYudin1983book}
Nemirovski, A. and Yudin, D.~B.
\newblock \emph{Problem complexity and method efficiency in optimization}.
\newblock Wiley Interscience, 1983.

\bibitem[Nemirovski et~al.(2009)Nemirovski, Juditsky, Lan, and
  Shapiro]{Nemirovski-Juditsky-Lan-Shapiro-2009}
Nemirovski, A., Juditsky, A., Lan, G., and Shapiro, A.
\newblock Robust stochastic approximation approach to stochastic programming.
\newblock \emph{SIAM Journal on Optimization}, 19\penalty0 (4):\penalty0
  1574--1609, 2009.

\bibitem[Nesterov(2003)]{nesterov2003introductory}
Nesterov, Y.
\newblock \emph{Introductory lectures on convex optimization: A basic course},
  volume~87.
\newblock Springer Science \& Business Media, 2003.

\bibitem[Nguyen et~al.(2018)Nguyen, Nguyen, van Dijk, Richt\'{a}rik,
  Scheinberg, and Tak\'{a}\v{c}]{pmlr-v80-nguyen18c}
Nguyen, L., Nguyen, P.~H., van Dijk, M., Richt\'{a}rik, P., Scheinberg, K., and
  Tak\'{a}\v{c}, M.
\newblock {SGD} and hogwild! {C}onvergence without the bounded gradients
  assumption.
\newblock In \emph{Proceedings of the 35th International Conference on Machine
  Learning}, volume~80 of \emph{Proceedings of Machine Learning Research}, pp.\
   3750--3758. PMLR, 2018.

\bibitem[Nguyen et~al.(2017)Nguyen, Liu, Scheinberg, and
  Tak{\'a}{\v{c}}]{sarah17}
Nguyen, L.~M., Liu, J., Scheinberg, K., and Tak{\'a}{\v{c}}, M.
\newblock Sarah: A novel method for machine learning problems using stochastic
  recursive gradient.
\newblock In \emph{Proceedings of the 34th International Conference on Machine
  Learning (ICML 2000)}, volume~70, pp.\  2613--2621, International Convention
  Centre, Sydney, Australia, 2017. PMLR.

\bibitem[Robbins \& Monro(1951)Robbins and Monro]{robbins1951stochastic}
Robbins, H. and Monro, S.
\newblock A stochastic approximation method.
\newblock \emph{The Annals of Mathematical Statistics}, pp.\  400--407, 1951.

\bibitem[Schmidt et~al.(2017)Schmidt, Le~Roux, and Bach]{schmidt2017minimizing}
Schmidt, M., Le~Roux, N., and Bach, F.
\newblock Minimizing finite sums with the stochastic average gradient.
\newblock \emph{Math. Program.}, 162\penalty0 (1-2):\penalty0 83--112, 2017.

\bibitem[Shalev-Shwartz et~al.(2007)Shalev-Shwartz, Singer, and
  Srebro]{Pegasos}
Shalev-Shwartz, S., Singer, Y., and Srebro, N.
\newblock Pegasos: primal estimated subgradient solver for {SVM}.
\newblock In \emph{24th International Conference on Machine Learning}, pp.\
  807--814, 2007.

\bibitem[Sutskever et~al.(2013)Sutskever, Martens, Dahl, and
  Hinton]{sutskever2013importance}
Sutskever, I., Martens, J., Dahl, G., and Hinton, G.
\newblock On the importance of initialization and momentum in deep learning.
\newblock In \emph{International conference on machine learning}, pp.\
  1139--1147. PMLR, 2013.

\bibitem[Tan et~al.(2016)Tan, Ma, Dai, and Qian]{tan2016barzilai}
Tan, C., Ma, S., Dai, Y.-H., and Qian, Y.
\newblock Barzilai-borwein step size for stochastic gradient descent.
\newblock In \emph{Proceedings of the 30th International Conference on Neural
  Information Processing Systems}, pp.\  685--693, 2016.

\bibitem[Vaswani et~al.(2019)Vaswani, Mishkin, Laradji, Schmidt, Gidel, and
  Lacoste-Julien]{Schmidt19}
Vaswani, S., Mishkin, A., Laradji, I., Schmidt, M., Gidel, G., and
  Lacoste-Julien, S.
\newblock Painless stochastic gradient: Interpolation, line-search, and
  convergence rates.
\newblock In Wallach, H., Larochelle, H., Beygelzimer, A., d\textquotesingle
  Alch\'{e}-Buc, F., Fox, E., and Garnett, R. (eds.), \emph{Advances in Neural
  Information Processing Systems}, volume~32, pp.\  3732--3745. Curran
  Associates, Inc., 2019.

\bibitem[Ward et~al.(2019)Ward, Wu, and Bottou]{ward2019adagrad}
Ward, R., Wu, X., and Bottou, L.
\newblock Adagrad stepsizes: Sharp convergence over nonconvex landscapes.
\newblock In \emph{International Conference on Machine Learning}, pp.\
  6677--6686, 2019.

\bibitem[Yang et~al.(2021)Yang, Chen, and Wang]{YANG2021}
Yang, Z., Chen, Z., and Wang, C.
\newblock Accelerating mini-batch sarah by step size rules.
\newblock \emph{Information Sciences}, 2021.
\newblock ISSN 0020-0255.
\newblock \doi{https://doi.org/10.1016/j.ins.2020.12.075}.

\end{thebibliography}
 
\clearpage

\clearpage
\onecolumn
\icmltitle{Supplementary Material}
\appendix

The Appendix is organized as follows. In Chapter \ref{app:line-segment}, we present the technical details of theoretical analysis in Chapter \ref{sec:theoretical-analysis} of the main paper. In Chapter~\ref{app:exp}, we present extended details on the design, implementation and results of our numerical experiments. In Chapters~\ref{app:alternative} and \ref{AppendixProofs}, we present an alternative theoretical analysis for investigating the benefit of using local Lipschitz smoothness to derive an adaptive step-size. 

\section{Technical Results and Proofs}\label{app:line-segment}
We consider finite-sum optimization problem
\begin{equation}
    \min_{w \in \cR^d} \left[P(w) \defeq \frac{1}{n}\sum^n_{i = 1} f_i(w)\right].
\end{equation}

\begin{assumption}
\label{smoothness_on_ls}
For $t\geq 0$, each $f_i$ is $L^{t}_i$-smooth on the line segment $\Delta = \left\{w\in \cR^d\;|\; w = w_t - \eta v_t, \forall \eta \in \left[0, \frac{1}{L^{t}_i}\right] \right\}$  and convex. For simplicity, we denote $$L^{t} = \max_{i \in [n]}L^{t}_i,~\Bar{L}^{t} = \frac{1}{n}\sum^n_i L^{ t}_i,~ \Bar{L}_M = \max_{t \in \{0,1,...,m\}} \Bar{L}^{t}.$$

Note that in Chapter \ref{sec:theoretical-analysis} of the main paper, we use $L^t$ universally for both maximum and average value of parameters of local Lipschitz smoothness. In this chapter, as we will present Algorithm \ref{algo:aisarah-theory} in two specific forms: importance sampling version (see Algorithm \ref{algo:aisarah-theory-importance}) and uniform sampling version (see Algorithm \ref{algo:aisarah-theory-uniform}), we use a different notation on the average, i.e., $\bar L^t = \frac{1}{n}\sum^n_i L^t_i$.

\end{assumption}
\begin{assumption}
\label{mu_convexity_ls}
Function $P$ is $\mu$-strongly convex.
\end{assumption}

\begin{definition}
Fix a outer loop $k\geq 1$ and consider Algorithms \ref{algo:aisarah-theory}, \ref{algo:aisarah-theory-importance} and \ref{algo:aisarah-theory-uniform} with an inner loop size $m$, we define a discrete probability distribution at $t \geq 1$ for all $i \in \n$, $p^{t}_i = \frac{L^{t}_i}{\sum^n_{i = 1}L^{t}_i},$
and probabilities $q_t$ for all $t \geq 0$, $q_t = \frac{\eta_t}{\mathcal{H}},~\text{where}~\mathcal{H} = \sum^m_{j = 0}\eta_j.$
\end{definition}

\subsection{Theoretical-AI-SARAH with Importance Sampling}
We present the importance sampling algorithm in Algorithm \ref{algo:aisarah-theory-importance}. Now, let us start by presenting the following lemmas. 
\begin{algorithm}
\caption{\textit{Theoretical-AI-SARAH with Importance Sampling}}
	\begin{algorithmic}[1]
	\STATE \textbf{Parameter:} Inner loop size $m$
		\STATE \textbf{Initialize:} $\tilde{w}_0$
		\FOR{$k = 1, 2, ...$}
		\STATE $w_0 = \tilde{w}_{k-1}$
		\STATE $v_0 = \nabla P(w_0)$ 
		\FOR{$i \in \n$}
		\STATE $L^0_i = \max_{\eta \in [0, \frac{1}{L^0_i}]}\|\nabla^2 f_i(w_0 - \eta v_0)\|$
		\ENDFOR
        \STATE $\bar L^0 = \frac{1}{n}\sum^n_{i = 1} L^0_i$ and $\eta_0 = \frac{1}{\bar L^0}$
        \FOR{$t = 1, ... , m$}
        \STATE $w_t = w_{t-1} - \eta_{t-1} v_{t-1}$
		\STATE Sample $i_t$ from $\n$ with probability $p^{t-1}_i$
        \STATE  $v_t = v_{t-1} + \tfrac{1}{np^{t-1}_i} \left(\nabla f_{i_t}(w_t) - \nabla f_{i_t}(w_{t-1})\right)$
        \FOR{$i \in \n$}
        \STATE $L^t_i = \max_{\eta \in [0, \frac{1}{L^t_i}]}\|\nabla^2 f_i(w_t - \eta v_t)\|$
        \ENDFOR
         \STATE $\bar L^t = \frac{1}{n}\sum^n_{i = 1} L^t_i$
        \STATE $\eta_t = \min\left\{\frac{1}{\bar L^{t}}, \frac{\bar L^{t-1}}{\bar L^{t}}\eta_{t-1}\right\}$
		\ENDFOR
        \STATE \textbf{Set} $\tilde{w}_k = w_t$ where $t$ is chosen with probability $q_t$ from $\{0,1,...,m\}$
		\ENDFOR
	\end{algorithmic}
	\label{algo:aisarah-theory-importance}
\end{algorithm}

\begin{lemma}
\label{lemmaTechnicalLam_1_ls}
Consider $v_t$ defined in Algorithm~\ref{algo:aisarah-theory-importance}. Then for any $t\geq1$ in Algorithm~\ref{algo:aisarah-theory-importance}, it holds that
\begin{align*}
\EE[\|\nabla P(w_t) - v_t\|^2] = \sum_{j=1}^t \EE[\|v_j - v_{j-1}\|^2] - \sum_{j=1}^t\EE[\|\nabla P(w_j) - \nabla P(w_{j-1})\|^2].
\end{align*}
\end{lemma}
\begin{proof}
Let $\Exp_j$ denote the expectation by conditioning on the information $w_0,w_1, \dots ,w_j $ as well as $v_0,v_1,\dots,v_{j-1}$. Then,
\begin{align*}
    \Exp_j[\|\nabla P(w_j) - v_j\|^2] &= 
    \Exp_j\left[\|\left(\nabla P(w_{j-1}) - v_{j-1}\right) 
    + \left(\nabla P(w_j) - \nabla P(w_{j-1})\right) 
    - (v_j - v_{j-1})\|^2\right]\\
    &=\Exp_j[\|\nabla P(w_{j-1}) - v_{j-1}\|^2]
    +\|\nabla P(w_j) - \nabla P(w_{j-1})\|^2
    + \Exp_j[\|v_j - v_{j-1}\|^2]\\
    &\quad + 2 \left\langle\nabla P(w_{j-1}) - v_{j-1}, \nabla P(w_j)-\nabla P(w_{j-1})\right\rangle\\
    &\quad - 2\left\langle\nabla P(w_{j-1}) - v_{j-1}, \Exp_j[v_j - v_{j-1}]\right\rangle\\
    &\quad - 2\left\langle\nabla P(w_j) - \nabla P(w_{j-1}), \Exp_j[v_j - v_{j-1}]\right\rangle\\
    &= \Exp_j[\|\nabla P(w_{j-1}) - v_{j-1}\|^2]
    - \|\nabla P(w_j) - \nabla P(w_{j-1})\|^2
    + \Exp_j[\|v_j - v_{j-1}\|^2], \tagthis \label{lemma:A4-proof}
\end{align*}
where the last equality follows from
\begin{align*}
    \Exp_j[v_j - v_{j-1}]&=\Exp_j[\frac{1}{np^{j-1}_{i_j}}\left(\nabla f_{i_j}(w_j) - \nabla  f_{i_j}(w_{j-1})\right)]\\
    &=\sum^n_{i_j}\frac{p^{j-1}_{i_j}}{np^{j-1}_{i_j}}\left(\nabla f_{i_j}(w_j) - \nabla  f_{i_j}(w_{j-1})\right)\\
    &=\nabla P(w_j) - \nabla P(w_{j-1}). \tagthis \label{lemma:trivial}
\end{align*}
By taking expectation of \eqref{lemma:A4-proof}, we have
\begin{align*}
    \Exp[\|\nabla P(w_j) - v_j\|^2] = \Exp[\|\nabla P(w_{j-1}) - v_{j-1}\|^2]
    - \Exp[\|\nabla P(w_j) - \nabla P(w_{j-1})\|^2]
    + \Exp[\|v_j - v_{j-1}\|^2].
\end{align*}
By summing it over $j=1,...,t$ and note that $\|\nabla P(v_0) - v_0\|^2=0$, we have 
\begin{align*}
\Exp[\|\nabla P(w_t) - v_t\|^2] = \sum_{j=1}^t \Exp[\|v_j - v_{j-1}\|^2] - \sum_{j=1}^t\Exp[\|\nabla P(w_j) - \nabla P(w_{j-1})\|^2].
\end{align*}
\end{proof}

\begin{lemma}
\label{Lemma_BoundGradient_ls}
Fix a outer loop $k \geq 1$ and consider Algorithm~\ref{algo:aisarah-theory-importance} with $\eta_t \leq 1/\bar{L}^{t}$ for any $t \in [m]$. Under Assumption \ref{smoothness_on_ls},
\begin{equation*}
    \sum_{t=0}^{m}\frac{\eta_t}{2}\EE[\| \nabla P(w_{t})\|^2]
    \leq  \EE[P(w_0)-P(w^*)]  +  \sum_{t=0}^{m}\frac{\eta_t}{2}\EE[\| \nabla P(w_{t})-v_{t}\|^2].
\end{equation*}
\end{lemma}

\begin{proof}
By Assumption \ref{smoothness_on_ls} and the update rule $w_t = w_{t-1} - \eta_{t-1} v_{t-1}$ of Algorithm~\ref{algo:aisarah-theory-importance}, we obtain
\begin{eqnarray*}
   P(w_{t}) &\leq & P(w_{t-1})- \eta_{t-1} \langle \nabla P(w_{t-1}), v_{t-1} \rangle+ \frac{\bar{L}^{t-1}}{2} \eta_{t-1}^2 \|v_{t-1}\|^2 \notag\\
   &= & P(w_{t-1})- \frac{\eta_{t-1}}{2}\| \nabla P(w_{t-1})\|^2 + \frac{\eta_{t-1}}{2}\| \nabla P(w_{t-1})-v_{t-1}\|^2 - \left(\frac{\eta_{t-1}}{2} - \frac{\bar{L}^{t-1}}{2} \eta_{t-1}^2 \right) \|v_{t-1}\|^2,
\end{eqnarray*}
where, in the equality above, we use the fact that $\langle a,b\rangle = \frac{1}{2}(\|a\|^2 + \|b\|^2 - \|a-b\|^2)$.

By assuming that $\eta_{t-1} \leq \frac{1}{\bar{L}^{t-1}}$, it holds that $ \left(1 - \bar{L}^{t-1} \eta_{t-1} \right)\geq0$, $\forall t \in [m]$. Thus,
\begin{eqnarray*}
 \frac{\eta_{t-1}}{2} \| \nabla P(w_{t-1})\|^2 
   &\leq&  [P(w_{t-1})-P(w_{t}) ] + \frac{\eta_{t-1}}{2}\| \nabla P(w_{t-1})-v_{t-1}\|^2 - \frac{\eta_{t-1}}{2} \left(1 - \bar{L}^{t-1} \eta_{t-1}\right) \|v_{t-1}\|^2.
\end{eqnarray*}
By taking expectations
\begin{eqnarray*}
 \EE[\frac{\eta_{t-1}}{2}\| \nabla P(w_{t-1})\|^2]
   &\leq&   \EE[P(w_{t-1})]-\EE[P(w_{t})]  + \frac{\eta_{t-1}}{2}\EE[\| \nabla P(w_{t-1})-v_{t-1}\|^2]\\ &&-  \frac{\eta_{t-1}}{2}\left(1 - \bar{L}^{t-1} \eta_t \right) \EE[\|v_{t-1}\|^2]\notag\\
   &\overset{\eta_{t-1} \leq \frac{1}{\bar{L}^{t-1}}}{\leq}&  \EE[P(w_{t-1})]-\EE[P(w_{t})]  + \frac{\eta_{t-1}}{2}\EE[\| \nabla P(w_{t-1})-v_{t-1}\|^2].
\end{eqnarray*}
Summing over $t=1,2,\dots, m+1$, we have
\begin{eqnarray*}
 \sum_{t=1}^{m+1}\frac{\eta_{t-1}}{2}\EE[\| \nabla P(w_{t-1})\|^2]
   &\leq&   \sum_{t=1}^{m+1} \EE[P(w_{t-1})-P(w_{t})] +  \sum_{t=1}^{m+1}\frac{\eta_{t-1}}{2}\EE[\| \nabla P(w_{t-1})-v_{t-1}\|^2]\notag\\
   &=&  \EE[P(w_{0})-P(w_{m+1})] +  \sum_{t=1}^{m+1}\frac{\eta_{t-1}}{2} \EE[\| \nabla P(w_{t-1})-v_{t-1}\|^2\notag\\
    &\leq &   \EE[P(w_{0})-P(w_{*})] +  \sum_{t=1}^{m+1}\frac{\eta_{t-1}}{2}\EE[\| \nabla P(w_{t-1})-v_{t-1}\|^2],
\end{eqnarray*}
where the last inequality holds since $w^*$ is the global minimizer of $P.$

The last expression can be equivalently written as
\begin{eqnarray*}
 \sum_{t=0}^{m}\frac{\eta_{t}}{2}\EE[\| \nabla P(w_{t})\|^2]
    &\leq &  \EE[P(w_{0})-P(w_{*})] + \sum_{t=0}^{m}\frac{\eta_{t}}{2}\EE[\| \nabla P(w_{t})-v_{t}\|^2],
 \end{eqnarray*}
 which completes the proof.
 \end{proof}

\begin{lemma}
\label{Lemma_boundonDifference2_ls}
Consider Algorithm~\ref{algo:aisarah-theory-importance} with $\eta_t =  \min\left\{\frac{1}{\bar{L}^{t}}, \frac{\bar{L}^{t-1}}{\bar{L}^{t}}\eta_{t-1}\right\}$.  Suppose $f_i$ is convex for all $i \in [n]$. Then, under Assumption~\ref{smoothness_on_ls}, for any $t\geq 1$,
\begin{align*}
    \EE[\|\nabla P(w_t) - v_t\|^2] \leq \left(\frac{\eta_{0} \Bar{L}^{0}}{2-\eta_{0} \Bar{L}^{0}}\right) \EE[\|v_{0}\|^2].
\end{align*}
\end{lemma}

\begin{proof}
\begin{eqnarray*}
 \EE_j\left[\|v_j\|^2\right] &\leq&   \EE_j\left[\|v_{j-1} - \frac{1}{np^{j-1}_i}\left(\nabla f_{i_j}(w_{j-1}) - \nabla f_{i_j}(w_j) \right)\|^2\right]\notag\\
   &=& \|v_{j-1}\|^2 +  \EE_j\left[\frac{1}{\left(np^{j-1}_i\right)^2}\|\nabla f_{i_j}(w_{j-1}) - \nabla f_{i_j}(w_j)\|^2\right] 
   \\&&\quad-  \EE_j\left[\frac{2}{\eta_{j-1}np^{j-1}_i} \left\langle\nabla f_{i_j}(w_{j-1}) - \nabla f_{i_j}(w_j), w_{j-1} - w_j\right\rangle\right]\notag\\
   &\leq& \|v_{j-1}\|^2 +  \EE_j\left[\frac{1}{(np^{j-1}_i)^2}\|\nabla f_{i_j}(w_{j-1}) - \nabla f_{i_j}(w_j)\|^2\right] 
   \\&&-  \EE_j\left[\frac{2}{\eta_{j-1}np^{j-1}_i L^{j-1}_{i_j}} \|\nabla f_{i_j}(w_{j-1}) - \nabla f_{i_j}(w_j)\|^2\right].\notag\\
\end{eqnarray*}
For each outer loop $k\geq 1$, it holds that $L^{j-1}_{i_j}  = np^{j-1}_i \Bar{L}^{j-1} $. Thus,
\begin{eqnarray*}
    \EE_j[\|v_j\|^2] &\leq&  \|v_{j-1}\|^2 +  \EE_j\left[\left\|\frac{1}{np^{j-1}_i}\left(\nabla f_{i_j}(w_{j-1}) - \nabla f_{i_j}(w_j)\right)\right\|^2\right] 
    \\&&-  \frac{2}{\eta_{j-1} \Bar{L}^{j-1}}\EE_j\left[ \left\|\frac{1}{np^{j-1}_i}\left(\nabla f_{i_j}(w_{j-1}) - \nabla f_{i_j}(w_j)\right)\right\|^2\right]\notag\\
    &=&  \|v_{j-1}\|^2 + \left(1- \frac{2}{\eta_{j-1} \Bar{L}^{j-1}}\right)  \EE_j\left[\left\|\frac{1}{np^{j-1}_i}\left(\nabla f_{i_j}(w_{j-1}) - \nabla f_{i_j}(w_j)\right)\right\|^2\right]\notag\\
    &=&  \|v_{j-1}\|^2 + \left(1- \frac{2}{\eta_{j-1} \Bar{L}^{j-1}}\right)  \EE_j\left[\left\|v_j - v_{j-1}\right\|^2\right].\notag\\
    &\leq&  \|v_{j-1}\|^2 + \left(1- \frac{2}{\eta_{j-1} \Bar{L}^{j-1}}\right)  \EE_j\left[\left\|v_j - v_{j-1}\right\|^2\right].\notag\\
\end{eqnarray*}

By rearranging, taking expectations again, and assuming that $\eta_{j-1} < 2/ \Bar{L}^{j-1}$ for any $j$ from $1$ to $t+1$
\begin{eqnarray*}
 \EE[\|v_j - v_{j-1}\|^2] &\leq& \EE\left[\left(\frac{\eta_{j-1} \Bar{L}^{j-1}}{2-\eta_{j-1} \Bar{L}^{j-1}}\right) \left[\|v_{j-1}\|^2 -  \|v_j\|^2\right]\right].  \notag\\
\end{eqnarray*}

By summing the above inequality over $j=1,\dots, t$ $(t\geq1)$, we have
\begin{eqnarray}
\label{eq:lemma-6}
 \sum_{j=1}^t  \EE[\|v_j - v_{j-1}\|^2] &\leq&   \sum_{j=1}^t\EE\left[\left(\frac{\eta_{j-1} \Bar{L}^{j-1}}{2-\eta_{j-1} \Bar{L}^{j-1}}\right)\left[\|v_{j-1}\|^2 - \|v_j\|^2 \right]\right]  \notag\\
 &=&  \left(\frac{\eta_{0} \Bar{L}^{0}}{2-\eta_{0} \Bar{L}^{0}}\right) \EE\left[\|v_{0}\|^2\right] - \left(\frac{\eta_{t-1} \Bar{L}^{t-1}}{2-\eta_{t-1} \Bar{L}^{t-1}}\right) \EE\left[\|v_{t}\|^2\right] \notag\\&&- \sum_{j=1}^{t-1}\EE\left[\left(\frac{\eta_{j-1} \Bar{L}^{j-1}}{2-\eta_{j-1} \Bar{L}^{j-1}} - \frac{\eta_{j} \Bar{L}^{j}}{2-\eta_{j} \Bar{L}^{j}}\right)\|v_{j}\|^2  \right]\notag\\
 &\leq& \left(\frac{\eta_{0} \Bar{L}^{0}}{2-\eta_{0} \Bar{L}^{0}}\right) \EE\left[\|v_{0}\|^2\right].
\end{eqnarray}

Now, by using Lemma~\ref{lemmaTechnicalLam_1_ls} and \eqref{eq:lemma-6}, we obtain
\begin{eqnarray*}
\EE[\|\nabla P(w_t) - v_t\|^2] &{\leq}& \sum_{j=1}^t \EE\left[\|v_j - v_{j-1}\|^2\right]\notag\\& {\leq} &\left(\frac{\eta_{0} \Bar{L}^{0}}{2-\eta_{0} \Bar{L}^{0}}\right) \EE\left[\|v_{0}\|^2\right].
\end{eqnarray*}
\end{proof}

Using the above lemmas, we can present one of our main results in the following theorem.
\begin{theorem}\label{the:converge_ls} Suppose that Assumptions \ref{smoothness_on_ls}, {\ref{mu_convexity_ls}}, holds. 
Let us define 
\begin{equation*}
    \bar \sigma^k_m=\left(\frac{1}{\mu \mathcal{H}} +\frac{\eta_{0} \Bar{L}^{0}}{2-\eta_{0} \Bar{L}^{0}}\right),
\end{equation*}
and select $m$ and $\eta$ such that $\bar \sigma^k_m<1$. Then, Algorithm \ref{algo:aisarah-theory-importance} converges as follows
\begin{equation*}
    \EE[\| \nabla P(\tilde{w}_{k})\|^2 \leq \left(\prod^{k}_{l=1}\bar \sigma^l_m\right) \|\nabla P(\tilde{w}_{0})\|^2.
\end{equation*}
\end{theorem}
\begin{proof}
Since $v_0=\nabla P(w_0)$ implies $\|\nabla P(w_0)-v_0 \|^2=0$, then by Lemma~\ref{Lemma_boundonDifference2_ls}, we obtain
\begin{equation*}
\sum_{t=0}^m \frac{\eta_t}{\mathcal{H}}\EE[\|\nabla P(w_t) - v_t\|^2]  \leq \left(\frac{\eta_{0} \Bar{L}^{0}}{2-\eta_{0} \Bar{L}^{0}}\right) \EE[\|v_{0}\|^2].
\end{equation*}
Combine this with Lemma~\ref{Lemma_BoundGradient_ls}, we have that
\begin{eqnarray*}
 \sum_{t=0}^{m}\frac{\eta_t}{\mathcal{H}}\EE[\| \nabla P(w_{t})\|^2]
    &\leq &  \frac{2}{\mathcal{H}} \EE[P(w_{0})-P(w_{*})] +  \sum_{t=0}^{m}\frac{\eta_t}{\mathcal{H}}\EE[\| \nabla P(w_{t})-v_{t}\|^2]\notag\\
    &{\leq} &  \frac{2}{\mathcal{H}} \EE[P(w_{0})-P(w_{*})] + \frac{\eta_{0} \Bar{L}^{0}}{2-\eta_{0} \Bar{L}^{0}} \EE[\|v_{0}\|^2].
\end{eqnarray*}
Since we consider one outer loop, with $k\geq1$, we have $v_0=\nabla P(w_0)=\nabla P(\tilde{w}_{k-1})$ and $\tilde{w}_{k}=w_t$, where $t$ is drawn at random from $\{0,1,\dots, m\}$ with probabilities $q_t$. Therefore, the following holds,
\begin{eqnarray*}
 \EE[\| \nabla P(\tilde{w}_{k})\|^2]&=&\sum_{t=0}^{m}\sum_{t=0}^{m}\frac{\eta_t}{\mathcal{H}}\EE[\| \nabla P(w_{t})\|^2]\notag\\
    &{\leq} &  \frac{2}{\mathcal{H}} \EE[P(\tilde{w}_{k-1})-P(w_{*})] + \frac{\eta_{0} \Bar{L}^{0}}{2-\eta_{0} \Bar{L}^{0}} \EE[\|\nabla P(\tilde{w}_{k-1})\|^2]\notag\\
    &\leq & \left(\frac{1}{\mu \mathcal{H}} +\frac{\eta_{0} \Bar{L}^{0}}{2-\eta_{0} \Bar{L}^{0}}\right) \EE[\|\nabla P(\tilde{w}_{k-1})\|^2].\notag\\
\end{eqnarray*}
Let us define $\bar{\sigma}^k_m=\left(\frac{1}{\mu \mathcal{H}} +\frac{\eta_{0} \Bar{L}^{0}}{2-\eta_{0} \Bar{L}^{0}}\right)$,
then the above expression can be written as
\begin{eqnarray*}
 \EE[\| \nabla P(\tilde{w}_{k})\|^2]
    &\leq & \bar{\sigma}^k_m \EE[\|\nabla P(\tilde{w}_{k-1})\|^2].\notag\\
\end{eqnarray*}
By expanding the recurrence, we obtain
\begin{eqnarray*}
 \EE[\| \nabla P(\tilde{w}_{k})\|^2]
    &\leq &  \left(\prod^{k}_{l=1}\bar{\sigma}^l_m\right)\|\nabla P(\tilde{w}_{0})\|^2.\notag\\
\end{eqnarray*}
This completes the proof.
\end{proof}

\subsection{Theoretical-AI-SARAH with Uniform Sampling}
We present the uniform sampling algorithm in Algorithm \ref{algo:aisarah-theory-uniform}. Now, let us start by presenting the following lemmas.
\begin{algorithm}[ht!]
\caption{\textit{Theoretical-AI-SARAH with Uniform Sampling}}
	\begin{algorithmic}[1]
	\STATE \textbf{Parameter:} Inner loop size $m$
		\STATE \textbf{Initialize:} $\tilde{w}_0$
		\FOR{k = 1, 2, ...}
		\STATE $w_0 = \tilde{w}_{k-1}$
		\STATE $v_0 = \nabla P(w_0)$ 
		\FOR{$i \in \n$}
		\STATE $L^0_i = \max_{\eta \in [0, \frac{1}{L^0_i}]}\|\nabla^2 f_i(w_0 - \eta v_0)\|$
		\ENDFOR
        \STATE $L^0 = \max_{i \in \n} L^0_i$ and $\eta_0 = \frac{1}{L^0}$
        \FOR{t = 1, ... , m}
        \STATE $w_t = w_{t-1} - \eta_{t-1} v_{t-1}$
		\STATE Sample $i_t$ uniformly at random from $\n$
        \STATE  $v_t = v_{t-1} + \nabla f_{i_t}(w_t) - \nabla f_{i_t}(w_{t-1})$
        \FOR{$i \in \n$}
        \STATE $L^t_i = \max_{\eta \in [0, \frac{1}{L^t_i}]}\|\nabla^2 f_i(w_t - \eta v_t)\|$
        \ENDFOR
         \STATE $L^t = \max_{i \in \n} L^t_i$
        \STATE $\eta_t = \min\left\{\frac{1}{L^{t}}, \frac{L^{t-1}}{ L^{t}}\eta_{t-1}\right\}$
		\ENDFOR
        \STATE \textbf{Set} $\tilde{w}_k = w_t$ where $t$ is chosen with probability $q_t$ from $\{0,1,...,m\}$
		\ENDFOR
	\end{algorithmic}
	\label{algo:aisarah-theory-uniform}
\end{algorithm}

		
		

\begin{lemma}
\label{lemmaTechnical_1_uniform}
Consider $v_t$ defined in Algorithm~\ref{algo:aisarah-theory-uniform}. Then for any $t\geq1$ in Algorithm~\ref{algo:aisarah-theory-uniform}, it holds that
\begin{align*}
\EE[\|\nabla P(w_t) - v_t\|^2] = \sum_{j=1}^t \EE[\|v_j - v_{j-1}\|^2] - \sum_{j=1}^t\EE[\|\nabla P(w_j) - \nabla P(w_{j-1})\|^2].
\end{align*}
\end{lemma}
\begin{proof}
The proof is the same as that of Lemma \ref{lemmaTechnicalLam_1_ls} except that we have $p^{j-1}_{i_j} = \frac{1}{n}$ in \eqref{lemma:trivial} for uniform sampling.
\end{proof}

\begin{lemma}
\label{Lemma_BoundGradient_ls_uniform}
Fix a outer loop $k \geq 1$ and consider Algorithm~\ref{algo:aisarah-theory-uniform} with $\eta_t \leq 1/L^{t}$ for any $t \in [m]$. Under Assumption \ref{smoothness_on_ls},
\begin{equation*}
    \sum_{t=0}^{m}\frac{\eta_t}{2}\EE[\| \nabla P(w_{t})\|^2]
    \leq  \EE[P(w_0)-P(w^*)]  +  \sum_{t=0}^{m}\frac{\eta_t}{2}\EE[\| \nabla P(w_{t})-v_{t}\|^2].
\end{equation*}
\end{lemma}
\begin{proof}
By Assumption \ref{smoothness_on_ls} and the update rule $w_t = w_{t-1} - \eta_{t-1} v_{t-1}$ of Algorithm~\ref{algo:aisarah-theory-uniform}, we obtain
\begin{eqnarray*}
   P(w_{t}) &\leq & P(w_{t-1})- \eta_{t-1} \langle \nabla P(w_{t-1}), v_{t-1} \rangle+ \frac{L^{t-1}}{2} \eta_{t-1}^2 \|v_{t-1}\|^2 \notag\\
   &= & P(w_{t-1})- \frac{\eta_{t-1}}{2}\| \nabla P(w_{t-1})\|^2 + \frac{\eta_{t-1}}{2}\| \nabla P(w_{t-1})-v_{t-1}\|^2 - \left(\frac{\eta_{t-1}}{2} - \frac{L^{t-1}}{2} \eta_{t-1}^2 \right) \|v_{t-1}\|^2,
\end{eqnarray*}
where, in the equality above, we use the fact that $\langle a,b\rangle = \frac{1}{2}(\|a\|^2 + \|b\|^2 - \|a-b\|^2)$.

By assuming that $\eta_{t-1} \leq \frac{1}{L^{t-1}}$, it holds  $ \left(1 - L^{t-1} \eta_{t-1} \right)\geq0$, $\forall t \in [m]$. Thus,
\begin{eqnarray*}
 \frac{\eta_{t-1}}{2} \| \nabla P(w_{t-1})\|^2 
   &\leq&  [P(w_{t-1})-P(w_{t}) ] + \frac{\eta_{t-1}}{2}\| \nabla P(w_{t-1})-v_{t-1}\|^2 - \frac{\eta_{t-1}}{2} \left(1 - L^{t-1} \eta_{t-1}\right) \|v_{t-1}\|^2.
\end{eqnarray*}
By taking expectations
\begin{eqnarray*}
 \EE[\frac{\eta_{t-1}}{2}\| \nabla P(w_{t-1})\|^2]
   &\leq&   \EE[P(w_{t-1})]-\EE[P(w_{t})]  + \frac{\eta_{t-1}}{2}\EE[\| \nabla P(w_{t-1})-v_{t-1}\|^2]\\ &&-  \frac{\eta_{t-1}}{2}\left(1 - L^{t-1} \eta_{t-1} \right) \EE[\|v_{t-1}\|^2]\notag\\
   &\overset{\eta_{t-1} \leq \frac{1}{L^{t-1}}}{\leq}&  \EE[P(w_{t-1})]-\EE[P(w_{t})]  + \frac{\eta_{t-1}}{2}\EE[\| \nabla P(w_{t-1})-v_{t-1}\|^2].
\end{eqnarray*}
Summing over $t=1,2,\dots, m+1$, we have
\begin{eqnarray*}
 \sum_{t=1}^{m+1}\frac{\eta_{t-1}}{2}\EE[\| \nabla P(w_{t-1})\|^2]
   &\leq&   \sum_{t=1}^{m+1} \EE[P(w_{t-1})-P(w_{t})] +  \sum_{t=1}^{m+1}\frac{\eta_{t-1}}{2}\EE[\| \nabla P(w_{t-1})-v_{t-1}\|^2]\notag\\
   &=&  \EE[P(w_{0})-P(w_{m+1})] +  \sum_{t=1}^{m+1}\frac{\eta_{t-1}}{2} \EE[\| \nabla P(w_{t-1})-v_{t-1}\|^2\notag\\
    &\leq &   \EE[P(w_{0})-P(w_{*})] +  \sum_{t=1}^{m+1}\frac{\eta_{t-1}}{2}\EE[\| \nabla P(w_{t-1})-v_{t-1}\|^2],
\end{eqnarray*}
where the last inequality holds since $w^*$ is the global minimum of $P.$

The last expression can be equivalently written as
\begin{eqnarray*}
 \sum_{t=0}^{m}\frac{\eta_{t}}{2}\EE[\| \nabla P(w_{t})\|^2]
    &\leq &  \EE[P(w_{0})-P(w_{*})] + \sum_{t=0}^{m}\frac{\eta_{t}}{2}\EE[\| \nabla P(w_{t})-v_{t}\|^2],
 \end{eqnarray*}
 which completes the proof.
\end{proof}

\begin{lemma}
\label{Lemma_boundonDifference2_ls_uniform}
Consider Algorithm~\ref{algo:aisarah-theory-uniform} with $\eta_t =  \min\left\{\frac{1}{L^{t}}, \frac{L^{t-1}}{L^{t}}\eta_{t-1}\right\}$.  Suppose $f_i$ is convex for all $i \in [n]$. Then, under Assumption~\ref{smoothness_on_ls}, for any $t\geq1$,
\begin{align*}
    \EE[\|\nabla P(w_t) - v_t\|^2] \leq \left(\frac{\eta_{0} L^{0}}{2-\eta_{0} L^{0}}\right) \EE[\|v_{0}\|^2].
\end{align*}
\end{lemma}

\begin{proof}
\begin{eqnarray*}
 \EE_{i_j}\left[\|v_j\|^2\right] &\leq&   \EE_{i_j}\left[\|v_{j-1} - \left(\nabla f_{i_j}(w_{j-1}) - \nabla f_{i_j}(w_j) \right)\|^2\right]\notag\\
   &=& \|v_{j-1}\|^2 +  \EE_{i_j}\left[\|\nabla f_{i_j}(w_{j-1}) - \nabla f_{i_j}(w_j)\|^2\right] 
   \\&&\quad-  \EE_{i_j}\left[\frac{2}{\eta_{j-1}} \left\langle\nabla f_{i_j}(w_{j-1}) - \nabla f_{i_j}(w_j), w_{j-1} - w_j\right\rangle\right]\notag\\
   &\leq& \|v_{j-1}\|^2 +  \EE_{i_j}\left[\|\nabla f_{i_j}(w_{j-1}) - \nabla f_{i_j}(w_j)\|^2\right] 
   \\&&-  \EE_{i_j}\left[\frac{2}{\eta_{j-1} L^{j-1}_{i_j}} \|\nabla f_{i_j}(w_{j-1}) - \nabla f_{i_j}(w_j)\|^2\right].\notag\\
\end{eqnarray*}
For each outer loop $k\geq 1$, it holds that $L^{j-1}_{i_j} \leq L^{j-1} $. Thus,
\begin{eqnarray*}
    \EE_{i_j}[\|v_j\|^2] &\leq&  \|v_{j-1}\|^2 +  \EE_{i_j}\left[\left\|\nabla f_{i_j}(w_{j-1}) - \nabla f_{i_j}(w_j)\right\|^2\right] 
    \\&&-  \frac{2}{\eta_{j-1} L^{j-1}}\EE_{i_j}\left[ \left\|\nabla f_{i_j}(w_{j-1}) - \nabla f_{i_j}(w_j)\right\|^2\right]\notag\\
    &=&  \|v_{j-1}\|^2 + \left(1- \frac{2}{\eta_{j-1} L^{j-1}}\right)  \EE_{i_j}\left[\left\|\nabla f_{i_j}(w_{j-1}) - \nabla f_{i_j}(w_j)\right\|^2\right]\notag\\
    &=&  \|v_{j-1}\|^2 + \left(1- \frac{2}{\eta_{j-1} L^{j-1}}\right)  \EE_{i_j}\left[\left\|v_j - v_{j-1}\right\|^2\right].\notag\\
    &\leq&  \|v_{j-1}\|^2 + \left(1- \frac{2}{\eta_{j-1} L^{j-1}}\right)  \EE_{i_j}\left[\left\|v_j - v_{j-1}\right\|^2\right].\notag\\
\end{eqnarray*}

By rearranging, taking expectations again, and assuming that $\eta_{j-1} < 2/ L^{j-1}$ for any $j$ from $1$ to $t+1$,
\begin{eqnarray*}
 \EE[\|v_j - v_{j-1}\|^2] &\leq& \EE\left[\left(\frac{\eta_{j-1} L^{j-1}}{2-\eta_{j-1} L^{j-1}}\right) \left[\|v_{j-1}\|^2 -  \|v_j\|^2\right]\right].  \notag\\
\end{eqnarray*}

By summing the above inequality over $j=1,\dots, t$ $(t\geq1)$, we have
\begin{eqnarray}
\label{eq:lemma-10}
 \sum_{j=1}^t  \EE[\|v_j - v_{j-1}\|^2] &\leq&   \sum_{j=1}^t\EE\left[\left(\frac{\eta_{j-1} L^{j-1}}{2-\eta_{j-1} L^{j-1}}\right)\left[\|v_{j-1}\|^2 - \|v_j\|^2 \right]\right]  \notag\\
 &=&  \left(\frac{\eta_{0} L^{0}}{2-\eta_{0} L^{0}}\right) \EE\left[\|v_{0}\|^2\right] - \left(\frac{\eta_{t-1} L^{t-1}}{2-\eta_{t-1} L^{t-1}}\right) \EE\left[\|v_{t}\|^2\right] \notag\\&&- \sum_{j=1}^{t-1}\EE\left[\left(\frac{\eta_{j-1} L^{j-1}}{2-\eta_{j-1} L^{j-1}} - \frac{\eta_{j} L^{j}}{2-\eta_{j} L^{j}}\right)\|v_{j}\|^2  \right]\notag\\
 &\leq& \left(\frac{\eta_{0} L^{0}}{2-\eta_{0}L^{0}}\right) \EE\left[\|v_{0}\|^2\right].
\end{eqnarray}

Now, by using Lemma~\ref{lemmaTechnical_1_uniform} and \eqref{eq:lemma-10}, we obtain
\begin{eqnarray*}
\EE[\|\nabla P(w_t) - v_t\|^2] &{\leq}& \sum_{j=1}^t \EE\left[\|v_j - v_{j-1}\|^2\right]\notag\\& {\leq} &\left(\frac{\eta_{0} L^{0}}{2-\eta_{0} L^{0}}\right) \EE\left[\|v_{0}\|^2\right].
\end{eqnarray*}
\end{proof}
Using the above lemmas, we can present one of our main results in the following theorem.
\begin{theorem}\label{the:converge_ls_uniform} Suppose that Assumption~\ref{smoothness_on_ls}, \ref{mu_convexity_ls}, holds. 
Let us define 
\begin{equation*}
    \sigma^k_m=\left(\frac{1}{\mu \mathcal{H}} +\frac{\eta_{0} L^{0}}{2-\eta_{0} L^{0}}\right),
\end{equation*}
and select $m$ and $\eta$ such that $\sigma^k_m<1$. Then, Algorithm \ref{algo:aisarah-theory-uniform} converges as follows
\begin{equation*}
    \EE[\| \nabla P(\tilde{w}_{k})\|^2 \leq \left(\prod^{k}_{l=1}\sigma^l_m\right) \|\nabla P(\tilde{w}_{0})\|^2.
\end{equation*}
\end{theorem}
\begin{proof}
Since $v_0=\nabla P(w_0)$ implies $\|\nabla P(w_0)-v_0 \|^2=0$, then by Lemma~\ref{Lemma_boundonDifference2_ls_uniform}, we obtain:
\begin{equation*}
\sum_{t=0}^m \frac{\eta_t}{\mathcal{H}}\EE[\|\nabla P(w_t) - v_t\|^2]  \leq \left(\frac{\eta_{0} L^{0}}{2-\eta_{0} L^{0}}\right) \EE[\|v_{0}\|^2].
\end{equation*}
Combine this with Lemma~\ref{Lemma_BoundGradient_ls_uniform}, we have
\begin{eqnarray*}
 \sum_{t=0}^{m}\frac{\eta_t}{\mathcal{H}}\EE[\| \nabla P(w_{t})\|^2]
    &\leq &  \frac{2}{\mathcal{H}} \EE[P(w_{0})-P(w_{*})] +  \sum_{t=0}^{m}\frac{\eta_t}{\mathcal{H}}\EE[\| \nabla P(w_{t})-v_{t}\|^2]\notag\\
    &{\leq} &  \frac{2}{\mathcal{H}} \EE[P(w_{0})-P(w_{*})] + \frac{\eta_{0} L^{0}}{2-\eta_{0} L^{0}} \EE[\|v_{0}\|^2].
\end{eqnarray*}
Since we consider one outer loop, with $k\geq1$, we have $v_0=\nabla P(w_0)=\nabla P(\tilde{w}_{k-1})$ and $\tilde{w}_{k}=w_t$, where $t$ is drawn at random from $\{0,1,\dots, m\}$ with probabilities $q_t$. Therefore, the following holds,
\begin{eqnarray*}
 \EE[\| \nabla P(\tilde{w}_{k})\|^2]&=&\sum_{t=0}^{m}\sum_{t=0}^{m}\frac{\eta_t}{\mathcal{H}}\EE[\| \nabla P(w_{t})\|^2]\notag\\
    &{\leq} &  \frac{2}{\mathcal{H}} \EE[P(\tilde{w}_{k-1})-P(w_{*})] + \frac{\eta_{0} L^{0}}{2-\eta_{0} L^{0}} \EE[\|\nabla P(\tilde{w}_{k-1})\|^2]\notag\\
    &\leq & \left(\frac{1}{\mu \mathcal{H}} +\frac{\eta_{0} L^{0}}{2-\eta_{0} L^{0}}\right) \EE[\|\nabla P(\tilde{w}_{k-1})\|^2].\notag\\
\end{eqnarray*}
Let us use $\sigma^k_m=\left(\frac{1}{\mu \mathcal{H}} +\frac{\eta_{0} L^{0}}{2-\eta_{0} L^{0}}\right)$,
then the above expression can be written as
\begin{eqnarray*}
 \EE[\| \nabla P(\tilde{w}_{k})\|^2]
    &\leq & \sigma^k_m \EE[\|\nabla P(\tilde{w}_{k-1})\|^2].\notag\\
\end{eqnarray*}
By expanding the recurrence, we obtain
\begin{eqnarray*}
 \EE[\| \nabla P(\tilde{w}_{k})\|^2]
    &\leq &  \left(\prod^{k}_{l=1}\sigma^l_m\right)\|\nabla P(\tilde{w}_{0})\|^2.\notag\\
\end{eqnarray*}
This completes the proof.
\end{proof}

\newpage

\newpage

\section{Extended details on Numerical Experiment}\label{app:exp}
In this chapter, we present the extended details of the design, implementation and results of the numerical experiments. 
\subsection{Problem and Data}\label{sec:problem_data_app}
The machine learning tasks studied in the experiment are binary classification problems. As a common practice in the empirical research of optimization algorithms, the \textit{LIBSVM} datasets\footnote{\textit{LIBSVM} datasets are available at \url{https://www.csie.ntu.edu.tw/~cjlin/libsvmtools/datasets/}.} are chosen to define the tasks. Specifically, \textbf{we selected $10$ popular binary class datasets: \textit{ijcnn1, rcv1, news20, covtype, real-sim, a1a, gisette, w1a, w8a} and \textit{mushrooms}} (see Table \ref{tab:app_data} for basic statistics of the datasets).
\begin{table}[!h]
\vskip-10pt
\centering 
\caption{Summary of Datasets from \cite{chang2011libsvm}.}
\label{tab:app_data}
\begin{threeparttable}
\begin{tabular}{c c c c c}
\toprule
 Dataset    &  $d-1$ (\# feature) & $n$ (\# Train) & $n_{test}$ (\# Test) & \% Sparsity \\
 \hline
 \textit{ijcnn1}\tnote{1}   & 22 & 49,990 & 91,701 & 40.91 \\
 \hline
 \textit{rcv1}\tnote{1}   & 47,236 & 20,242 & 677,399 & 99.85 \\ 
 \hline
 \textit{news20}\tnote{2}  & 1,355,191 & 14,997 & 4,999 & 99.97 \\ 
 \hline
 \textit{covtype}\tnote{2}  & 54 & 435,759 & 145,253 & 77.88 \\
 \hline
  \textit{real-sim}\tnote{2}  & 20,958 & 54,231 & 18,078 & 99.76 \\ 
  \hline
 \hline
 \textit{a1a}\tnote{1}   & 123 & 1,605 & 30,956 & 88.73 \\ 
 \hline
 \textit{gisette}\tnote{1}   & 5,000 & 6,000 & 1,000 & 0.85 \\ 
 \hline
 \textit{w1a}\tnote{1}   & 300 & 2,477 & 47,272 & 96.11 \\ 
 \hline
 \textit{w8a}\tnote{1}   & 300 & 49,749 & 14,951 & 96.12 \\ 
 \hline
 \textit{mushrooms}\tnote{2}  & 112 & 6,093 & 2,031 & 81.25 \\
 \hline
 \bottomrule
\end{tabular}
\begin{tablenotes}\footnotesize
\item[1] dataset has default training/testing samples.
\item[2] dataset is randomly split by 75\%-training \& 25\%-testing.
\end{tablenotes}
\end{threeparttable}
\vskip-10pt
\end{table} 

\subsubsection{Data Pre-Processing}
Let $(\rchi_i,y_i)$ be a training (or testing) sample indexed by $i \in \n$ (or $i \in [n_{test}]$), where $\rchi_i \in \cR^{d-1}$ is a feature vector and $y_i$ is a label. We pre-processed the data such that $\rchi_i$ is of a unit length in Euclidean norm and $y_i \in \{-1, +1\}$.

\subsubsection{Model and Loss Function}
The selected model, $h_i: \cR^d \mapsto \cR$, is in the linear form
\begin{align*}
    h_i(\omega,\varepsilon) = \rchi_i^T\omega + \varepsilon, \quad \forall i \in \n, \tagthis \label{eq:model_app}
\end{align*}
where $\omega \in \cR^{d-1}$ is a weight vector and $\varepsilon \in \cR$ is a bias term. 

For simplicity of notation, from now on, we let $x_i \defeq [\rchi_i^T\; 1]^T \in \cR^d$ be an augmented feature vector, $w \defeq [\omega^T \; \varepsilon]^T \in \cR^d$ be a parameter vector, and $h_i(w) = x_i^Tw$ for $i \in \n$.

Given a training sample indexed by $i \in \n$, the loss function is defined as a logistic regression 
\begin{align*}
    f_i(w) = \log(1+\exp(-y_ih_i(w)) + \frac{\lambda}{2}\|w\|^2. \tagthis \label{eq:loss_app}
\end{align*}
In (\ref{eq:loss_app}), $\frac{\lambda}{2}\|w\|^2$ is the \mbox{$\ell^2$-regularization} of a particular choice of $\lambda > 0$, where we used $\lambda = \frac{1}{n}$ in the experiment; for the non-regularized case, $\lambda$ was set to $0$. Accordingly, the finite-sum minimization problem we aimed to solve is defined as 
\begin{align*}
\min_{w \in \cR^d}
    \bigg\{P(w) \defeq \frac{1}{n}\sum_{i=1}^n f_i(w)\bigg\}. \tagthis \label{eq:problem_app}
\end{align*}
Note that (\ref{eq:problem_app}) is a convex function. For the $\reglize{}$ case, i.e., $\lambda = 1/n$ in (\ref{eq:loss_app}), (\ref{eq:problem_app}) is \scon{$\mu$} and $\mu = \frac{1}{n}$. However, without the $\reg{}$, i.e., $\lambda = 0$ in (\ref{eq:loss_app}), (\ref{eq:problem_app}) is \scon{$\mu$} if and only if there there exists $\mu > 0$ such that $\nabla^2 P(w) \succeq \mu I$ for $w \in \cR^d$ (provided $\nabla P(w) \in \mathcal{C}$). 

\subsection{Algorithms}\label{sec:implementation_app}
This section provides the implementation details\footnote{Code will be made available upon publication.} of the algorithms, practical consideration, and discussions.
\subsubsection{Tune-free \textit{AI-SARAH}}
In Chapter \ref{sec:fully_adaptive} of the main paper, we introduced \textit{AI-SARAH} (see Algorithm \ref{algo:aisarah}), a tune-free and fully adaptive algorithm. \textbf{The implementation of Algorithm \ref{algo:aisarah} was quite straightforward, and we highlight the implementation of Line $10$ with details}:  for logistic regression, the one-dimensional (constrained optimization) sub-problem $\min_{\alpha>0} \xi_t(\alpha)$ can be approximately solved by computing the Newton step at $\alpha=0$, i.e., $\tilde \alpha_{t-1} =- \frac{\xi'_t(0)}{|\xi''_t(0)|}$. This can be easily implemented with automatic differentiation in Pytorch\footnote{For detailed description of the automatic differentiation engine in Pytorch, please see \url{https://pytorch.org/tutorials/beginner/blitz/autograd_tutorial.html}.}, and only two additional backward passes w.r.t $\alpha$ is needed. For function in some particular form, such as a linear least square loss function, an exact solution in closed form can be easily derived. 

As mentioned in Chapter \ref{sec:fully_adaptive}, we have an adaptive upper-bound, i.e., $\alpha_{max}$, in the algorithm. To be specific, the algorithm starts without an upper-bound, i.e., $\alpha_{max}=\infty$ on Line 3 of Algorithm \ref{algo:aisarah}. Then, $\alpha_{max}$ is updated per (inner) iteration. Recall in Chapter \ref{sec:fully_adaptive}, $\alpha_{max}$ is computed as a harmonic mean of the sequence, i.e., $\{\tilde \alpha_{t-1}\}$, and an exponential smoothing is applied on top of the simple harmonic mean. 

\textbf{Having an upper-bound stabilizes the algorithm from stochastic optimization perspective}. For example, when the training error of the randomly selected mini-batch at $w_t$ is drastically reduced or approaching zero, the one-step Newton solution in (\ref{eq:one-step-newton}) could be very large, i.e. $\tilde \alpha_{t-1} \gg 0$, which could be too aggressive to other mini-batch and hence Problem (\ref{MainProb}) prescribed by the batch. \textbf{On the other hand, making the upper-bound adaptive allows the algorithm to adapt to the local geometry and avoid restrictions} on using a large step-size when the algorithm tries to make aggressive progress with respect to Problem (\ref{MainProb}). With the adaptive upper-bound being derived by an \textbf{exponential smoothing of the harmonic mean, the step-size is determined by emphasizing the current estimate of local geometry while taking into account the history of the estimates}. The exponential smoothing further stabilizes the algorithm by balancing the trade-off of being locally focused (with respect to $f_{S_t}$) and globally focused (with respect to $P$).   

It is worthwhile to mention that \textbf{Algorithm \ref{algo:aisarah} does not require computing extra gradient of $f_{S_t}$ with respect to $w$ if compared with \textit{SARAH} and \textit{SARAH+}}. At each inner iteration, $t \geq 1$, Algorithm \ref{algo:aisarah} computes $\nabla f_{S_t}(w_{t-1} - \alpha v_{t-1})$ with $\alpha=0$ just as \textit{SARAH} and \textit{SARAH+} would compute $\nabla f_{S_t}(w_{t-1})$, and the only difference is that $\alpha$ is specified as a variable in Pytorch. After the adaptive step-size $\alpha_{t-1}$ is determined (Line 17), Algorithm \ref{algo:aisarah} computes $\nabla f_{S_t}(w_{t-1} - \alpha_{t-1} v_{t-1})$ just as \textit{SARAH} and \textit{SARAH+} would compute $\nabla f_{S_t}(w_t)$.

In Chapter \ref{sec:fully_adaptive} of the main paper, we discussed the sensitivity of Algorithm \ref{algo:aisarah} on the choice of $\gamma$. Here, we present the full results (on $10$ chosen datasets for both \reglize{} and non-regularized cases) in Figures \ref{fig:gamma_reg_app}, \ref{fig:gamma_reg_app_2}, \ref{fig:gamma_no_reg_app}, and \ref{fig:gamma_no_reg_app_2}. Note that, in this experiment, we chose $\gamma \in \{\frac{1}{64},\frac{1}{32},\frac{1}{16},\frac{1}{8},\frac{1}{4},\frac{1}{2}\}$, and for each $\gamma$, dataset and case, we used $10$ distinct random seeds and ran each experiment for $20$ effective passes. 
\begin{figure*}
    \centering
    \includegraphics[width=0.9\textwidth]{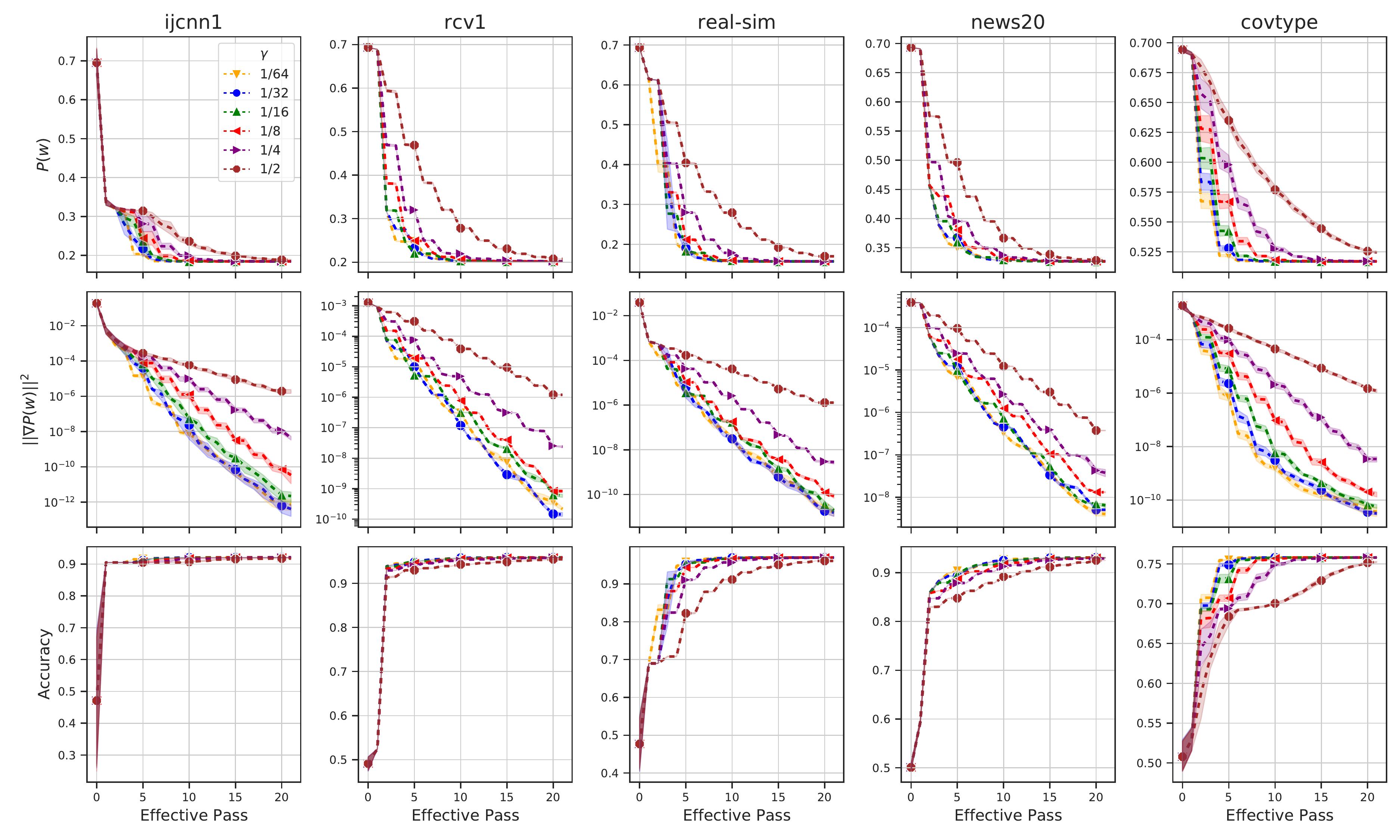}
    \caption{\reglize{} case \textit{ijcnn1, rcv1, real-sim, news20} and \textit{covtype} with $\gamma \in \{\frac{1}{64},\frac{1}{32},\frac{1}{16},\frac{1}{8},\frac{1}{4},\frac{1}{2}\}$: evolution of $P(w)$ (top row) and $\|\nabla P(w)\|^2$ (middle row) and running maximum of testing accuracy (bottom row).}
    \label{fig:gamma_reg_app}
\end{figure*}
\begin{figure*}
    \centering
    \includegraphics[width=0.9\textwidth]{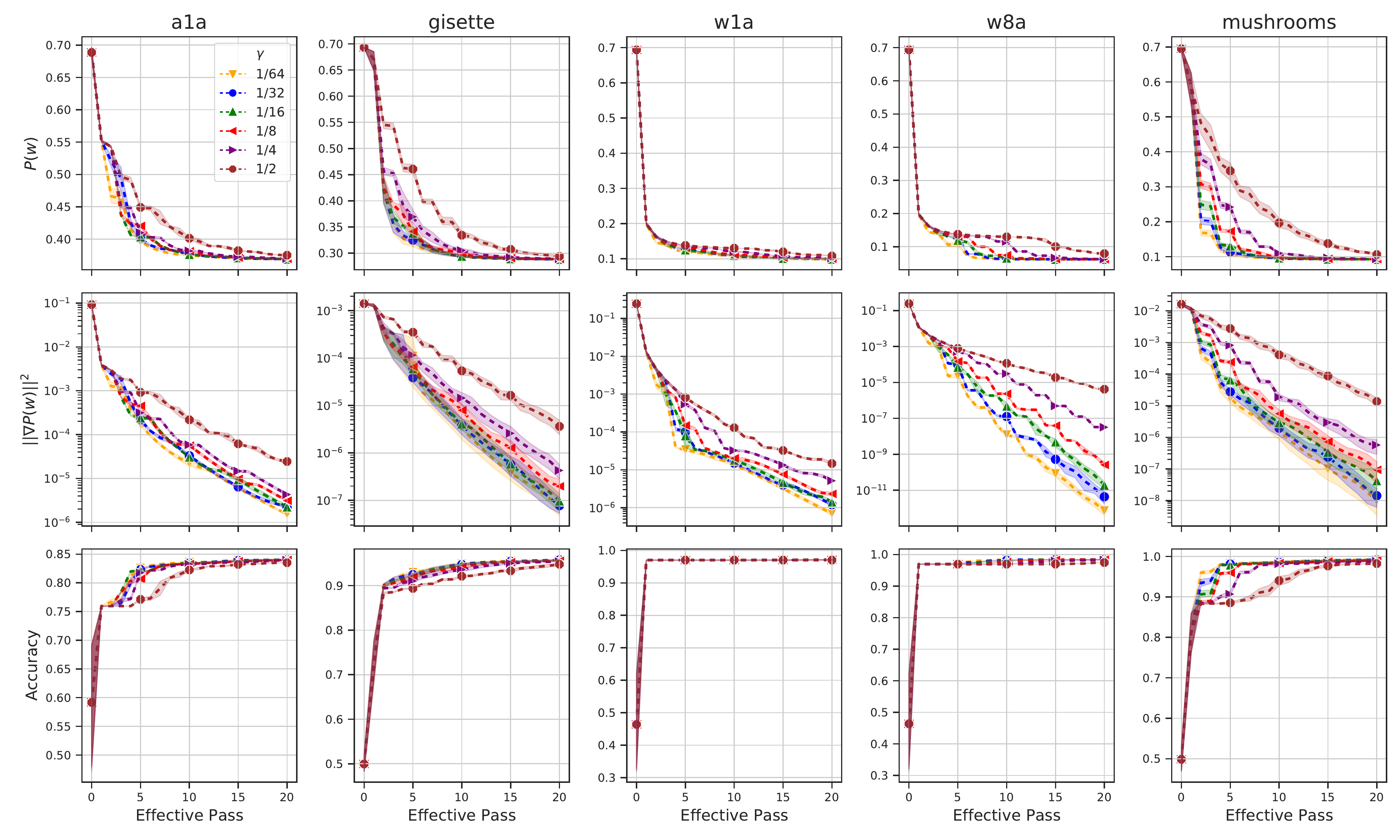}
    \caption{\reglize{} case of \textit{a1a, gisette, w1a, w8a} and \textit{mushrooms} with $\gamma \in \{\frac{1}{64},\frac{1}{32},\frac{1}{16},\frac{1}{8},\frac{1}{4},\frac{1}{2}\}$: evolution of $P(w)$ (top row) and $\|\nabla P(w)\|^2$ (middle row) and running maximum of testing accuracy (bottom row).}
    \label{fig:gamma_reg_app_2}
\end{figure*}

\begin{figure*}
    \centering
    \includegraphics[width=0.9\textwidth]{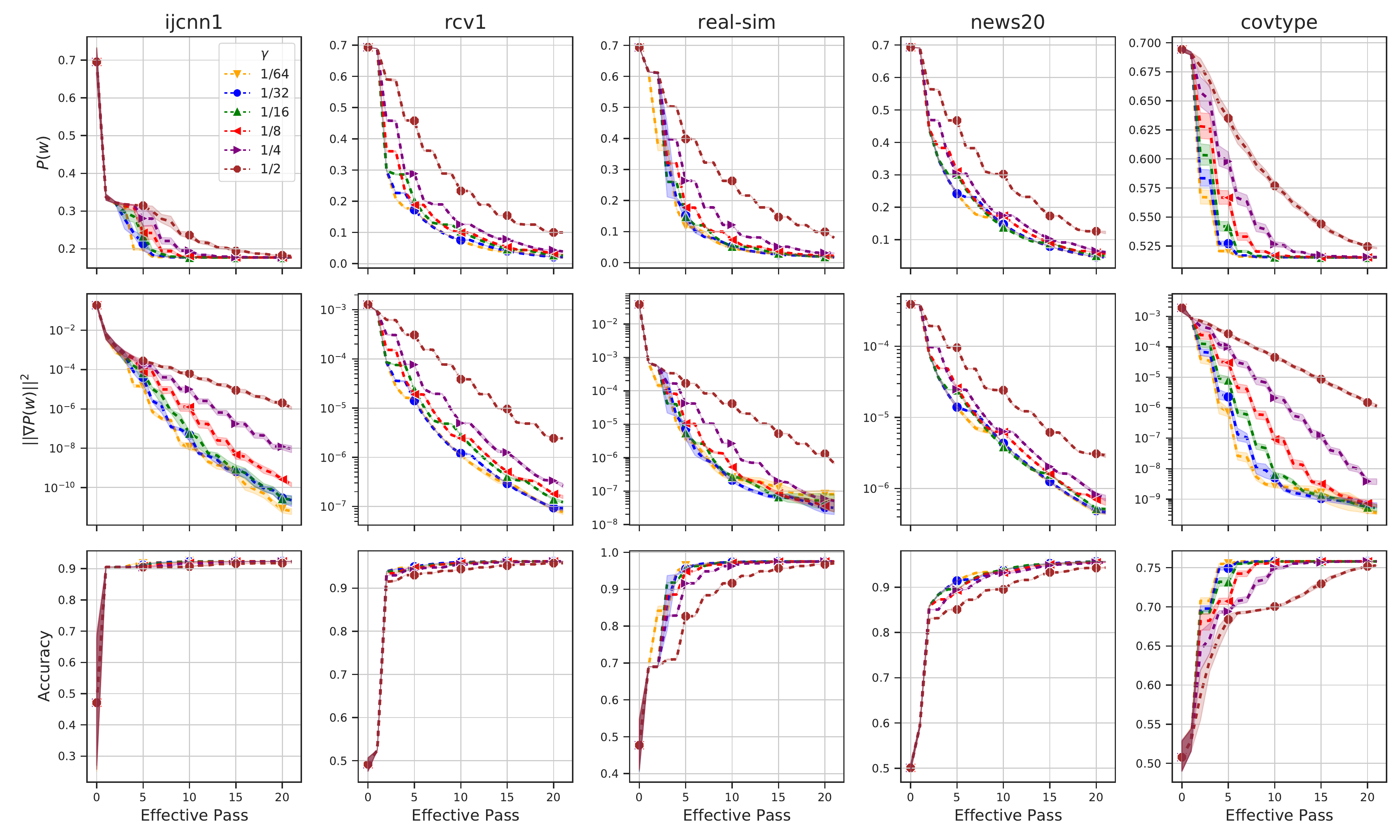}
    \caption{Non-regularized case \textit{ijcnn1, rcv1, real-sim, news20} and \textit{covtype} with $\gamma \in \{\frac{1}{64},\frac{1}{32},\frac{1}{16},\frac{1}{8},\frac{1}{4},\frac{1}{2}\}$: evolution of $P(w)$ (top row) and $\|\nabla P(w)\|^2$ (middle row) and running maximum of testing accuracy (bottom row).}
    \label{fig:gamma_no_reg_app}
\end{figure*}
\begin{figure*}
    \centering
    \includegraphics[width=0.9\textwidth]{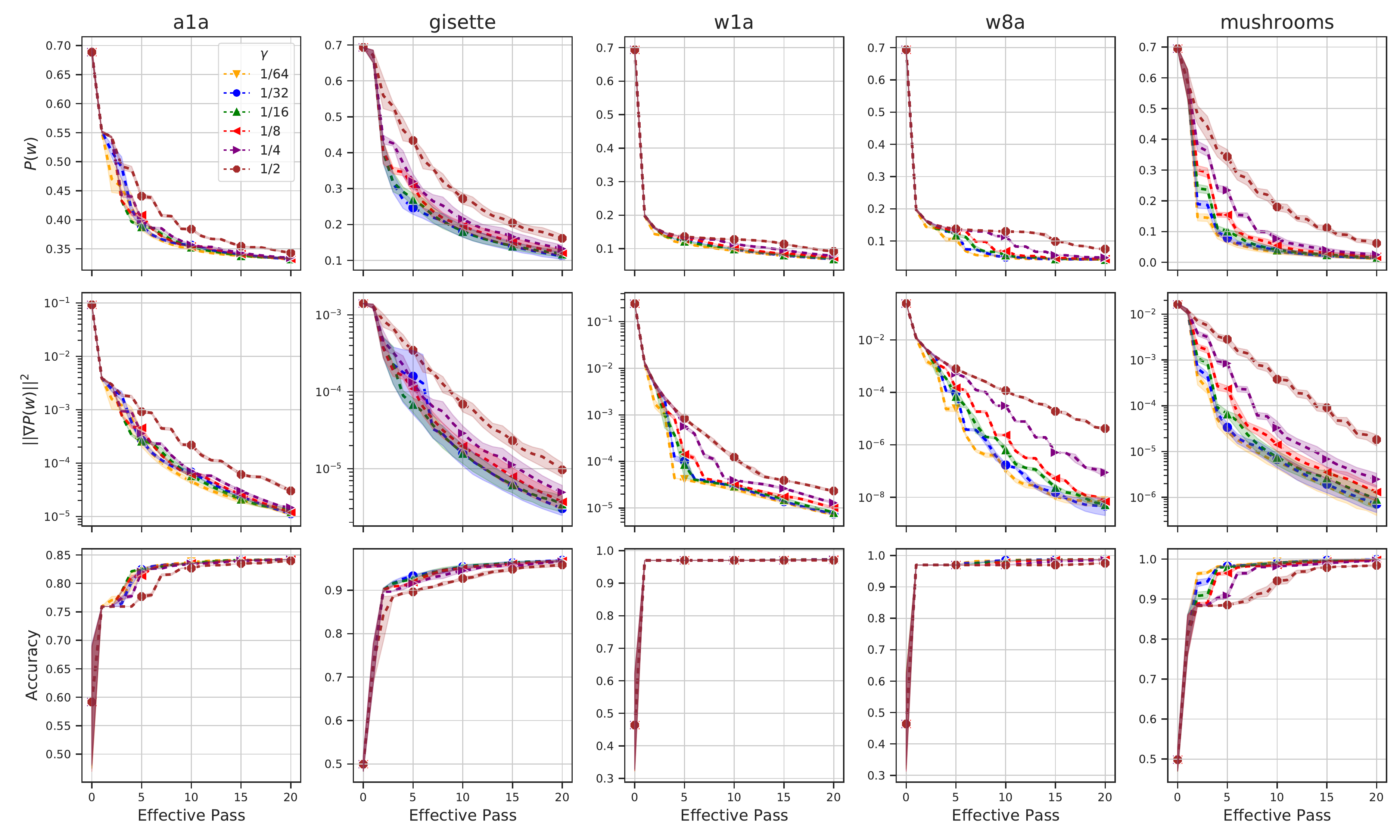}
    \caption{Non-regularized case \textit{a1a, gisette, w1a, w8a} and \textit{mushrooms} with $\gamma \in \{\frac{1}{64},\frac{1}{32},\frac{1}{16},\frac{1}{8},\frac{1}{4},\frac{1}{2}\}$: evolution of $P(w)$ (top row) and $\|\nabla P(w)\|^2$ (middle row) and running maximum of testing accuracy (bottom row).}
    \label{fig:gamma_no_reg_app_2}
\end{figure*}

\subsubsection{Other Algorithms}\label{sec:hyper-parameter-tuning}
In our numerical experiment, we compared the performance of \textbf{TUNE-FREE} \textit{AI-SARAH} (Algorithm \ref{algo:aisarah}) with that of 5 \textbf{FINE-TUNED} state-of-the-art (stochastic variance reduced or adaptive) first-order methods: \textit{SARAH}, \textit{SARAH}+, \textit{SVRG}, \textit{ADAM} and \textit{SGD} with Momentum (\textit{SGD} w/m). These algorithms were implemented in Pytorch, where \textit{ADAM} and \textit{SGD} w/m are built-in optimizers of Pytorch.
\paragraph{Hyper-parameter tuning.} For \textit{ADAM} and \textit{SGD} w/m, we selected $60$ different values of the (initial) step-size on the interval $[10^{-3},10]$ and $5$ different schedules to decrease the step-size after every effective pass on the training samples; for \textit{SARAH} and \textit{SVRG}, we selected $10$ different values of the (constant) step-size and $16$ different values of the inner loop size; for \textit{SARAH}+, the values of step-size were selected in the same way as that of \textit{SARAH} and \textit{SVRG}. In addition, we chose $5$ different values of the inner loop early stopping parameter. Table \ref{tab:fine-tune_app} presents the detailed tuning plan for these algorithms. 
\begin{table}
\centering 
\caption{Tuning Plan - Choice of Hyper-parameters.}
\vfill
\label{tab:fine-tune_app}
\begin{threeparttable}
\resizebox{\columnwidth}{!}{%
\begin{tabular}{c c c c c c}
\toprule
 Method    &  \# Configuration &  Step-Size & Schedule ($\%$)\tnote{1} & Inner Loop Size (\# Effective Pass) & Early Stopping ($\gamma$) \\
 \hline
  \textit{\textit{SARAH}} & $160$ & $\{0.1,0.2,...,1\}/L$ & n/a & $\{0.5,0.6,...,2\}$ & n/a \\ 
 \hline
 \textit{\textit{SARAH}+} & $50$  & $\{0.1,0.2,...,1\}/L$ & n/a & n/a & $1/\{2,4,8,16,32\}$ \\
 \hline
 \textit{\textit{SVRG}} & $160$ & $\{0.1,0.2,...,1\}/L$ & n/a & $\{0.5,0.6,...,2\}$ & n/a\\
 \hline
 \textit{\textit{ADAM}}\tnote{2} & $300$ & $[10^{-3}, 10]$ & $\{0,1,5,10,15\}$ & n/a & n/a \\
 \hline
 \textit{\textit{SGD} w/m}\tnote{3} & $300$ & $[10^{-3}, 10]$ & $\{0,1,5,10,15\}$ & n/a & n/a \\
 \hline
 \bottomrule
\end{tabular}
}
\begin{tablenotes}\footnotesize
\item[1] Step-size is scheduled to decrease by $X\%$ every effective pass over the training samples.
\item[2] $\beta_1 = 0.9, \beta_2 = 0.999$.
\item[3] $\beta = 0.9$.
\end{tablenotes}
\end{threeparttable}
\vskip-10pt
\end{table} 

\textbf{\textit{Selection criteria:}}

We defined the best hyper-parameters as the ones yielding the minimum ending value of the loss function, where the running budget is presented in Table \ref{tab:budget_app}. Specifically, the criteria are: (1) filtering out the ones exhibited a "spike" of the loss function, i.e., the initial value of the loss function is surpassed at any point within the budget; (2) selecting the ones achieved the minimum ending value of the loss function.
\begin{table}
\centering 
\caption{Running Budget (\# Effective Pass).}
\vfill
\label{tab:budget_app}
\begin{threeparttable}
\begin{tabular}{c c c}
\toprule
 Dataset    &  Regularized &  Non-regularized \\
 \hline
 \textit{ijcnn1} & 20  & 20 \\
 \hline
  \textit{rcv1} &  30 & 40 \\
 \hline
 \textit{news20} & 40 & 50 \\
 \hline
 \textit{covtype} & 20 & 20 \\
 \hline
 \textit{real-sim} & 20 & 30 \\
 \hline
 \hline
  \textit{a1a} &  30 & 40 \\
 \hline
 \textit{gisette} & 30  & 40 \\
 \hline
 \textit{w1a} & 40 & 50 \\
 \hline
 \textit{w8a} & 30 & 40 \\
 \hline
 \textit{mushrooms} & 30 & 40 \\
 \bottomrule
\end{tabular}
\end{threeparttable}
\vskip-10pt
\end{table} 

\textbf{\textit{Hightlights of the hyper-parameter search:}}
\begin{itemize}[noitemsep,nolistsep,topsep=0pt,leftmargin=10pt]
    \item To take into account the randomness in the performance of these algorithms provided different hyper-parameters, we ran each configuration with $5$ distinct random seeds. \textbf{The total number of runs for each dataset and case is $4,850$.}
    \item Tables \ref{tab:fine_tune_reg_app} and \ref{tab:fine_tune_noreg_app} present the best hyper-parameters selected from the candidates for the regularized and non-regularized cases.
    \item Figures \ref{fig:hyper_reg_app}, \ref{fig:hyper_reg_app_2}, \ref{fig:hyper_no_reg_app} and \ref{fig:hyper_no_reg_app_2} show the performance of different hyper-parameters for all tuned algorithms; it is clearly that, \textbf{the performance is highly dependent on the choices of hyper-parameter for \textit{SARAH}, \textit{SARAH}+, and \textit{SVRG}}. And, \textbf{the performance of \textit{ADAM} and \textit{SGD} w/m are very SENSITIVE to the choices of hyper-parameter}. 
\end{itemize}
\begin{table*}
\centering 
\caption{Fine-tuned Hyper-parameters - \reglize{} Case.}
\vfill
\label{tab:fine_tune_reg_app}
\begin{threeparttable}
\begin{tabular}{c c c c c c c}
\toprule
 Dataset    &  \textit{ADAM} & \textit{SGD} w/m & \textit{SARAH} & \textit{SARAH}+ & \textit{SVRG} \\
 & ($\alpha_0,x\%$) & ($\alpha_0,x\%$) & ($\alpha,m$) & ($\alpha,\gamma$) & ($\alpha,m$)\\
 \hline
 \textit{ijcnn1}   & (0.07,\;15\%) & (0.4,\;15\%) & (3.153,\;1015) & (3.503,\;1/32) & (3.503,\;1562) \\
 \hline
  \textit{rcv1}   & (0.016,\;10\%) & (4.857,\;10\%) & (3.924,\;600) & (3.924,\;1/32) & (3.924,\;632) \\ 
 \hline
 \textit{news20}  & (0.028,\;15\%) & (6.142,\;10\%) & (3.786,\;468) & (3.786,\;1/32) & (3.786,\;468) \\ 
 \hline
 \textit{covtype}  & (0.07,\;15\%) & (0.4,\;15\%) & (2.447,\;13616) & (2.447,\;1/32) & (2.447,\;13616) \\ 
 \hline
 \textit{real-sim}  & (0.16,\;15\%) & (7.428,\;15\%) & (3.165,\;762) & (3.957,\;1/32) & (3.957,\;1694) \\ 
 \hline
 \hline
 \textit{a1a}   & (0.7,\;15\%) & (4.214,\;15\%) & (2.758,\;50) & (2.758,\;1/32) & (2.758,\;50) \\
 \hline
  \textit{gisette}   & (0.028,\;15\%) & (8.714,\;10\%) & (2.320,\;186) & (2.320,\;1/16) & (2.320,\;186)\\ 
 \hline
 \textit{w1a}  & (0.1,\;10\%) & (3.571,\;10\%) & (3.646,\;60) & (3.646,\;1/32) & (3.646,\;76) \\ 
 \hline
 \textit{w8a}  & (0.034,\;15\%) & (2.285,\;15\%) & (2.187,\;543) & (3.645,\;1/32) & (3.645,\;1554) \\ 
 \hline
 \textit{mushrooms}  & (0.220,\;15\%) & (3.571,\;0\%) & (2.682,\;190) & (2.682,\;1/32) & (2.682,\;190) \\ 
 \hline
 \bottomrule
\end{tabular}
\end{threeparttable}
\end{table*} 
\begin{table*}
\centering 
\caption{Fine-tuned Hyper-parameters - Non-regularized Case.}
\vfill
\label{tab:fine_tune_noreg_app}
\begin{threeparttable}
\begin{tabular}{c c c c c c c}
\toprule
 Dataset    &  \textit{ADAM} & \textit{SGD} w/m & \textit{SARAH} & \textit{SARAH}+ & \textit{SVRG} \\
 & ($\alpha_0,x\%$) & ($\alpha_0,x\%$) & ($\alpha,m$) & ($\alpha,\gamma$) & ($\alpha,m$)\\
 \hline
 \textit{ijcnn1}   & (0.1,\;15\%) & (0.58,\;15\%) & (3.153,\;1015) & (3.503,\;1/32) & (3.503,\;1562) \\
 \hline
  \textit{rcv1}   & (5.5,\;10\%) & (10.0,\;0\%) & (3.925,\;632) & (3.925,\;1/32) & (3.925,\;632)\\ 
 \hline
 \textit{news20}  & (1.642,\;10\%) & (10.0,\;0\%) & (3.787,\;468) & (3.787,\;1/32) & (3.787,\;468) \\ 
 \hline
 \textit{covtype}  & (0.16,\;15\%) & (2.2857,\;15\%) & (2.447,\;13616) & (2.447,\;1/32) & (2.447,\;13616) \\ 
 \hline
 \textit{real-sim}  & (2.928,\;15\%) & (10.0,\;0\%) & (3.957,\;1609) & (3.957,\;1/16) & (3.957,\;1694) \\ 
 \hline
 \hline
 \textit{a1a}   & (1.642,\;15\%) & (6.785,\;1\%) & (2.763,\;50) & (2.763,\;1/32) & (2.763,\;50) \\
 \hline
  \textit{gisette}   & (2.285,\;1\%) & (10.0,\;0\%) & (2.321,\;186) & (2.321,\;1/32) & (2.321,\;186)\\ 
 \hline
 \textit{w1a}  & (8.714,\;10\%) & (10.0,\;0\%) & (3.652,\;76) & (3.652,\;1/32) & (3.652,\;76) \\ 
 \hline
 \textit{w8a}  & (0.16,\;10\%) & (10.0,\;5\%) & (2.552,\;543) & (3.645,\;1/32) & (3.645,\;1554) \\ 
 \hline
 \textit{mushrooms}  & (10.0,\;0\%) & (10.0,\;0\%) & (2.683,\;190) & (2.683,\;1/32) & (2.683,\;190) \\ 
 \hline
 \bottomrule
\end{tabular}
\end{threeparttable}
\end{table*} 

\begin{figure*}
    \centering
    \includegraphics[width=0.9\textwidth]{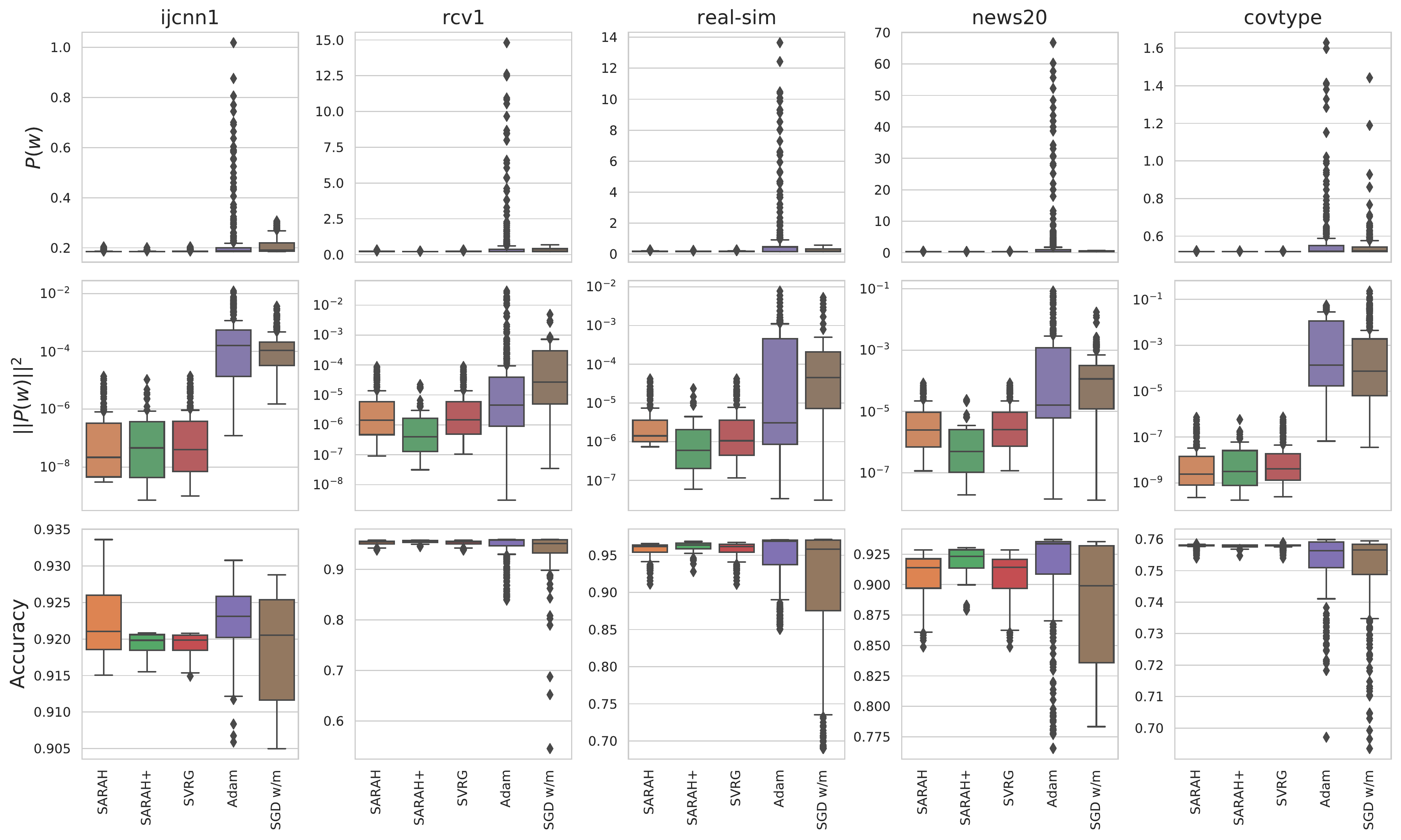}
    \caption{Ending loss (top row), ending squared norm of full gradient (middle row), maximum testing accuracy (bottom row) of different hyper-paramters and algorithms for the \textbf{\reglize{} case} on \textit{ijcnn1, rcv1, real-sim, news20} and \textit{covtype} datasets.}
    \label{fig:hyper_reg_app}
\end{figure*}
\begin{figure*}
    \centering
    \includegraphics[width=0.9\textwidth]{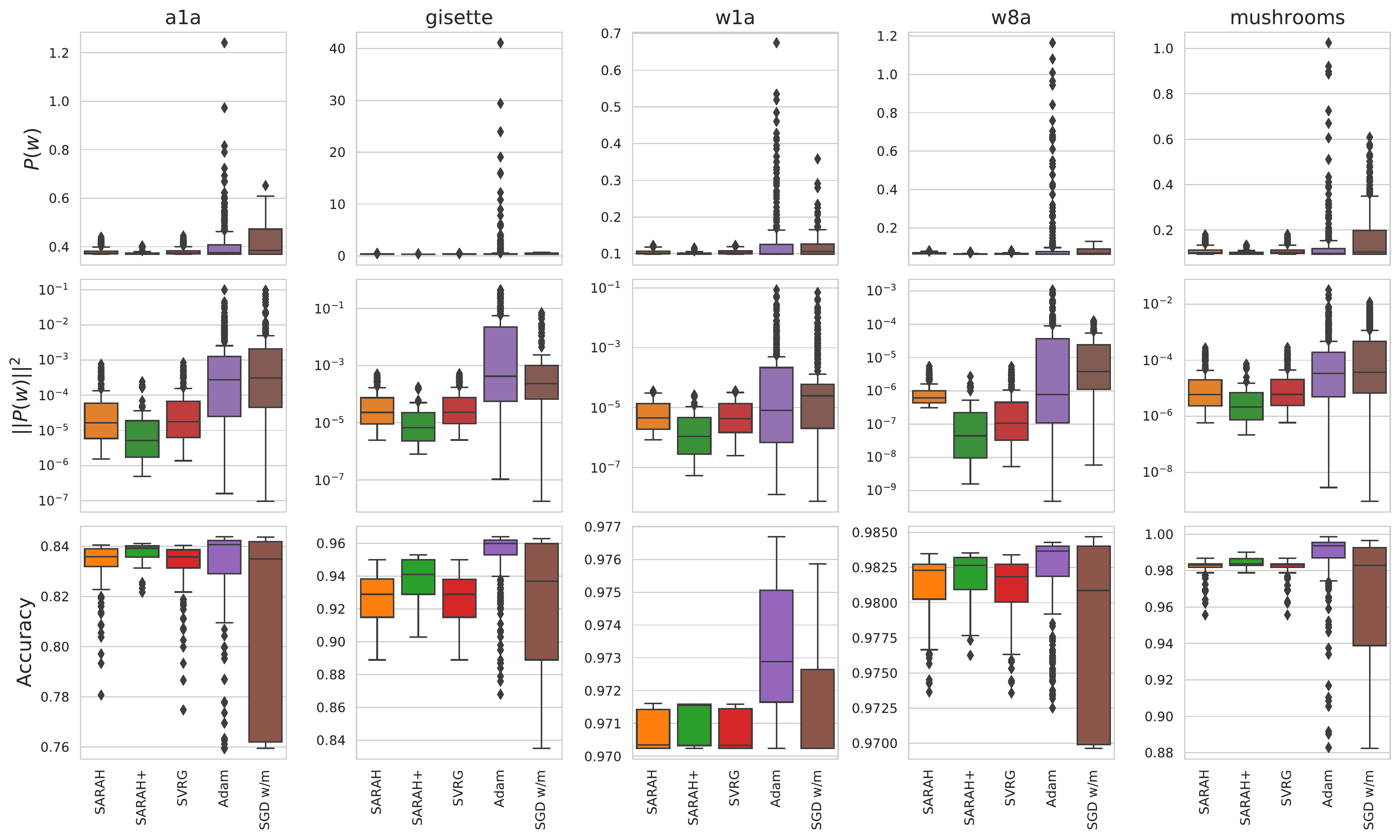}
    \caption{Ending loss (top row), ending squared norm of full gradient (middle row), maximum testing accuracy (bottom row) of different hyper-paramters and algorithms for the \textbf{\reglize{} case} on \textit{a1a, gisette, w1a, w8a} and \textit{mushrooms} datasets.}
    \label{fig:hyper_reg_app_2}
\end{figure*}

\begin{figure*}
    \centering
    \includegraphics[width=0.9\textwidth]{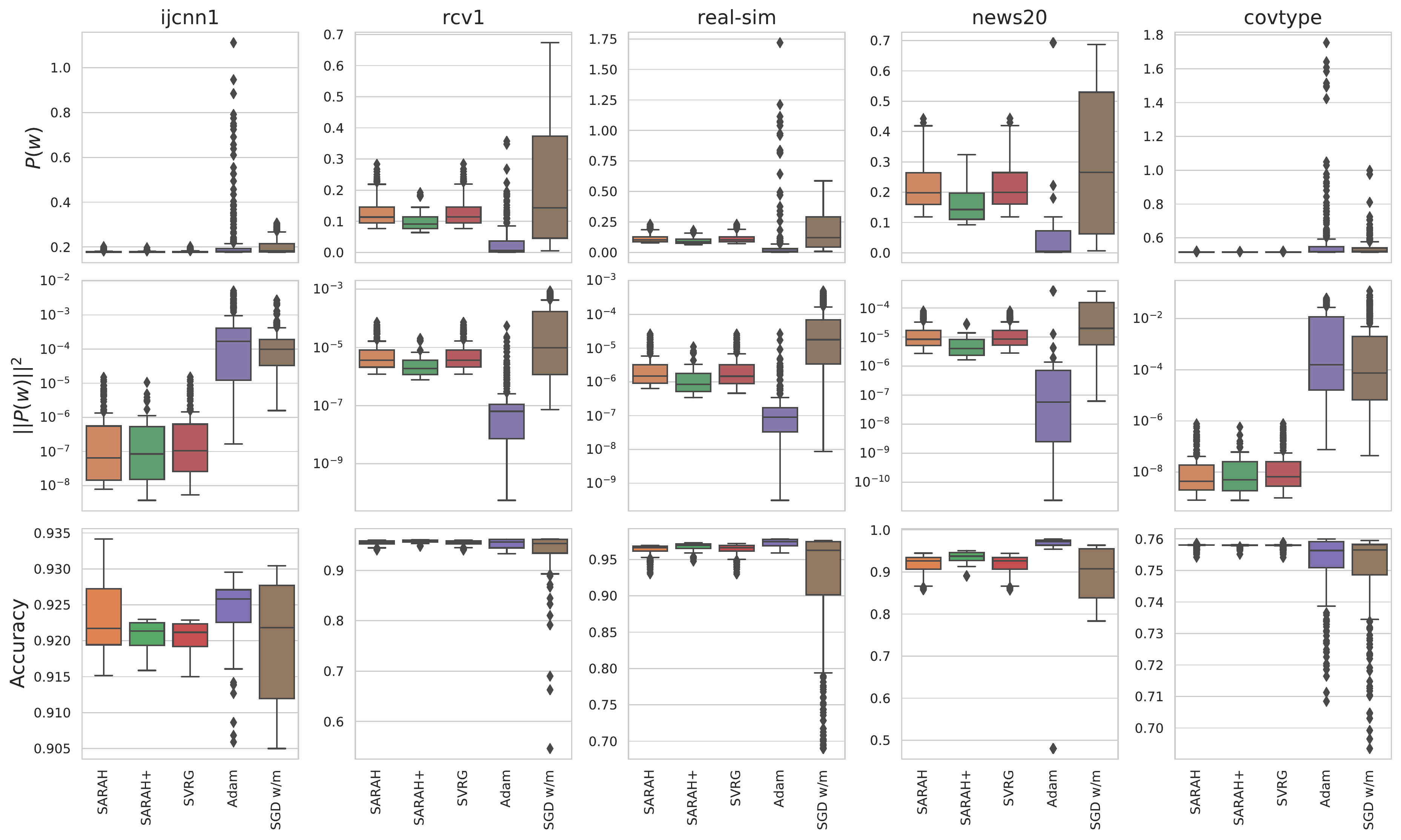}
    \caption{Ending loss (top row), ending squared norm of full gradient (middle row), maximum testing accuracy (bottom row) of different hyper-paramters and algorithms for the \textbf{non-regularized case} on \textit{ijcnn1, rcv1, real-sim, news20} and \textit{covtype} datasets.}
    \label{fig:hyper_no_reg_app}
\end{figure*}
\begin{figure*}
    \centering
    \includegraphics[width=0.9\textwidth]{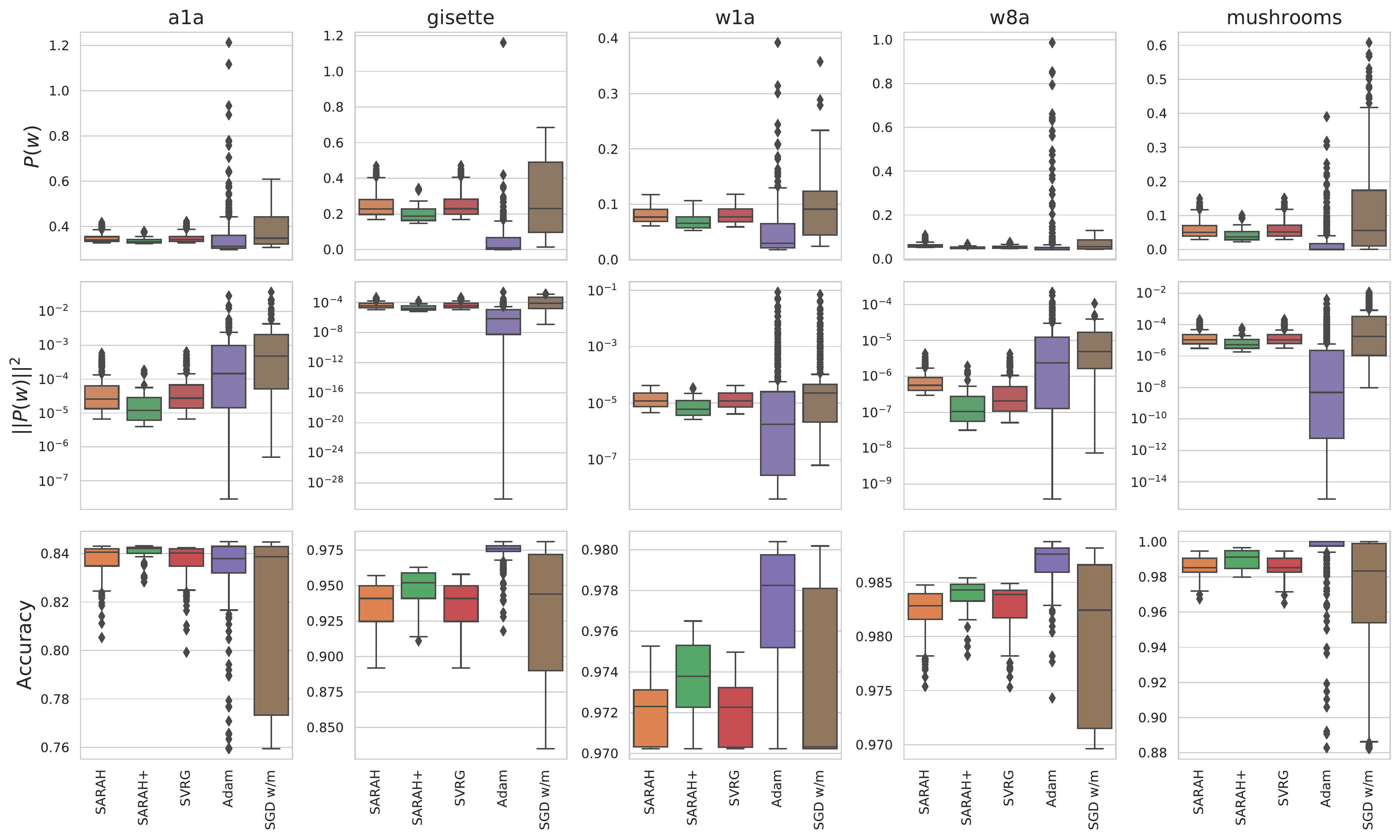}
    \caption{Ending loss (top row), ending squared norm of full gradient (middle row), maximum testing accuracy (bottom row) of different hyper-paramters and algorithms for the \textbf{non-regularized case} on \textit{a1a, gisette, w1a, w8a} and \textit{mushrooms} datasets.}
    \label{fig:hyper_no_reg_app_2}
\end{figure*}

\paragraph{Global Lipschitz smoothness of $P(w)$.} Tuning the (constant) step-size of \textit{SARAH}, \textit{SARAH}+ and \textit{SVRG} requires the parameter of (global) Lipschitz smoothness of $P(w)$, denoted the (global) Lipschitz constant $L$, and it can be computed as, given (\ref{eq:loss_app}) and (\ref{eq:problem_app}), 
\begin{align*}
    L = \frac{1}{4} \lambda_{max} (\frac{1}{n}\sum_{i=1}^n x_ix_i^T) + \lambda,
\end{align*}
where $\lambda_{max}(A)$ denotes the largest eigenvalue of $A$ and $\lambda$ is the penalty term of the \mbox{$\ell^2$-regularization} in (\ref{eq:loss_app}). Table \ref{tab:lip_app} shows the values of $L$ for the regularized and non-regularized cases on the chosen datasets. 
\begin{table*}
\centering 
\caption{Global Lipschitz Constant $L$}
\vfill
\label{tab:lip_app}
\begin{threeparttable}
\begin{tabular}{c c c}
\toprule
 Dataset    &  Regularized &  Non-regularized \\
 \hline
 \textit{ijcnn1} & 0.285408  & 0.285388 \\
 \hline
  \textit{rcv1} &  0.254812 & 0.254763 \\
 \hline
 \textit{news20} & 0.264119 & 0.264052 \\
 \hline
 \textit{covtype} & 0.408527 & 0.408525 \\
 \hline
 \textit{real-sim} & 0.252693 & 0.252675 \\
 \hline
 \hline
 \textit{a1a} & 0.362456  & 0.361833 \\
 \hline
  \textit{gisette} & 0.430994  & 0.430827 \\
 \hline
 \textit{w1a} & 0.274215 & 0.273811 \\
 \hline
 \textit{w8a} & 0.274301 & 0.274281 \\
 \hline
 \textit{mushrooms} & 0.372816 & 0.372652 \\
 \bottomrule
\end{tabular}
\end{threeparttable}
\end{table*} 

\clearpage

\subsection{Extended Results of Experiment}
In Chapter \ref{experiments}, we compared tune-free \& fully adaptive \textit{AI-SARAH} (Algorithm \ref{algo:aisarah}) with fine-tuned \textit{SARAH}, \textit{SARAH}+, \textit{SVRG}, \textit{ADAM} and \textit{SGD} w/m. In this section, we present the extended results of our empirical study on the performance of \textit{AI-SARAH}.

Figures \ref{fig:best_vs_ai_sarah_reg} and \ref{fig:best_vs_ai_sarah_no_reg} compare the average ending $\|\nabla P(w)\|^2$ achieved by \textit{AI-SARAH} with the other algorithms, configured with all candidate hyper-parameters. 

It is clear that,
\begin{itemize}[noitemsep,nolistsep,topsep=0pt,leftmargin=10pt]
    \item without tuning, \textit{AI-SARAH} achieves the best convergence (to a stationary point) in practice on most of the datasets for both cases;
    \item while fine-tuned \textit{ADAM} achieves a better result for the non-regularized case on \textit{a1a, gisette, w1a} and \textit{mushrooms}, \textit{AI-SARAH} outperforms \textit{ADAM} for at least $80\%$ (\textit{a1a}), $55\%$ (\textit{gisette}), $50\%$ (\textit{w1a}), and $50\%$ (\textit{mushrooms}) of all candidate hyper-parameters.
\end{itemize}

Figure \ref{fig:miss_data_reg_app} shows the results of the non-regularized case for \textit{ijcnn1, rcv1, real-sim, news20} and \textit{covtype} datasets. Figures \ref{fig:other_data_reg_app} and \ref{fig:other_data_no_reg_app} present the results of the \reglize{} case and non-regularized case respectively on \textit{a1a, gisette, w1a, w8a} and \textit{mushrooms} datasets. For completeness of presentation, we present the evolution of \textit{AI-SARAH}'s step-size and upper-bound on \textit{a1a, gisette, w1a, w8a} and \textit{mushrooms} datasets in Figures \ref{fig:step_reg_app} and \ref{fig:step_no_reg_app}. Consistent with the results shown in Chapter \ref{experiments} of the main paper, \textit{AI-SARAH} delivers a competitive performance in practice.

\begin{figure*}
\centering
    \includegraphics[width=0.8\textwidth]{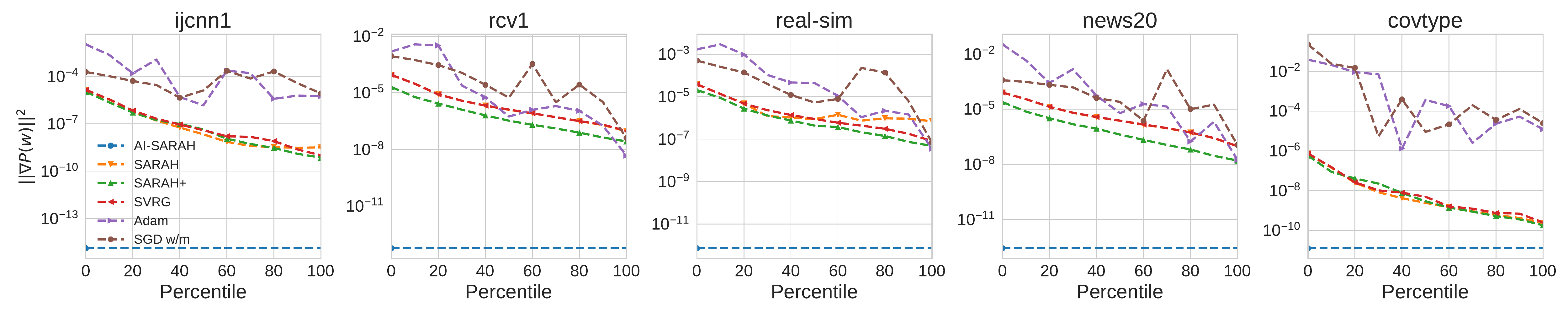}
    \includegraphics[width=0.8\textwidth]{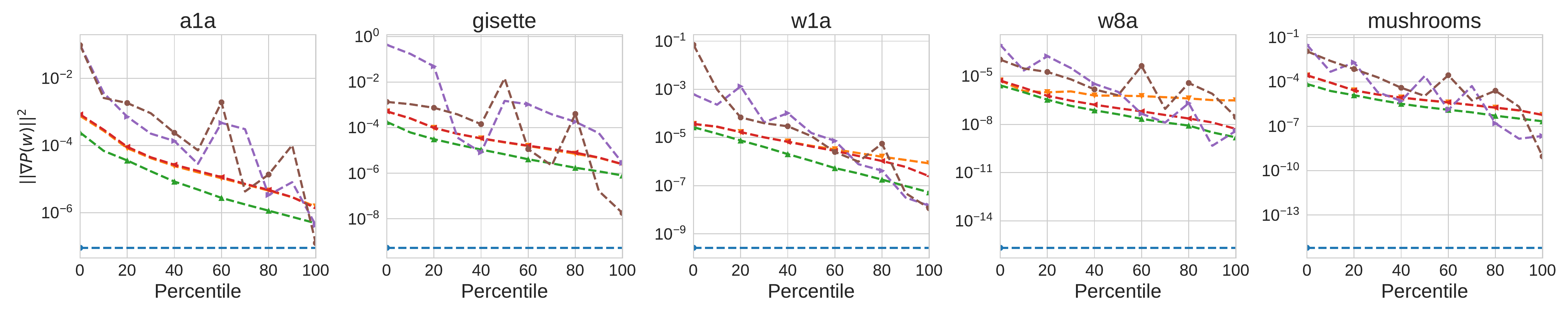}
    \caption{Average ending $\|\nabla P(w)\|^2$ for \reglize{} case - \textit{AI-SARAH} vs. Other Algorithms: \textit{\textit{AI-SARAH} is shown as the horizontal lines; for each of the other algorithms, the average ending $\|\nabla P(w)\|^2$ from different configurations of hyper-parameters are indexed from $0$ percentile (the worst choice) to $100$ percentile (the best choice); see Section \ref{sec:hyper-parameter-tuning} for details of the selection criteria}. }
    \label{fig:best_vs_ai_sarah_reg}
\end{figure*}
\begin{figure*}[!ht]
\vskip+20pt
 \centering
    \includegraphics[width=0.8\textwidth]{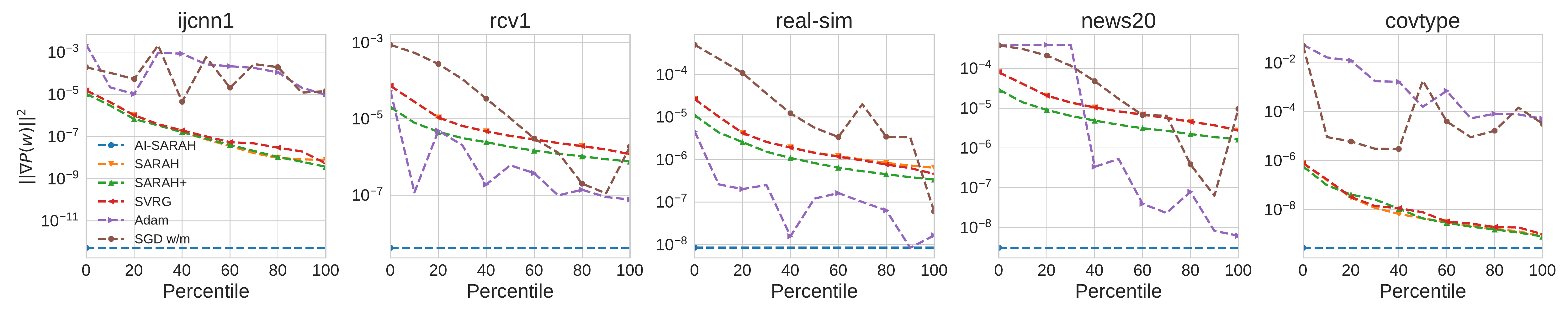}
    \includegraphics[width=0.8\textwidth]{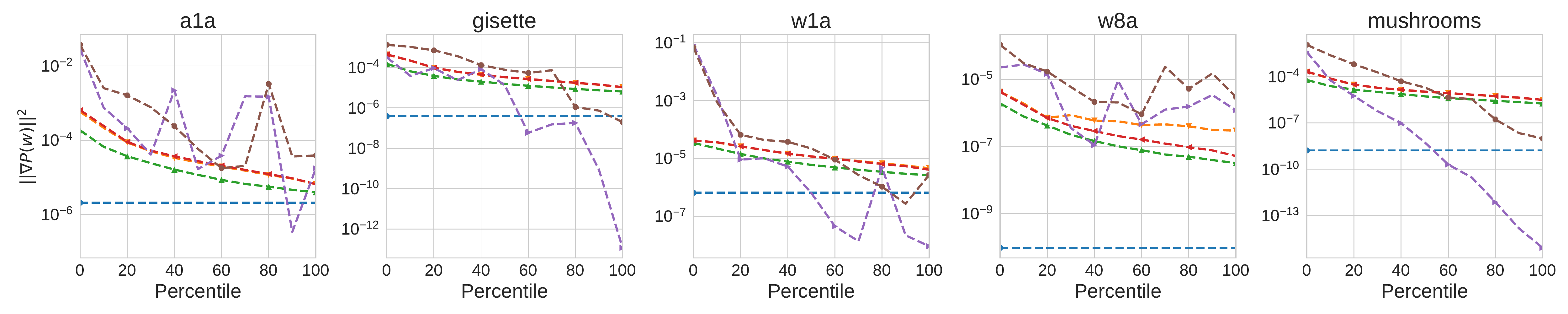}
    \caption{Average ending $\|\nabla P(w)\|^2$ for non-regularized case - \textit{AI-SARAH} vs. Other Algorithms.}
    \label{fig:best_vs_ai_sarah_no_reg}
\end{figure*}
\begin{figure*}
    \centering
    \includegraphics[width=0.9\textwidth]{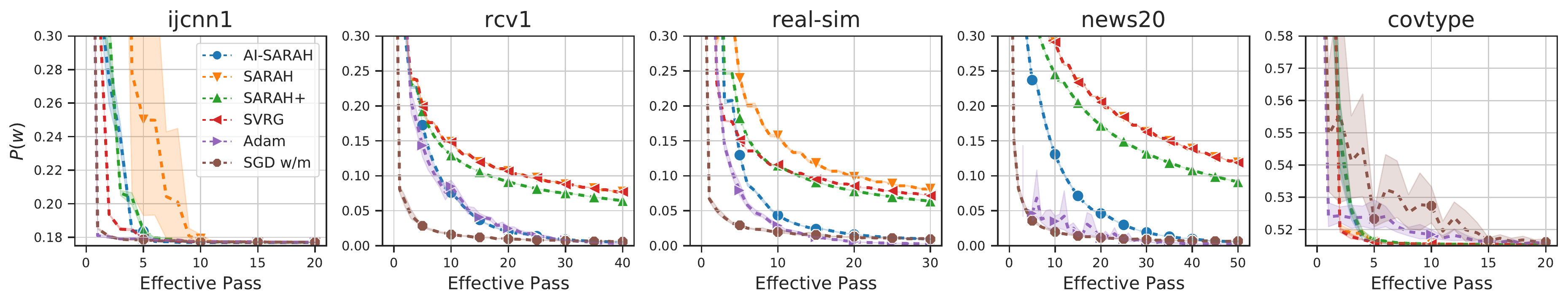}
    \includegraphics[width=0.9\textwidth]{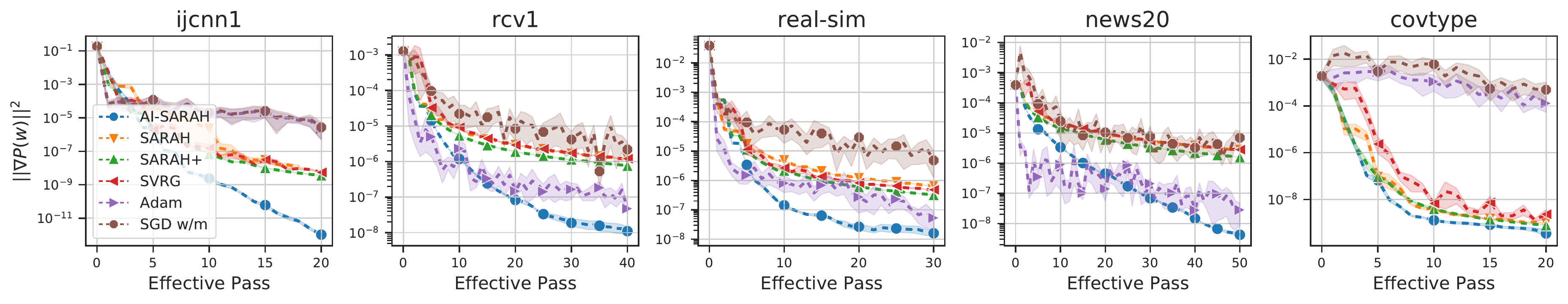}
    \includegraphics[width=0.9\textwidth]{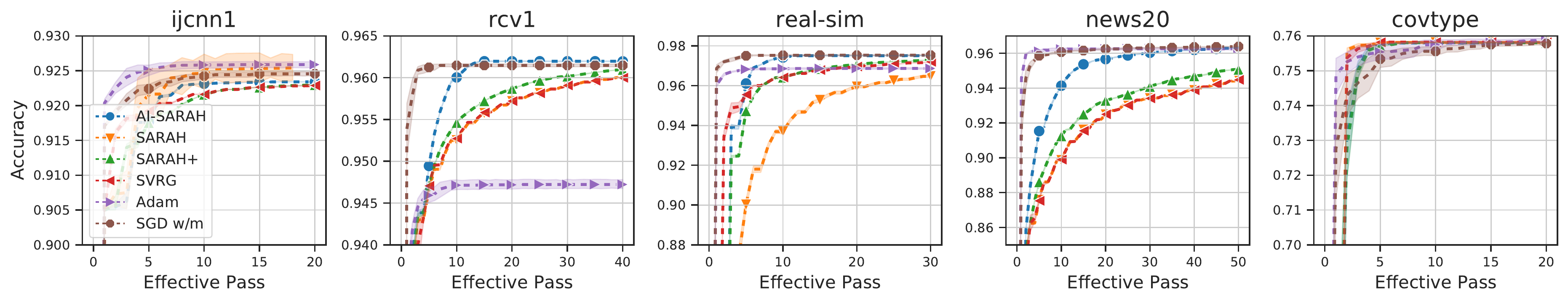}
    \caption{Non-regularized case: evolution of $P(w)$ (top row), $\|\nabla P(w)\|^2$ (middle row), and running maximum of testing accuracy (bottom row).}
    \label{fig:miss_data_reg_app}
\end{figure*} 

\begin{figure*}
    \centering
    \includegraphics[width=0.9\textwidth]{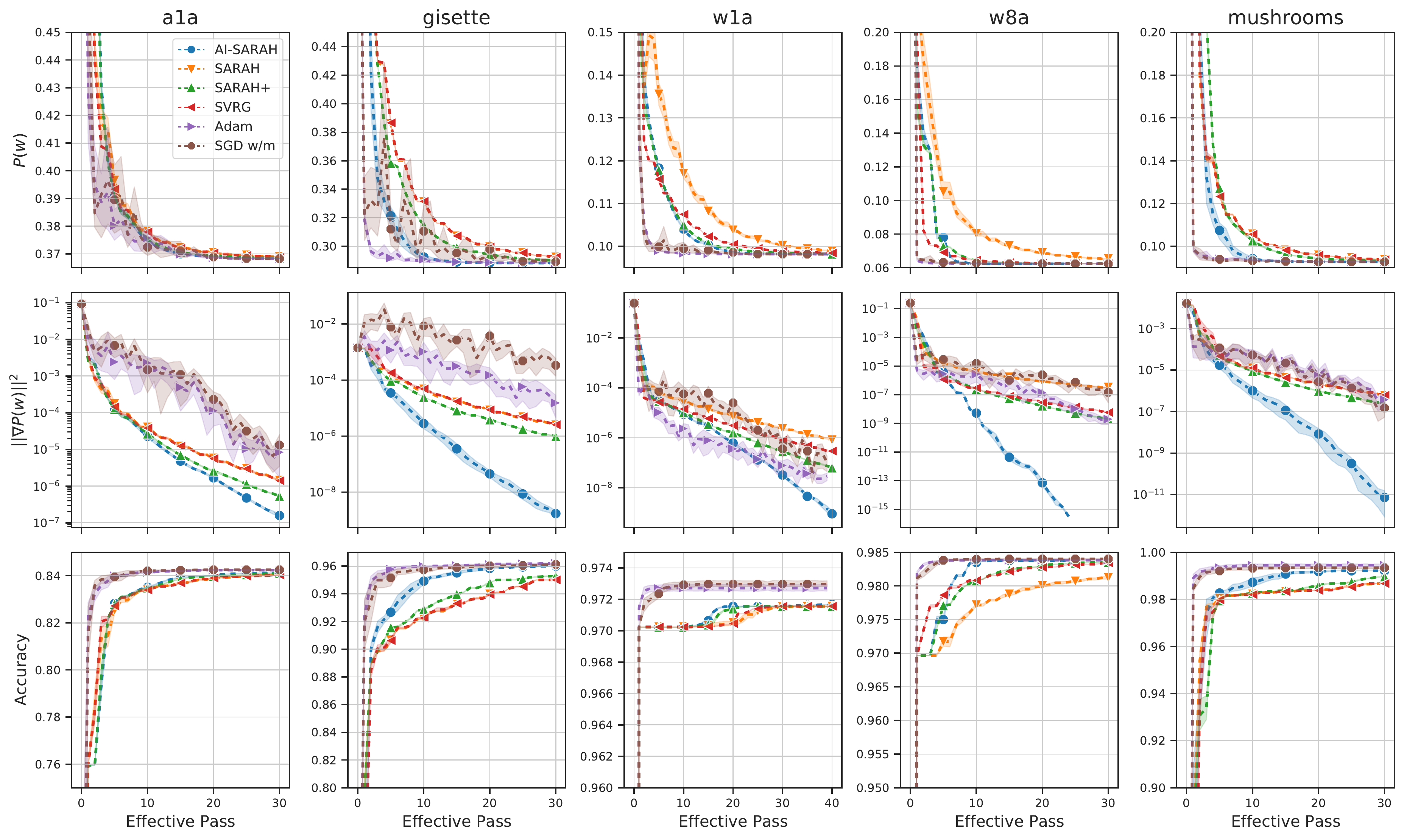}
    \caption{\reglize{} case: evolution of $P(w)$ (top row), $\|\nabla P(w)\|^2$ (middle row), and running maximum of testing accuracy (bottom row).}
    \label{fig:other_data_reg_app}
\end{figure*} 
\begin{figure*}
    \centering
    \includegraphics[width=0.9\textwidth]{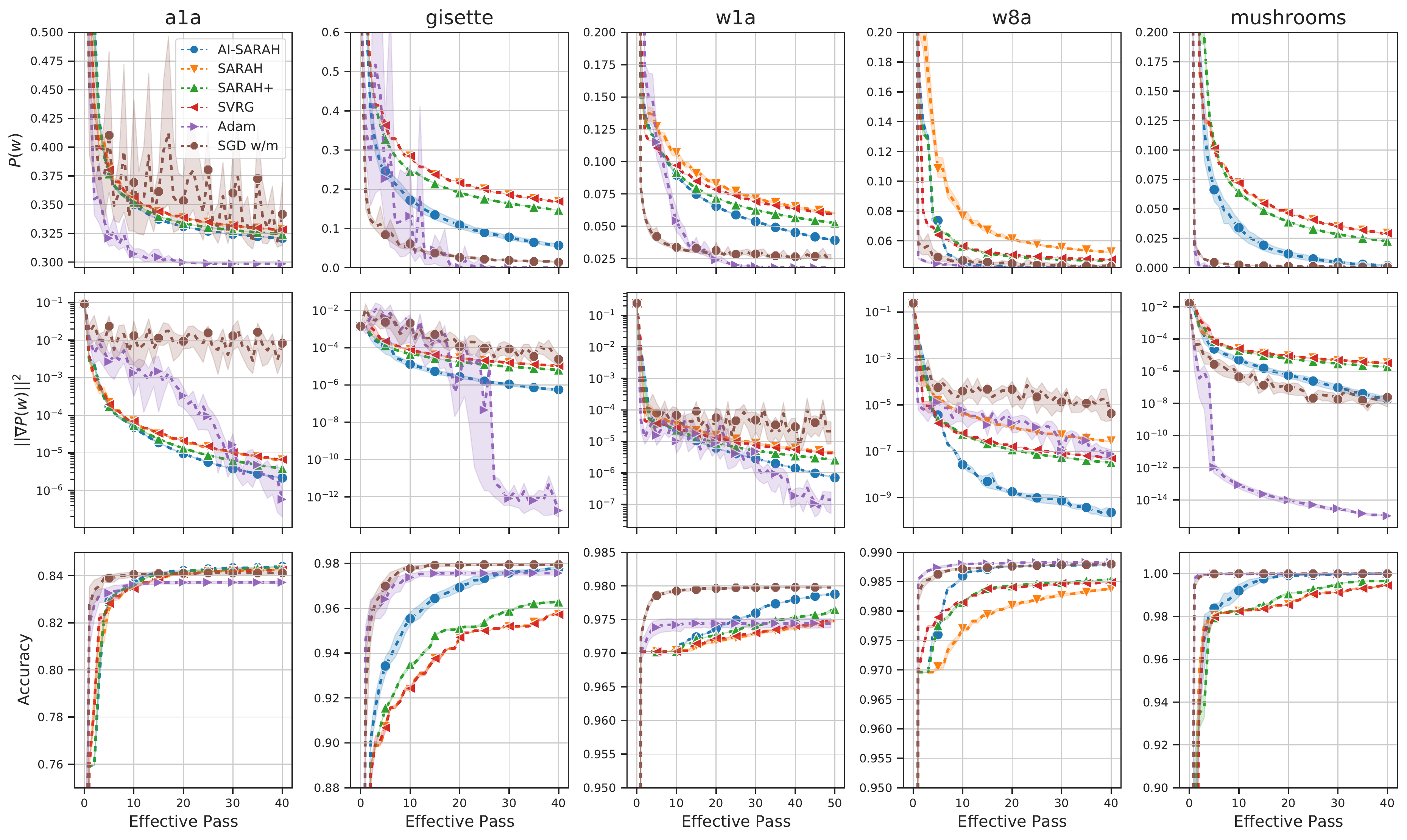}
    \caption{Non-regularized case: evolution of $P(w)$ (top row), $\|\nabla P(w)\|^2$ (middle row), and running maximum of testing accuracy (bottom row).}
    \label{fig:other_data_no_reg_app}
\end{figure*} 
\clearpage
\begin{figure*}[ht!]
    \centering
    \includegraphics[width=\textwidth]{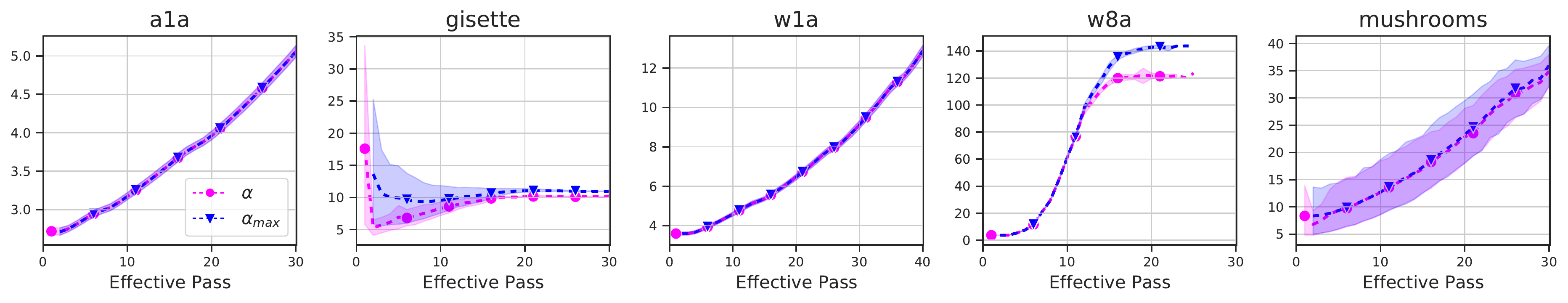}
    \caption{\reglize{} case: evolution of \textit{AI-SARAH}'s step-size $\alpha$ and upper-bound $\alpha_{max}$.}
    \label{fig:step_reg_app}
\vskip+20pt
    \centering
    \includegraphics[width=\textwidth]{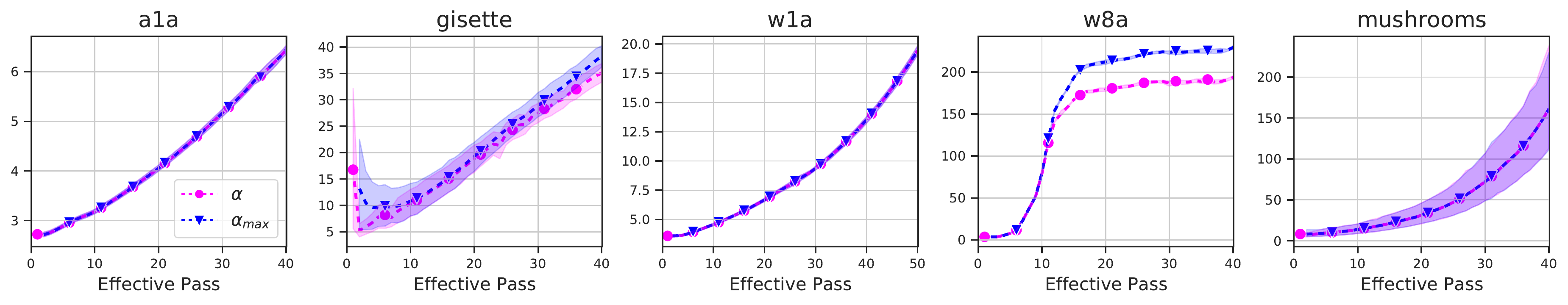}
    \caption{Non-regularized case: evolution of \textit{AI-SARAH}'s step-size $\alpha$ and upper-bound $\alpha_{max}$.}
    \label{fig:step_no_reg_app}
\end{figure*}

\clearpage
\section{Alternative Theoretical Analysis}
\label{app:alternative}
In this chapter, we provide an alternative theoretical framework, as mentioned in Chapter \ref{sec:theoretical-analysis} of the main paper to investigate how to leverage local Lipschitz smoothness. We present the alternative algorithm in Algorithm \ref{algo:aisarah-alternative}.

\begin{algorithm} 
\caption{Alternative Algorithm}
\begin{algorithmic}[1]
\STATE {\bfseries Parameters:} Inner loop size $m$.
\STATE {\bfseries Initialize:} $\tilde{w}_0$
\FOR{k = 1, 2, ...}
\STATE $w_0 = \tilde{w}_{k-1}$
\STATE $v_0 = \nabla P(w_0)$
\STATE Choose $\alpha_{\min}^k$, $\alpha_{\max}^k$ such that $0<\alpha_{\min}^k\leq\alpha_{\max}^k$.
\FOR{$t = 1,...,m$}
\STATE Select random mini-batch $S_t$ from $\n$ uniformly with $|S_t| = b$
\STATE $\alpha_{t-1} \approx \arg\min_{\alpha \in [\alpha_{\min}^k, \alpha_{\max}^k]} \xi_t(\alpha)$
\STATE $w_t = w_{t-1} - \alpha_{t-1} v_{t-1}$
\STATE $v_t = \nabla f_{S_t}(w_t) - \nabla f_{S_t}(w_{t-1}) + v_{t-1}$
\ENDFOR
\STATE Set $\tilde{w}_k = w_t$ with $t$ chosen uniformly at random from $\{0,1,...,m\}$\\
\ENDFOR
\end{algorithmic}
\label{algo:aisarah-alternative}
\end{algorithm} 



Like \textit{SVRG} and \textit{SARAH}, Algorithm \ref{algo:aisarah-alternative} adopts a loop structure, which is divided into the outer loop, where a full gradient is computed, and the inner loop, where only stochastic gradient is computed. However, unlike \textit{SVRG} and \textit{SARAH}, the step-size is computed implicitly. In particular, at each iteration $t \in [m]$ of the inner loop, the step-size is chosen by approximately solving a simple one-dimensional constrained optimization problem. 
Define the sub-problem (optimization problem) at $t \geq 1$ as
\begin{align}
\label{min_xi}
    \min_{\alpha \in [ \alpha^k_{\min},  \alpha^k_{\max}]} \xi_t(\alpha), 
\end{align}
where
$
 \xi_t(\alpha) :=  \|v_t\|^2 
    = \|\nabla f_{S_t}(w_{t-1} - \alpha v_{t-1}) - \nabla f_{S_t}(w_{t-1}) + v_{t-1}\|^2
$, $\alpha^k_{\min}$ and  $\alpha^k_{\max}$ are lower-bound and upper-bound of the step-size respectively. 
These bounds do not allow large fluctuations of the (adaptive) step-size.  We denote $\alpha_{t-1}$ the approximate solution of (\ref{min_xi}).
Now, let us present some remarks regarding Algorithm \ref{algo:aisarah-alternative}.
\begin{remark}
As we will explain with more details in the following sections, the values of $\alpha_{\min}^k$ and $\alpha_{\max}^k$ cannot be arbitrarily large. To guarantee convergence, we will need to assume that $\alpha_{\max}^k  \leq \frac{2}{L_k^{\max}}$, where  $L_k^{\max}= \max_{i \in [n]} L^i_{k}$. Here, $L^i_{k}$ is the local smoothness parameter of $f_i$ defined on a working-set for each outer loop (see Definition~\ref{def:WandL1}).
\end{remark}
\begin{remark}
\label{remarkSARAH}
\textit{SARAH} \cite{sarah17} can be seen as a special case of Algorithm \ref{algo:aisarah-alternative}, where $\alpha_{\min}^k=\alpha_{\max}^k=\alpha$ for all outer loops ($k \geq 1$). In other words, a constant step-size is chosen for the algorithm. However, if $\alpha_{\min}^k<\alpha_{\max}^k$, then the selection of the step-size in Algorithm \ref{algo:aisarah-alternative} allows a faster convergence of  $\|v_t\|^2$ than \textit{SARAH} in each inner loop.
\end{remark}

\begin{remark}
At $t \geq 1$, let us select a mini-batch of size $n$, i.e., $|S_t| = n$. In this case, Algorithm \ref{algo:aisarah-alternative} is equivalent to deterministic gradient descent with a very particular way of selecting the step-size, i.e. by solving the following problem $$ \min_{\alpha \in [\alpha_{\min}^k, \alpha_{\max}^k]}
 \xi_t(\alpha),$$
 where $\xi_t(\alpha)= \|\nabla P \left(w_{t-1} - \alpha \nabla P(w_{t-1})\right)\|^2$.
 In other words, the step-size is selected to  minimize the squared norm of the full gradient with respect to $w_t$.
\end{remark}


\subsection{Definitions / Assumptions}
First, we present the main definitions and assumptions that are used in our convergence analysis.
\begin{definition}
\label{smoothf}
Function $f: \cR^d \rightarrow \cR$ is $L$-smooth if:
$f(x)\leq f(y)+ \langle\nabla f(y), x-y\rangle +\frac{L}{2}\|x-y\|^2 , \forall x,y \in \cR^d$,\\
and it is $L_C$-smooth if:
$$f(x)\leq f(y)+ \langle\nabla f(y), x-y\rangle +\frac{L_C}{2}\|x-y\|^2 , \forall x,y \in \cC.$$
\end{definition}
\begin{definition}
Function $f: \cR^d \rightarrow \cR$ is $\mu$-strongly convex if:
$f(x)\geq f(y)+ \langle\nabla f(y), x-y\rangle +\frac{\mu}{2}\|x-y\|^2 , \forall x,y \in \cR^d.$ If $\mu=0$ then function $f$ is a (non-strongly) convex function.
\end{definition}
Having presented the two main definitions for the class of problems that we are interested in, let us now present the working-set  $\Lew_k$ which contains all iterates produced in the $k$-th outer loop of Algorithm~\ref{algo:aisarah-alternative}.
\begin{definition}[Working-Set $\Lew_k$]\label{def:WandL1}
For any outer loop $k \geq 1$ in Algorithm~\ref{algo:aisarah-alternative}, starting at $\tilde w_{k-1}$ we define
\begin{equation}
\Lew_k:=\{ w \in \cR^d \ | \ \|\tilde w_{k-1}  - w\| 
 \leq m \cdot \alpha_{\max}^k \|v_0\|\}.
\end{equation}
\end{definition}
Note that the working-set $\Lew_k$ can be seen as a ball of all vectors $w$'s, which are not further away from $\tilde w_{k-1}$ than $m \cdot \alpha_{\max}^k \|v_0\|$. Here, recall that $m$ is the total number of iterations of an inner loop, $\alpha_{\max}^k$ is an upper bound of the step-size $\alpha_{t-1}$, $\forall t \in[m]$, and $\|v_0\|$ is simply the norm of the full gradient evaluated at the starting point $\tilde w_{k-1}$ in the outer loop.
\\
By combining Definition~\ref{smoothf} with the working-set $\Lew_k$, we are now ready to provide the main assumption used in our analysis.
\begin{assumption}
\label{assumptionLik}
Functions $f_i$, $i \in \n$, of problem~(\ref{MainProb}) are $L^i_{\Lew_k}$-smooth. Since we only focus on the working-set $\Lew_k$, we simply write $L^i_{k}$-smooth.
\end{assumption}
Let us denote $L_i$ the smoothness parameter of function $f_i$, $i \in \n$, in the domain $\cR^d$. Then, it is easy to see that $L^i_{k}\leq L_i,\;\forall i \in \n$. In addition, under Assumption~\ref{assumptionLik}, it holds that function $P$ is $\bar{L}_k$-smooth in the working-set $\Lew_k$, where  $\bar{L}_k=\frac{1}{n}\sum_{i=1}^n L^i_{k}.$
\\
As we will explain with more details in the next section for our theoretical results, we will assume that $\alpha_{\max}^k  \leq \frac{2}{L_k^{\max}}$, where  $L_k^{\max}= \max_{i \in [n]} L^i_{k}$. 

\subsection{Convergence Guarantees}
Now, we can derive the convergence rate of Algorithm \ref{algo:aisarah-alternative}. Here, we highlight that, all of our theoretical results can be applied to \textit{SARAH}. We also note that, some quantities involved in our results, such as $L_k$ and $L_k^{\max}$, are dependent upon the working set $\Lew_k$ (defined for each outer loop $k \geq 1$). Similar to~\cite{sarah17}, we start by presenting two important lemmas, serving as the foundation of our theory. 

The first lemma provides an upper bound on the quantity $\sum_{t=0}^{m}\Exp[\| \nabla P(w_{t})\|^2]$. Note that it does not require any convexity assumption.
\begin{lemma}
\label{Lemma_BoundGradient}
Fix a outer loop $k \geq 1$ and consider Algorithm~\ref{algo:aisarah} with $\alpha_{\max}^k \leq 1/\bar{L}_k$. Under Assumption~\ref{assumptionLik},
$
 \sum_{t=0}^{m}\Exp[\| \nabla P(w_{t})\|^2]
    \leq    \frac{2}{\alpha_{\min}^k} \Exp[P(w_0)-P(w^*)]  + \frac{\alpha_{\max}^k}{\alpha_{\min}^k}  \sum_{t=0}^{m}\Exp[\| \nabla P(w_{t})-v_{t}\|^2]$.
\end{lemma}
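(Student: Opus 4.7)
The plan is to mimic the proof of Lemma~\ref{Lemma_BoundGradient_ls}, adapting it to account for the fact that \textit{AI-SARAH} picks step-sizes $\alpha_{t-1}$ that can vary in the interval $[\alpha_{\min}^k, \alpha_{\max}^k]$ rather than being tied pointwise to the local smoothness. The key idea is that, since we assume $\alpha_{\max}^k \leq 1/\bar L_k$ and all iterates in the $k$-th outer loop stay within the working-set $\Lew_k$ (on which $P$ is $\bar L_k$-smooth by Assumption~\ref{assumptionLik}), the smoothness descent inequality is available at every inner step.

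First I would write the descent inequality from $\bar L_k$-smoothness of $P$ on $\Lew_k$ combined with the update rule $w_t = w_{t-1} - \alpha_{t-1} v_{t-1}$:
\begin{equation*}
P(w_t) \leq P(w_{t-1}) - \alpha_{t-1}\langle \nabla P(w_{t-1}), v_{t-1}\rangle + \tfrac{\bar L_k}{2}\alpha_{t-1}^2 \|v_{t-1}\|^2.
\end{equation*}
Then, applying the polarization identity $\langle a,b\rangle = \tfrac12(\|a\|^2+\|b\|^2-\|a-b\|^2)$ with $a=\nabla P(w_{t-1})$ and $b=v_{t-1}$, I would rearrange to obtain
\begin{equation*}
\tfrac{\alpha_{t-1}}{2}\|\nabla P(w_{t-1})\|^2 \leq P(w_{t-1}) - P(w_t) + \tfrac{\alpha_{t-1}}{2}\|\nabla P(w_{t-1}) - v_{t-1}\|^2 - \tfrac{\alpha_{t-1}}{2}\bigl(1 - \bar L_k\alpha_{t-1}\bigr)\|v_{t-1}\|^2,
\end{equation*}
and then drop the last term since $\alpha_{t-1}\leq\alpha_{\max}^k\leq 1/\bar L_k$ makes it non-positive after the sign flip.

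Next, the crucial step that separates this from Lemma~\ref{Lemma_BoundGradient_ls} is the asymmetric bounding: on the left we replace $\alpha_{t-1}$ by its \emph{lower} bound $\alpha_{\min}^k$, and on the right (in front of the error term $\|\nabla P(w_{t-1})-v_{t-1}\|^2$) we replace it by its \emph{upper} bound $\alpha_{\max}^k$. This gives
\begin{equation*}
\tfrac{\alpha_{\min}^k}{2}\|\nabla P(w_{t-1})\|^2 \leq P(w_{t-1}) - P(w_t) + \tfrac{\alpha_{\max}^k}{2}\|\nabla P(w_{t-1}) - v_{t-1}\|^2.
\end{equation*}
Taking expectations, summing for $t=1,\dots,m+1$, telescoping the $P(w_{t-1})-P(w_t)$ terms, and using $P(w_{m+1})\geq P(w^*)$ to bound the telescoped sum by $\Exp[P(w_0)-P(w^*)]$, I would then divide through by $\alpha_{\min}^k/2$ and re-index the sum as $t=0,\dots,m$ to match the statement.

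The main (minor) obstacle is ensuring that the $\bar L_k$-smoothness is legitimately applied at every inner step, i.e.\ that both $w_{t-1}$ and $w_t$ lie in $\Lew_k$ for all $t\in[m]$. This is precisely the content of the argument reproduced in the response to Reviewer 4uKg earlier in the paper (the induction showing $\|v_t\|\leq\|v_0\|$ and $w_t\in\Lew_k$ for all $t$), so I would simply invoke it. Everything else is bookkeeping: the polarization identity, the sign of $1-\bar L_k\alpha_{t-1}$, the telescoping sum, and the final scalar division.
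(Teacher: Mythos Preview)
Your proposal is correct and follows essentially the same route as the paper's proof: smoothness descent inequality, polarization identity, asymmetric replacement of $\alpha_{t-1}$ by $\alpha_{\min}^k$ on the left and $\alpha_{\max}^k$ on the right, telescoping, and re-indexing. The only cosmetic difference is that you drop the nonpositive $-\tfrac{\alpha_{t-1}}{2}(1-\bar L_k\alpha_{t-1})\|v_{t-1}\|^2$ term \emph{before} doing the asymmetric bounding, whereas the paper carries it through the bounding step and drops it afterward; both orderings are valid and yield the same result.
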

The second lemma provides an informative bound on the quantity $\Exp[\|\nabla P(w_t) - v_t\|^2]$. Note that it requires convexity of component functions $f_i$, $i \in [n]$.
\begin{lemma}
\label{Lemma_boundonDifference2}
Fix a outer loop $k \geq 1$ and consider Algorithm~\ref{algo:aisarah} with $\alpha_{\max}^k < 2/ L_k^{\max}$.  Suppose $f_i$ is convex for all $i \in \n$. Then, under Assumption~\ref{assumptionLik}, for any $t\geq1$ :
\begin{align*}
    \Exp[\|\nabla P(w_t) - v_t\|^2] \leq \left(\tfrac{\alpha_{\max}^k L_k^{\max}}{2-\alpha_{\max}^k L_k^{\max}}\right) \Exp[\|v_{0}\|^2].
\end{align*}
\end{lemma}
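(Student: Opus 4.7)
The plan is to bound $\sum_{j=1}^t \Exp[\|v_j-v_{j-1}\|^2]$ by telescoping, and then invoke the SARAH-style variance identity
$\Exp[\|\nabla P(w_t)-v_t\|^2]=\sum_{j=1}^t\Exp[\|v_j-v_{j-1}\|^2]-\sum_{j=1}^t\Exp[\|\nabla P(w_j)-\nabla P(w_{j-1})\|^2]$
(the analogue for Algorithm~\ref{algo:aisarah} of Lemma~\ref{lemmaTechnicalLam_1_ls}, which drops the negative term) to obtain the stated bound. Because the smoothness of $f_i$ is only postulated on the working set $\Lew_k$, a prerequisite is to show all inner iterates remain in $\Lew_k$; I would do this by induction, using that $\|v_j\|\le\|v_{j-1}\|$ keeps $\|w_0-w_j\|\le\sum_{i<j}\alpha_{i}\|v_i\|\le m\,\alpha_{\max}^k\|v_0\|$, exactly as in the reviewer response already drafted in the manuscript.

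For the per-step inequality, I would expand
\begin{equation*}
\|v_j\|^2=\|v_{j-1}\|^2+\|v_j-v_{j-1}\|^2+2\bigl\langle v_j-v_{j-1},\,v_{j-1}\bigr\rangle,
\end{equation*}
then substitute $v_{j-1}=(w_{j-1}-w_j)/\alpha_{j-1}$ and $v_j-v_{j-1}=\nabla f_{i_j}(w_j)-\nabla f_{i_j}(w_{j-1})$ into the inner product, and apply the co-coercivity inequality for convex $L_k^{\max}$-smooth functions on $\Lew_k$,
\begin{equation*}
\bigl\langle\nabla f_{i_j}(w_j)-\nabla f_{i_j}(w_{j-1}),\,w_j-w_{j-1}\bigr\rangle\ \ge\ \tfrac{1}{L_k^{\max}}\|v_j-v_{j-1}\|^2.
\end{equation*}
This yields, after rearrangement,
\begin{equation*}
\|v_j-v_{j-1}\|^2\ \le\ \frac{\alpha_{j-1}L_k^{\max}}{2-\alpha_{j-1}L_k^{\max}}\bigl(\|v_{j-1}\|^2-\|v_j\|^2\bigr),
\end{equation*}
which in particular certifies the monotonicity $\|v_j\|\le\|v_{j-1}\|$ needed for the induction above.

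Finally, since the left side is non-negative each summand in the telescoping sum is non-negative, and the map $\alpha\mapsto\alpha L_k^{\max}/(2-\alpha L_k^{\max})$ is increasing on $[0,2/L_k^{\max})$, so replacing $\alpha_{j-1}$ by its upper bound $\alpha_{\max}^k$ gives a uniform coefficient and allows one to sum from $j=1$ to $t$ with cancellation to $\|v_0\|^2-\|v_t\|^2\le\|v_0\|^2$. Taking expectations and plugging the result into the variance identity yields the target inequality.

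The main obstacle I anticipate is the careful justification of the working-set induction, since the smoothness parameter $L_k^{\max}$ in the co-coercivity step is only valid once $w_{j-1},w_j\in\Lew_k$; the argument therefore has to be arranged so that the monotonicity of $\|v_j\|$ and the containment in $\Lew_k$ are established simultaneously by induction on $j$, rather than assumed a priori. Everything else is routine algebraic manipulation of the SARAH recursion plus convex-smooth co-coercivity.
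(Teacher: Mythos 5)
Your proof is correct and follows essentially the same route as the paper's: the same per-step expansion of $\|v_j\|^2$ combined with co-coercivity of convex $L_k^{\max}$-smooth functions on the working set, the same telescoping bound on $\sum_{j}\Exp[\|v_j-v_{j-1}\|^2]$ after uniformizing the coefficient to $\alpha_{\max}^k L_k^{\max}/(2-\alpha_{\max}^k L_k^{\max})$, and the same variance identity (Lemma~\ref{lemmaTechnicalLam}) to conclude. Your only addition is the explicit induction certifying that all inner iterates remain in $\Lew_k$ (via the pathwise monotonicity $\|v_j\|\le\|v_{j-1}\|$), which the paper's appendix proof leaves implicit but which matches the argument the authors supply in the reviewer-response material.
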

Equipped with the above lemmas, we can then present our main theorem and show the linear convergence of Algorithm~\ref{algo:aisarah-alternative} for solving strongly convex smooth problems.

\begin{theorem}\label{the:converge_1} Suppose that Assumption~\ref{assumptionLik} holds and $P$ is strongly convex with convex component functions $f_i$, $i \in \n$. 
Let us define $$\sigma_m^k=\tfrac{1}{\mu \alpha_{\min}^k (m+1)} + \tfrac{\alpha_{\max}^k}{\alpha_{\min}^k} \cdot \tfrac{\alpha_{\max}^k L_k^{\max}}{2-\alpha_{\max}^k L_k^{\max}},$$
and select $m$ and $\alpha_{\max}^k$ such that $\sigma_m^k<1$, $\forall k \geq1$. Then, Algorithm~\ref{algo:aisarah-alternative} converges as follows:
$$
 \Exp[\| \nabla P(\tilde{w}_{k})\|^2]
     \leq    \left(\textstyle{\prod}_{\ell=1}^k \sigma_m^\ell\right) \|\nabla P(\tilde{w}_{0})\|^2.
$$ 
\end{theorem}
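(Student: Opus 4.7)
The plan is to analyze a single outer loop $k$ first, turning the two main lemmas into a contraction on $\|\nabla P(\tilde w_{k-1})\|^2$, and then iterate over $k$. I would fix $k\geq 1$, use $w_0=\tilde w_{k-1}$ and $v_0=\nabla P(w_0)=\nabla P(\tilde w_{k-1})$, and interpret $\tilde w_k$ as being drawn uniformly from $\{w_t\}_{t=0}^m$ (consistent with the $q_t$--weighted scheme in the earlier line-segment theorem when $\alpha_{\min}^k$, $\alpha_{\max}^k$ are constants in a loop). This gives $\Exp[\|\nabla P(\tilde w_k)\|^2]=\frac{1}{m+1}\sum_{t=0}^m\Exp[\|\nabla P(w_t)\|^2]$.

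Next, I would divide the inequality of Lemma~\ref{Lemma_BoundGradient} by $(m+1)$ and bound its second term using Lemma~\ref{Lemma_boundonDifference2}. Noting that $\|\nabla P(w_0)-v_0\|^2=0$, summation of the bound from Lemma~\ref{Lemma_boundonDifference2} yields
\begin{equation*}
\sum_{t=0}^{m}\Exp[\|\nabla P(w_t)-v_t\|^2] \;\leq\; (m+1)\left(\tfrac{\alpha_{\max}^k L_k^{\max}}{2-\alpha_{\max}^k L_k^{\max}}\right)\Exp[\|v_0\|^2],
\end{equation*}
so that after division by $(m+1)$,
\begin{equation*}
\Exp[\|\nabla P(\tilde w_k)\|^2] \;\leq\; \tfrac{2}{\alpha_{\min}^k(m+1)}\Exp[P(\tilde w_{k-1})-P(w^*)] + \tfrac{\alpha_{\max}^k}{\alpha_{\min}^k}\cdot\tfrac{\alpha_{\max}^k L_k^{\max}}{2-\alpha_{\max}^k L_k^{\max}}\,\Exp[\|\nabla P(\tilde w_{k-1})\|^2].
\end{equation*}
Now I invoke $\mu$-strong convexity of $P$ in the Polyak--\L ojasiewicz form $P(w)-P(w^*)\leq \tfrac{1}{2\mu}\|\nabla P(w)\|^2$ applied at $w=\tilde w_{k-1}$. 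Substituting turns the right-hand side into $\sigma_m^k\cdot\Exp[\|\nabla P(\tilde w_{k-1})\|^2]$, exactly the definition of $\sigma_m^k$ in the theorem. Unrolling over $\ell=1,\dots,k$ gives the claimed product bound; the assumption $\sigma_m^\ell<1$ then guarantees linear contraction.

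The main obstacle, and the reason the two lemmas are valid in the first place, is justifying that every inner iterate $w_t$ stays inside the working-set $\Lew_k$, so that the local smoothness constants $L_k^i$ (and hence $L_k^{\max}$, $\bar L_k$) control the Hessians along the trajectory. The argument for this is exactly the inductive one sketched in the reviewer response in the excerpt: as long as $w_{j-1},w_j\in\Lew_k$, convexity and $\lip{L_k^{\max}}$-smoothness of $f_{i_j}$ on $\Lew_k$ together with $\alpha_{j-1}\leq \alpha_{\max}^k<2/L_k^{\max}$ yield $\|v_j\|\leq\|v_{j-1}\|$ deterministically, and an induction on $t$ combined with the triangle-inequality estimate $\|w_0-w_{t+1}\|\leq \alpha_{\max}^k\sum_{i=0}^{t}\|v_i\|\leq m\cdot\alpha_{\max}^k\|v_0\|$ shows $w_{t+1}\in\Lew_k$. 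This is the only nontrivial step; once it is in place, the rest of the argument is algebraic manipulation of the two lemmas and a standard PL application, and the outer-loop induction is immediate.
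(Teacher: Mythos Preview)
Your proposal is correct and follows essentially the same route as the paper: combine Lemma~\ref{Lemma_BoundGradient} with the summed version of Lemma~\ref{Lemma_boundonDifference2}, divide by $m+1$ using the uniform choice of $\tilde w_k$, apply the PL inequality from $\mu$-strong convexity, and unroll over $k$. The only cosmetic difference is that the paper sums Lemma~\ref{Lemma_boundonDifference2} over $t=1,\dots,m$ (using that the $t=0$ term vanishes) to get a factor $m$ rather than your $m+1$, and then bounds $m/(m+1)\leq 1$; this leads to the identical $\sigma_m^k$. Your inclusion of the inductive working-set containment argument is a genuine improvement over the paper's formal proof, which tacitly assumes $w_t\in\Lew_k$ and only supplies that justification in the separate reviewer-response paragraph.
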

As a corollary of our main theorem, it is easy to see that we can also obtain the convergence of \textit{SARAH}~\cite{sarah17}. Recall, from Remark~\ref{remarkSARAH}, that \textit{SARAH} can be seen as a special case of Algorithm \ref{algo:aisarah-alternative} if, for all outer loops, $\alpha_{\min}^k=\alpha_{\max}^k=\alpha$.
In this case, we can have
$$\sigma_m^k=\tfrac{1}{\mu \alpha (m+1)} + \tfrac{\alpha L_k^{\max}}{2-\alpha L_k^{\max}}.$$
If we further assume that all functions $f_i$, $i \in \n$, are $L$-smooth and do not take advantage of the local smoothness (in other words, do not use the working-set $\Lew_k$), then $L_k^{\max}=L$ for all $k \geq 1$.
Then, with these restrictions, we have
$$\sigma_m=\sigma_m^k=\tfrac{1}{\mu \alpha (m+1)} + \tfrac{\alpha L}{2-\alpha L}<1.$$
As a result, Theorem~\ref{the:converge_1} guarantees the following linear convergence:
$
 \Exp[\| \nabla P(\tilde{w}_{k})\|^2]
     \leq    (\sigma_m )^k \|\nabla P(\tilde{w}_{0})\|^2,$
which is exactly the convergence of classical \textit{SARAH} provided in~\cite{sarah17}.

\clearpage
\section{Technical Preliminaries \& Proofs of Main Results}
\label{AppendixProofs}
In this Chapter, we present technical details for the results in Chapter \ref{app:alternative}. Let us start by presenting some important technical lemmas that will be later used for our main proofs.
\subsection{Technical Preliminaries}

\begin{lemma}{\cite{nesterov2003introductory}} 
\label{Standard}
Suppose that function $f$ is convex and $L$-Smooth in $C \subseteq \R^n$. Then for any $w$, $w'$ $\in C$: 
\begin{equation}
\label{najkna}
     \left\langle\nabla f(w) - \nabla f(w'), (w - w') \right\rangle  \geq \frac{1}{L}  \|\nabla f(w) - \nabla f(w')\|^2.
\end{equation}
\end{lemma}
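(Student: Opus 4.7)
The plan is to prove the co-coercivity inequality via the standard two-auxiliary-function trick. I would introduce, for the pair $w,w' \in C$, the two auxiliary functions
\[
\phi(x) \defeq f(x) - \langle \nabla f(w'), x\rangle, \qquad \psi(x) \defeq f(x) - \langle \nabla f(w), x\rangle.
\]
Since linear terms preserve both convexity and the $L$-smoothness constant, each of $\phi$ and $\psi$ is convex and $L$-smooth on $C$. Moreover, $\nabla \phi(w') = 0$ and $\nabla \psi(w) = 0$, so (assuming $C$ is such that the one-step descent stays inside $C$, which is the usual caveat for this lemma—e.g., $C = \R^n$ or $C$ convex and containing the descent iterate) $w'$ is the global minimizer of $\phi$ and $w$ is the global minimizer of $\psi$.

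The key step is the standard descent-lemma argument applied at each minimizer. From $L$-smoothness of $\phi$, for any $x \in C$,
\[
\phi(x - \tfrac{1}{L}\nabla \phi(x)) \leq \phi(x) - \tfrac{1}{2L}\|\nabla \phi(x)\|^2.
\]
Taking $x = w$ and using that $w'$ minimizes $\phi$ yields
\[
\phi(w') \leq \phi(w) - \tfrac{1}{2L}\|\nabla \phi(w)\|^2 = \phi(w) - \tfrac{1}{2L}\|\nabla f(w) - \nabla f(w')\|^2.
\]
Analogously for $\psi$, taking $x = w'$ gives
\[
\psi(w) \leq \psi(w') - \tfrac{1}{2L}\|\nabla f(w) - \nabla f(w')\|^2.
\]

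Finally, I would unfold the definitions of $\phi$ and $\psi$, add the two inequalities, and cancel $f(w)+f(w')$ from both sides. The linear terms combine to $\langle \nabla f(w) - \nabla f(w'), w - w'\rangle$, yielding the claim. The only subtle obstacle is ensuring the descent-lemma evaluation point $x - \tfrac{1}{L}\nabla\phi(x)$ lies in $C$; this is automatic on $\R^n$ and in the working-set $\Lew_k$ used in the paper it holds provided the iterates used in the analysis remain in $\Lew_k$, which is exactly the setting under which Assumption~\ref{assumptionLik} is invoked elsewhere in the proofs.
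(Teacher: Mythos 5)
Your proof is correct and is precisely the classical auxiliary-function argument from Nesterov's book that the paper cites for this lemma (the paper itself gives no proof, only the reference). You also rightly flag the one genuine subtlety in the statement as written---that the descent-lemma trial point $x-\tfrac{1}{L}\nabla\phi(x)$ and the global-minimizer claim require $C$ to be large enough (e.g.\ $C=\R^n$, or the working set containing the relevant points), which is exactly the caveat under which the paper invokes inequality~\eqref{najkna} on $\Lew_k$.
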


\begin{lemma}
\label{TechLemma}
Let Assumption~\ref{assumptionLik} hold for all functions $f_i$ of problem~(\ref{MainProb}). That is, let us assume that function $f_i$ is $L^i_{k}$-smooth $\forall i \in [n]$. Then, function  $P(w) \defeq \frac{1}{n}\sum_{i=1}^n f_i(w)$ is $\bar{L}_k$-smooth, where  $\bar{L}_k=\frac{1}{n}\sum_{i=1}^n L^i_{k}.$
\end{lemma}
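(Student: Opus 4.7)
\medskip

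\noindent\textbf{Proof plan.} The plan is to prove the lemma by a direct averaging argument over the individual smoothness inequalities. Since Assumption~\ref{assumptionLik} provides us with $L^i_k$-smoothness of each $f_i$ on the working-set $\Lew_k$, the natural starting point is to write down, for every $i \in [n]$ and every pair $w, w' \in \Lew_k$, the upper quadratic model guaranteed by Definition~\ref{smoothf}:
\begin{equation*}
f_i(w) \leq f_i(w') + \langle \nabla f_i(w'), w - w' \rangle + \tfrac{L^i_k}{2}\|w - w'\|^2.
\end{equation*}

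Next, I would average these $n$ inequalities (all valid for the same pair $w, w'$). Since $P(w) = \tfrac{1}{n}\sum_{i=1}^n f_i(w)$ and $\nabla P(w') = \tfrac{1}{n}\sum_{i=1}^n \nabla f_i(w')$ by linearity of expectation and differentiation, the first two terms on the right-hand side collapse cleanly into $P(w')$ and $\langle \nabla P(w'), w - w'\rangle$, while the quadratic coefficients aggregate as $\tfrac{1}{n}\sum_i L^i_k = \bar{L}_k$. This yields
\begin{equation*}
P(w) \leq P(w') + \langle \nabla P(w'), w - w'\rangle + \tfrac{\bar{L}_k}{2}\|w - w'\|^2,
\end{equation*}
which is exactly the defining inequality of $\bar{L}_k$-smoothness of $P$ on $\Lew_k$ in the sense of Definition~\ref{smoothf}, completing the argument.

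There is essentially no obstacle here; the result is a routine consequence of the linearity of the gradient and the convexity of the quadratic majorant. The only thing worth being careful about is that the smoothness is asserted on the working-set $\Lew_k$ rather than on all of $\cR^d$, so I would briefly remark that the averaging step preserves the domain of validity: since each individual inequality holds for $w, w' \in \Lew_k$, so does the averaged inequality, and the conclusion is that $P$ is $\bar{L}_k$-smooth in the same $\Lew_k$-restricted sense used throughout the convergence analysis.
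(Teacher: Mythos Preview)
Your proposal is correct and follows essentially the same approach as the paper's own proof: write the $L^i_k$-smoothness inequality for each $f_i$ on $\Lew_k$, average over $i\in[n]$, and read off the $\bar L_k$-smoothness of $P$ on $\Lew_k$. There is nothing to add.
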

\begin{proof}
For each function $f_i$, we have by definition of $L^i_{k}$-local smoothness, 
$$f_i(x)\leq f_i(y)+ \langle\nabla f_i(y), x-y\rangle +\frac{L^i_{k}}{2}\|x-y\|^2 , \forall x,y \in \Lew_k.$$
Summing through all $i's$ and dividing by $n$, we get
$$P(x)\leq P(y)+ \langle\nabla P(y), x-y\rangle +\frac{\bar{L}_k}{2}\|x-y\|^2 , \forall x,y \in \Lew_k.$$
\end{proof}

The next Lemma was first proposed in~\cite{sarah17}. We add it here with its proof for completeness and will use it later for our main theoretical result.
\begin{lemma} \cite{sarah17}
\label{lemmaTechnicalLam}
Consider $v_t$ defined in (\ref{TheVt}). Then for any $t\geq1$ in Algorithm~\ref{algo:aisarah}, it holds that:
\begin{align}
\label{lakjsndal}
\Exp[\|\nabla P(w_t) - v_t\|^2] = \sum_{j=1}^t \Exp[\|v_j - v_{j-1}\|^2] - \sum_{j=1}^t\Exp[\|\nabla P(w_j) - \nabla P(w_{j-1})\|^2].
\end{align}
\end{lemma}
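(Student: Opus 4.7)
The plan is to use the standard martingale-difference decomposition that underlies the SARAH variance identity. Define, for each inner index $j \geq 1$,
\[
\xi_j \;\defeq\; \bigl(\nabla P(w_j) - \nabla P(w_{j-1})\bigr) - (v_j - v_{j-1}).
\]
Because $v_0 = \nabla P(w_0)$, telescoping gives $\nabla P(w_t) - v_t = \sum_{j=1}^t \xi_j$, so the claim reduces to computing $\Exp\bigl[\bigl\|\sum_{j=1}^t \xi_j\bigr\|^2\bigr]$.

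The first main step is to verify that $\{\xi_j\}$ is a martingale-difference sequence. Let $\gF_{j-1}$ be the $\sigma$-algebra generated by all random choices up to and including step $j-1$; then $w_{j-1}, v_{j-1}, w_j$ are all $\gF_{j-1}$-measurable (since $w_j = w_{j-1} - \alpha_{j-1} v_{j-1}$ and the step-size is determined from past information plus a further backward pass on $\xi_t$, not on fresh $i_j$). The only fresh randomness at step $j$ is the sample index $i_j$ (uniform on $[n]$, or the mini-batch $S_t$), and by the recursion $v_j - v_{j-1} = \nabla f_{i_j}(w_j) - \nabla f_{i_j}(w_{j-1})$, so $\Exp[v_j - v_{j-1} \mid \gF_{j-1}] = \nabla P(w_j) - \nabla P(w_{j-1})$, giving $\Exp[\xi_j \mid \gF_{j-1}] = 0$.

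The second main step is the orthogonality of the $\xi_j$'s. For $j < k$, conditioning on $\gF_{k-1}$ yields $\Exp[\langle \xi_j, \xi_k\rangle \mid \gF_{k-1}] = \langle \xi_j, \Exp[\xi_k \mid \gF_{k-1}]\rangle = 0$, so all cross terms vanish after taking total expectation and
\[
\Exp\bigl[\|\nabla P(w_t) - v_t\|^2\bigr] \;=\; \sum_{j=1}^t \Exp\bigl[\|\xi_j\|^2\bigr].
\]
Finally I expand each $\|\xi_j\|^2$ and take a conditional expectation given $\gF_{j-1}$; the cross term
\[
-2\,\bigl\langle \Exp[v_j - v_{j-1}\mid \gF_{j-1}],\, \nabla P(w_j) - \nabla P(w_{j-1})\bigr\rangle
\]
equals $-2\|\nabla P(w_j) - \nabla P(w_{j-1})\|^2$, which combined with the $\|\nabla P(w_j) - \nabla P(w_{j-1})\|^2$ term produces $\Exp[\|\xi_j\|^2] = \Exp[\|v_j - v_{j-1}\|^2] - \Exp[\|\nabla P(w_j) - \nabla P(w_{j-1})\|^2]$. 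Summing over $j = 1, \ldots, t$ gives exactly (\ref{lakjsndal}).

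The only mildly subtle point I would flag is the measurability discussion in step one: the \textit{AI-SARAH} step-size $\alpha_{j-1}$ is produced via an auxiliary Newton step on $\xi_t(\alpha)$, which is also a function of the current random draw, so one has to be careful about what is or is not in $\gF_{j-1}$. However, the lemma statement only requires the recursion (\ref{TheVt}) and uniform (mini-batch) sampling of $i_j$ independent of the past; the standard SARAH argument goes through unchanged provided $w_j, v_{j-1}$ are $\gF_{j-1}$-measurable, which is the case here. No convexity, smoothness, or step-size bound is needed — this is purely an algebraic/probabilistic identity.
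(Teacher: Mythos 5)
Your proposal is correct and is essentially the paper's own argument: the paper conditions on $w_0,\dots,w_j$ and $v_0,\dots,v_{j-1}$, expands $\|(\nabla P(w_{j-1})-v_{j-1}) + \xi_j\|^2$ one step at a time, kills the cross terms via $\Exp_j[v_j - v_{j-1}] = \nabla P(w_j)-\nabla P(w_{j-1})$, and telescopes, which is exactly your martingale-difference orthogonality written as a recursion instead of a direct sum decomposition. The measurability caveat you flag (that $w_j$ and $\alpha_{j-1}$ in \textit{AI-SARAH} depend on the fresh draw $S_j$) applies equally to the paper's proof, which silently conditions on $w_j$ while still averaging $i_j$ uniformly, so your version is no less rigorous than theirs.
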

\begin{proof}
Let $\Exp_j$ denote the expectation by conditioning on the information $w_0,w_1, \dots ,w_j $ as well as $v_0,v_1,\dots,v_{j-1}$. Then,
\begin{align*}
    \Exp_j[\|\nabla P(w_j) - v_j\|^2] &= 
    \Exp_j\left[\|\left(\nabla P(w_{j-1}) - v_{j-1}\right) 
    + \left(\nabla P(w_j) - \nabla P(w_{j-1})\right) 
    - (v_j - v_{j-1})\|^2\right]\\
    &=\Exp_j[\|\nabla P(w_{j-1}) - v_{j-1}\|^2]
    +\Exp_j[\|\nabla P(w_j) - \nabla P(w_{j-1})\|^2]
    + \Exp_j[\|v_j - v_{j-1}\|^2]\\
    &\quad + 2 \left(\nabla P(w_{j-1}) - v_{j-1}\right)^T\left(\nabla P(w_j)-\nabla P(w_{j-1})\right)\\
    &\quad - 2\left(\nabla P(w_{j-1}) - v_{j-1}\right)^T\Exp_j[v_j - v_{j-1}]\\
    &\quad - 2\left(\nabla P(w_j) - \nabla P(w_{j-1})\right)^T\Exp_j[v_j - v_{j-1}]\\
    &= \Exp_j[\|\nabla P(w_{j-1}) - v_{j-1}\|^2]
    - \Exp_j[\|\nabla P(w_j) - \nabla P(w_{j-1})\|^2]
    + \Exp_j[\|v_j - v_{j-1}\|^2],
\end{align*}
where the last equality follows from
$$\Exp_j[v_j - v_{j-1}]=\Exp_j[\nabla f_{i_j}(w_j) - \nabla  f_{i_j}(w_{j-1})]=\nabla P(w_j) - \nabla P(w_{j-1}).$$
By taking expectation in the above expression, using the tower property, and summing over $j=1,...,t$, we obtain
\begin{align*}
\Exp[\|\nabla P(w_t) - v_t\|^2] = \sum_{j=1}^t \Exp[\|v_j - v_{j-1}\|^2] - \sum_{j=1}^t\Exp[\|\nabla P(w_j) - \nabla P(w_{j-1})\|^2].
\end{align*}
\end{proof}

\subsection{Proofs of Lemmas and Theorems}
For simplicity of notation, we use $|S|=1$ in the following proofs, and a generalization to $|S|>1$ is straightforward. 
\subsubsection{Proof of Lemma~\ref{Lemma_BoundGradient}}
By Assumption~\ref{assumptionLik}, Lemma~\ref{TechLemma} and the update rule $w_t = w_{t-1} - \alpha_{t-1} v_{t-1}$ of Algorithm~\ref{algo:aisarah-alternative}, we obtain:
\begin{eqnarray*}
   P(w_{t}) &\leq & P(w_{t-1})- \alpha_{t-1} \langle \nabla P(w_{t-1}), v_{t-1} \rangle+ \frac{\bar{L}_k}{2} \alpha_{t-1}^2 \|v_{t-1}\|^2 \notag\\
   &= & P(w_{t-1})- \frac{\alpha_{t-1}}{2}\| \nabla P(w_{t-1})\|^2 + \frac{\alpha_{t-1}}{2}\| \nabla P(w_{t-1})-v_{t-1}\|^2 - \left(\frac{\alpha_{t-1}}{2} - \frac{\bar{L}_k}{2} \alpha_{t-1}^2 \right) \|v_{t-1}\|^2,
\end{eqnarray*}
where, in the equality above, we use the fact that $\langle a,b\rangle = \frac{1}{2}(\|a\|^2 + \|b\|^2 - \|a-b\|^2)$.

By rearranging and using the lower and upper bounds of the step-size $\alpha_{t-1}$ in the outer loop $k$ ($\alpha_{\min}^k\leq\alpha_{t-1}\leq\alpha_{\max}^k$), we get:
\begin{eqnarray*}
 \frac{\alpha_{\min}^k}{2} \| \nabla P(w_{t-1})\|^2 
   &\leq&  [P(w_{t-1})-P(w_{t}) ] + \frac{\alpha_{\max}^k}{2}\| \nabla P(w_{t-1})-v_{t-1}\|^2 - \frac{\alpha_{t-1} }{2} \left(1 - \bar{L}_k \alpha_{t-1} \right) \|v_{t-1}\|^2.
\end{eqnarray*}
By assuming that $\alpha_{\max}^k \leq \frac{1}{\bar{L}_k}$, it holds that $\alpha_{t-1} \leq \frac{1}{\bar{L}_k}$ and $ \left(1 - \bar{L}_k \alpha_{t-1} \right)\geq0$, $\forall t \in [m]$. Thus,
\begin{eqnarray*}
 \frac{\alpha_{\min}^k}{2} \| \nabla P(w_{t-1})\|^2 
   &\leq&  [P(w_{t-1})-P(w_{t}) ] + \frac{\alpha_{\max}^k}{2}\| \nabla P(w_{t-1})-v_{t-1}\|^2 - \frac{\alpha_{\min}^k}{2} \left(1 - \bar{L}_k \alpha_{\max}^k \right) \|v_{t-1}\|^2.
\end{eqnarray*}
By taking expectations and multiplying both sides with  $\frac{2}{\alpha_{\min}^k}:$
\begin{align*}
 \Exp[\| \nabla P(w_{t-1})\|^2]
   &\leq  \frac{2}{\alpha_{\min}^k} [\Exp[P(w_{t-1})]-\Exp[P(w_{t})] ] + \frac{\alpha_{\max}^k}{\alpha_{\min}^k} \Exp[\| \nabla P(w_{t-1})-v_{t-1}\|^2] -  \left(1 - \bar{L}_k \alpha_{\max}^k \right) \Exp[\|v_{t-1}\|^2]\\
   &\leq  \frac{2}{\alpha_{\min}^k} [\Exp[P(w_{t-1})]-\Exp[P(w_{t})] ] + \frac{\alpha_{\max}^k}{\alpha_{\min}^k}\Exp[\| \nabla P(w_{t-1})-v_{t-1}\|^2],
\end{align*}
where the last inequality holds as $\alpha^k_{\max} \leq \frac{1}{\bar{L}_k}$. Summing over $t=1,2,\dots, m+1$, we have
\begin{eqnarray*}
 \sum_{t=1}^{m+1}\Exp[\| \nabla P(w_{t-1})\|^2]
   &\leq&  \frac{2}{\alpha_{\min}^k}  \sum_{t=1}^{m+1} \Exp[P(w_{t-1})-P(w_{t})] + \frac{\alpha_{\max}^k}{\alpha_{\min}^k}  \sum_{t=1}^{m+1}\Exp[\| \nabla P(w_{t-1})-v_{t-1}\|^2]\notag\\
   &=&  \frac{2}{\alpha_{\min}^k} \Exp[P(w_{0})-P(w_{m+1})] + \frac{\alpha_{\max}^k}{\alpha_{\min}^k}  \sum_{t=1}^{m+1} \Exp[\| \nabla P(w_{t-1})-v_{t-1}\|^2\notag\\
    &\leq &  \frac{2}{\alpha_{\min}^k} \Exp[P(w_{0})-P(w_{*})] + \frac{\alpha_{\max}^k}{\alpha_{\min}^k}  \sum_{t=1}^{m+1}\Exp[\| \nabla P(w_{t-1})-v_{t-1}\|^2],
\end{eqnarray*}
where the last inequality holds since $w^*$ is the global minimizer of $P.$

The last expression can be equivalently written as:
\begin{eqnarray*}
 \sum_{t=0}^{m}\Exp[\| \nabla P(w_{t})\|^2]
    &\leq &  \frac{2}{\alpha_{\min}^k} \Exp[P(w_{0})-P(w_{*})] + \frac{\alpha_{\max}^k}{\alpha_{\min}^k}  \sum_{t=0}^{m}\Exp[\| \nabla P(w_{t})-v_{t}\|^2],
 \end{eqnarray*}
 which completes the proof.

\subsubsection{Proof of Lemma~\ref{Lemma_boundonDifference2}}
\begin{eqnarray*}
 \Exp_j\left[\|v_j\|^2\right] &\leq&   \Exp_j\left[\|v_{j-1} - \left(\nabla f_{i_j}(w_{j-1}) - \nabla f_{i_j}(w_j) \right)\|^2\right]\notag\\
   &=& \|v_{j-1}\|^2 +  \Exp_j\left[\|\nabla f_{i_j}(w_{j-1}) - \nabla f_{i_j}(w_j)\|^2\right] 
   \\&&\quad-  \Exp_j\left[\frac{2}{\alpha_{j-1}} \left\langle\nabla f_{i_t}(w_{j-1}) - \nabla f_{i_j}(w_j), w_{j-1} - w_j\right\rangle\right]\notag\\
   &\overset{(\ref{najkna})}{\leq} & \|v_{j-1}\|^2 +  \Exp_j\left[\|\nabla f_{i_j}(w_{j-1}) - \nabla f_{i_j}(w_j)\|^2\right] -  \Exp_j\left[\frac{2}{\alpha_{j-1} L^{i_j}_k} \|\nabla f_{i_j}(w_{j-1}) - \nabla f_{i_j}(w_j)\|^2\right].\notag\\
\end{eqnarray*}
For each outer loop $k$, it holds that $\alpha_{j-1}\leq \alpha_{\max}^k$ and  $L^{i}_k \leq L_k^{\max}$. Thus,
\begin{eqnarray*}
 \Exp_j[\|v_j\|^2] &\leq&  \|v_{j-1}\|^2 +  \Exp_j\left[\|\nabla f_{i_j}(w_{j-1}) - \nabla f_{i_j}(w_j)\|^2\right] - \frac{2}{\alpha_{\max}^k L_k^{\max}}  \Exp_j\left[\|\nabla f_{i_j}(w_{j-1}) - \nabla f_{i_j}(w_j)\|^2\right]\notag\\
&=&  \|v_{j-1}\|^2 + \left(1- \frac{2}{\alpha_{\max}^k L_k^{\max}}\right)  \Exp_j\left[\|\nabla f_{i_j}(w_{j-1}) - \nabla f_{i_j}(w_j)\|^2\right]\notag\\
&=&  \|v_{j-1}\|^2 + \left(1- \frac{2}{\alpha_{\max}^k L_k^{\max}}\right)  \Exp_j\left[\|v_j - v_{j-1}\|^2\right].\notag\\
\end{eqnarray*}

By rearranging, taking expectations again, and assuming that $\alpha_{\max}^k < 2/ L_k^{\max}$:
\begin{eqnarray*}
 \Exp[\|v_j - v_{j-1}\|^2] &\leq& \left(\frac{\alpha_{\max}^k L_k^{\max}}{2-\alpha_{\max}^k L_k^{\max}}\right) \left[\Exp[\|v_{j-1}\|^2] -  \Exp[\|v_j\|^2]\right].  \notag\\
\end{eqnarray*}

By summing the above inequality over $j=1,\dots, t$ $(t\geq1)$, we have:
\begin{eqnarray}
\label{ajskdnak}
 \sum_{j=1}^t  \Exp[\|v_j - v_{j-1}\|^2] &\leq& \left(\frac{\alpha_{\max}^k L_k^{\max}}{2-\alpha_{\max}^k L_k^{\max}}\right)  \sum_{j=1}^t\left[\|v_{j-1}\|^2 - \|v_j\|^2 \right]  \notag\\
 &\leq& \left(\frac{\alpha_{\max}^k L_k^{\max}}{2-\alpha_{\max}^k L_k^{\max}}\right) \left[\Exp[\|v_{0}\|^2] - \Exp[\|v_t\|^2]\right].
\end{eqnarray}

Now, by using Lemma~\ref{lemmaTechnicalLam}, we obtain:
\begin{eqnarray}
\Exp[\|\nabla P(w_t) - v_t\|^2] &\overset{(\ref{lakjsndal})}{\leq}& \sum_{j=1}^t \Exp\left[\|v_j - v_{j-1}\|^2\right]\notag\\& \overset{(\ref{ajskdnak})}{\leq} &\left(\frac{\alpha_{\max}^k L_k^{\max}}{2-\alpha_{\max}^k L_k^{\max}}\right) \left[\Exp[\|v_{0}\|^2] - \Exp[\|v_t\|^2] \right]\notag\\&\leq &\left(\frac{\alpha_{\max}^k L_k^{\max}}{2-\alpha_{\max}^k L_k^{\max}}\right) \Exp[\|v_{0}\|^2].
\end{eqnarray}

\subsubsection{Proof of Theorem~\ref{the:converge_1}}
\begin{proof}
Since $v_0=\nabla P(w_0)$ implies $\|\nabla P(w_0)-v_0 \|^2=0$, then by Lemma~\ref{Lemma_boundonDifference2}, we obtain:
\begin{equation}
\label{cnalkdnalkd}
\sum_{t=0}^m \Exp[\|\nabla P(w_t) - v_t\|^2]  \leq \left ( \frac{m \alpha_{\max}^k L_k^{\max}}{2-\alpha_{\max}^k L_k^{\max}}\right) \Exp[\|v_{0}\|^2].
\end{equation}
Combine this with Lemma~\ref{Lemma_BoundGradient}, we have that:
\begin{eqnarray}
\label{cnaoisal}
 \sum_{t=0}^{m}\Exp[\| \nabla P(w_{t})\|^2]
    &\leq &  \frac{2}{\alpha_{\min}^k} \Exp[P(w_{0})-P(w_{*})] + \frac{\alpha_{\max}^k}{\alpha_{\min}^k}  \sum_{t=0}^{m}\Exp[\| \nabla P(w_{t})-v_{t}\|^2]\notag\\
    &\overset{(\ref{cnalkdnalkd})}{\leq} &  \frac{2}{\alpha_{\min}^k} \Exp[P(w_{0})-P(w_{*})] + \frac{\alpha_{\max}^k}{\alpha_{\min}^k}  \left( \frac{m \alpha_{\max}^k L_k^{\max}}{2-\alpha_{\max}^k L_k^{\max}}\right) \Exp[\|v_{0}\|^2].
\end{eqnarray}
Since we are considering one outer iteration, with $k\geq1$, we have $v_0=\nabla P(w_0)=\nabla P(\tilde{w}_{k-1})$ and $\tilde{w}_{k}=w_t$, where $t$ is drawn uniformly at random from $\{0,1,\dots, m\}$. Therefore, the following holds,
\begin{eqnarray*}
 \Exp[\| \nabla P(\tilde{w}_{k})\|^2]&=&\frac{1}{m+1}\sum_{t=0}^{m}\Exp[\| \nabla P(w_{t})\|^2]\notag\\
    &\overset{(\ref{cnaoisal})}{\leq} &  \frac{2}{\alpha_{\min}^k (m+1)} \Exp[P(\tilde{w}_{k-1})-P(w_{*})] + \frac{\alpha_{\max}^k}{\alpha_{\min}^k}  \left( \frac{\alpha_{\max}^k L_k^{\max}}{2-\alpha_{\max}^k L_k^{\max}}\right) \Exp[\|\nabla P(\tilde{w}_{k-1})\|^2]\notag\\
    &\leq & \left(\frac{1}{\mu \alpha_{\min}^k (m+1)} + \frac{\alpha_{\max}^k}{\alpha_{\min}^k}  \left( \frac{\alpha_{\max}^k L_k^{\max}}{2-\alpha_{\max}^k L_k^{\max}}\right) \right)\Exp[\|\nabla P(\tilde{w}_{k-1})\|^2].\notag\\
\end{eqnarray*}
Let us use $\sigma_m^k=\frac{1}{\mu \alpha_{\min}^k (m+1)} + \frac{\alpha_{\max}^k}{\alpha_{\min}^k} \cdot \frac{\alpha_{\max}^k L_k^{\max}}{2-\alpha_{\max}^k L_k^{\max}}$,
then the above expression can be written as:
\begin{eqnarray*}
 \Exp[\| \nabla P(\tilde{w}_{k})\|^2]
    &\leq & \sigma_m^k \Exp[\|\nabla P(\tilde{w}_{k-1})\|^2].\notag\\
\end{eqnarray*}
By expanding the recurrence, we obtain:
\begin{eqnarray*}
 \Exp[\| \nabla P(\tilde{w}_{k})\|^2]
    &\leq &  \left(\prod_{\ell=1}^k \sigma_m^\ell\right) \|\nabla P(\tilde{w}_{0})\|^2.\notag\\
\end{eqnarray*}
This completes the proof.
\end{proof}

\subsection{On working set and iterates}
In Chapter \ref{app:alternative}, we propose the definition of working set
$$\mathcal W_k := \{w \in \mathcal R^d \; | \; \|\tilde w_{k-1} - w\| \leq m \cdot \alpha_{max}^k \|v_0\|\}.$$ 
We claim that all the iterates of the $k$th outer loop lie within the set.
\begin{proof}
$\textbf{(1)}$. First of all, we can show that, as long as $w_{t-1}, w_t \in \mathcal W_k$, $\|v_t\| \leq \|v_{t-1}\|$ deterministically.  By definition of $v_t$ and the assumption of the main theorem that $\alpha_{t-1} \leq \alpha_{max}^k \leq \frac{2}{L_{max}^k}$. Given $w_t, w_{t-1} \in \mathcal W_k$,
\begin{align*}
||v_t||^2 &= || v_{t-1} - (\nabla f_{i_t} (w_{t-1}) - \nabla f_{i_t} (w_t))||^2 \\
&= || v_{t-1} ||^2 + || \nabla f_{i_t} (w_{t-1}) - \nabla f_{i_t} (w_t) ||^2 - \frac{2}{\alpha_{t-1}} \langle \nabla f_{i_t}(w_{t-1}) - \nabla f_{i_t}(w_t), w_{t-1} - w_t \rangle\\
&\leq || v_{t-1} ||^2 + || \nabla f_{i_t} (w_{t-1}) - \nabla f_{i_t} (w_t) ||^2 - L_{max}^k \langle \nabla f_{i_t}(w_{t-1}) - \nabla f_{i_t}(w_t), w_{t-1} - w_t \rangle\\
& \leq || v_{t-1} ||^2. 
\end{align*}
The last inequality holds as $f_{i_t}$ is convex and smooth with parameter $L_{max}^k$ on $\mathcal W_k$.

$\textbf{(2)}$. Now, we can show the main results and note that $\tilde w_{k-1} = w_0$.

By induction, at $j=1$, $\|w_0 - w_1\| = \alpha_0 \|v_0\| \leq m \cdot \alpha_{max}^k \|v_0\|$, and thus $w_1 \in \mathcal W_k$. By $\textbf{(1)}$, $\|v_1\| \leq \|v_0\|$.

Assume for $1<j\leq t$, $w_j \in \mathcal W_k$, then by $\textbf{(1)}$, $\|v_j\| \leq \|v_{j-1}\|$ and thus $\|v_j\| \leq \|v_0\|$. 

Then, at $j=t+1$, we have
$\|w_0 - w_{t+1}\| = \|\sum_{i=1}^{t+1} \alpha_{i-1} v_{i-1}\| \leq \alpha_{max}^k \sum_{i=1}^{t+1}\|v_{i-1}\| \leq m \cdot \alpha_{max}^k \|v_0\| \Rightarrow w_{t+1} \in \mathcal W_k \text{ and } \|v_{t+1}\| \leq \|v_0\|.$

Therefore, by $\textbf{(1)}$ and $\textbf{(2)}$, we show the desired results.
\end{proof}

\end{document}